%% file: neurips_2026.tex
\documentclass{article}

\PassOptionsToPackage{numbers, compress}{natbib}
\usepackage[main, final]{neurips_2026} 

\usepackage[utf8]{inputenc} 
\usepackage[T1]{fontenc}    
\usepackage{hyperref}       
\usepackage{url}            
\usepackage{booktabs}       
\usepackage{amsfonts}       
\usepackage{nicefrac}       
\usepackage{microtype}      
\usepackage{multirow}
\usepackage[table]{xcolor}
\usepackage{colortbl}
\usepackage{pifont}
\usepackage{amsmath, amssymb, amsthm, mathtools, bm}
\definecolor{LightGray}{gray}{0.93}
\definecolor{LightBlue}{RGB}{228,238,250}
\usepackage{graphicx}
\newcommand{\best}[1]{\textbf{#1}}
\newcommand{\second}[1]{\underline{#1}}
\usepackage{enumitem}
\usepackage{wrapfig}
\usepackage{subcaption}

\newtheorem{proposition}{Proposition}
\newtheorem{lemma}{Lemma}

\usepackage{titletoc}
\newcommand{\Cd}{\mathcal{C}_d}
\newcommand{\LE}{\mathrm{LE}}
\newcommand{\val}{\mathrm{val}}

\newcommand{\diagop}{\mathrm{diag}}

\newcommand{\ISR}{\mathrm{ISR}}
\newcommand{\MAE}{\mathrm{MAE}}
\newcommand{\Corr}{\mathrm{Corr}}

\newcommand{\Prob}{\mathbb{P}}
\newcommand{\E}{\mathbb{E}}
\newcommand{\R}{\mathbb{R}}

\newcommand{\cmark}{\textcolor{green!60!black}{\ding{51}}}
\newcommand{\xmark}{\textcolor{red!70!black}{\ding{55}}}

\title{MAGNET: Manifold-Aware Graph Diffusion Network for Connectome Generation}

\author{%
\makebox[\textwidth][c]{%
Protyay Dey$^{1}$\thanks{Equal contribution.} \ 
\thanks{Corresponding authors:
\texttt{protyayd@buffalo.edu} and
\texttt{vishnulo@buffalo.edu}.}%
\quad
Ayush Roy$^{1}$\footnotemark[1]
\quad
Hyuna Cho$^{2}$
\quad
Won Hwa Kim$^{3}$
\quad
Vishnu Suresh Lokhande$^{1}$
}\\[6pt]
\makebox[\textwidth][c]{%
\small
$^{1}$University at Buffalo, SUNY, USA
\quad
$^{2}$National Institutes of Health, USA
\quad
$^{3}$POSTECH, South Korea
}
}

\begin{document}

\maketitle

\begin{abstract}

Functional brain connectome represents neural connectivity as a matrix of pairwise interactions between brain regions. Generation of functional connectomes is not only a question of validity; rather, having satisfied the constraints on the correlation matrix, the next step is to recover the class-conditional geometry buried under coarse labels. We propose MAGNET, a Manifold-Aware Graph Diffusion Network which uses a normalized-Cholesky representation of the manifold of correlation matrices that guarantees validity. MAGNET lifts noisy latent states into ROI-level region tokens and performs denoising with a relational inductive bias over brain atlas regions. To deal with structural problems induced by coarse labels, MAGNET employs class-anchored conditioning, amortized structural bridge, and relevance-preserving corruption. Across ABIDE, ADNI, and OASIS-3, MAGNET consistently achieves favorable results compared to previous manifold-aware approaches, demonstrating improvements of $7-21\%$ in class-conditional fidelity ($\alpha,\beta$-F1) across three cohorts and better sampling efficiency. Moreover, while training only with strict binary labels, MAGNET is capable of maintaining clinical heterogeneity through fine substructure of the connectomes in a zero-shot setting, improving subclass covariance alignment ($\lambda$-MSE) by over 30\%. These results suggest that geometric validity is a necessary but insufficient condition for clinical utility in connectome synthesis. Moreover, efforts in making the diffusion denoising class-conditional manifold aware finds utility beyond the highly curved brain connectome generation as this is a critical problem in various general settings. The code repository can be found at \url{https://github.com/protyayofficial/MAGNET}.

\end{abstract}

\vspace{-5pt}
\section{Introduction}\label{sec:intro}
\vspace{-5pt}

A brain connectome is a comprehensive map of neural connectivity~\cite{sporns2005human}, formalized as a relational object where each entry encodes interactions between pairs of distinct brain regions. Resting-state functional connectomes play a central role in biomarker discovery, cohort analysis, and personalized prognosis in modern neuroimaging~\cite{smith2013functional, castellanos2013clinical}. Conditional generative model for connectomes enables key applications in data-scarce settings, including augmentation~\cite{dadi2019benchmarking, ahn2025spectral}, synthesis of underrepresented populations~\cite{navidi2025adaptive}, and construction of synthetic cohorts~\cite{varol2017hydra}. Related conditional diffusion approaches have also been
explored for longitudinal neurodegenerative data
synthesis~\cite{cho2025conditional}. Such a model must satisfy two criterias: (i) samples must be valid correlation matrices (symmetric, positive definite, with unit diagonals)~\cite{pennec2006riemannian, archakov2021new}; and (ii) must lie within the valid class-conditional regions of the connectome manifold, ensuring both geometric validity and clinical consistency~\cite{benny2021evaluation, regenwetter2023beyond,collas2025riemannian}. While manifold-aware generative models address validity, accurately recovering class-conditional support remains a key challenge.

Riemannian score models approximate distributions with manifold support~\cite{de2022riemannian}, while Riemannian flow matching and pullback-based pipelines enable efficient transport without leaving the manifold~\cite{huang2022riemannian,collas2025riemannian}. Recent work on manifold-preserving diffusion, guidance, and Lie group methods further emphasizes that geometry is intrinsic to the learning problem~\cite{he2023manifold,chung2024cfg++,bertolini2025diffusion,he2025position}. For connectomes, the normalized-Cholesky pullback~\cite{archakov2021new,collas2025riemannian} guarantees validity by construction, ensuring every decoded sample is a proper correlation matrix. As illustrated in Fig.~\ref{fig:intro_1d}, valid structures are clinically meaningful, and the primary manifold direction smoothly \textit{stratifies} diagnostic groups across cohorts.

\begin{wrapfigure}{r}{0.53\linewidth}
    \centering
    \vspace{-1\baselineskip}
    \includegraphics[width=\linewidth]{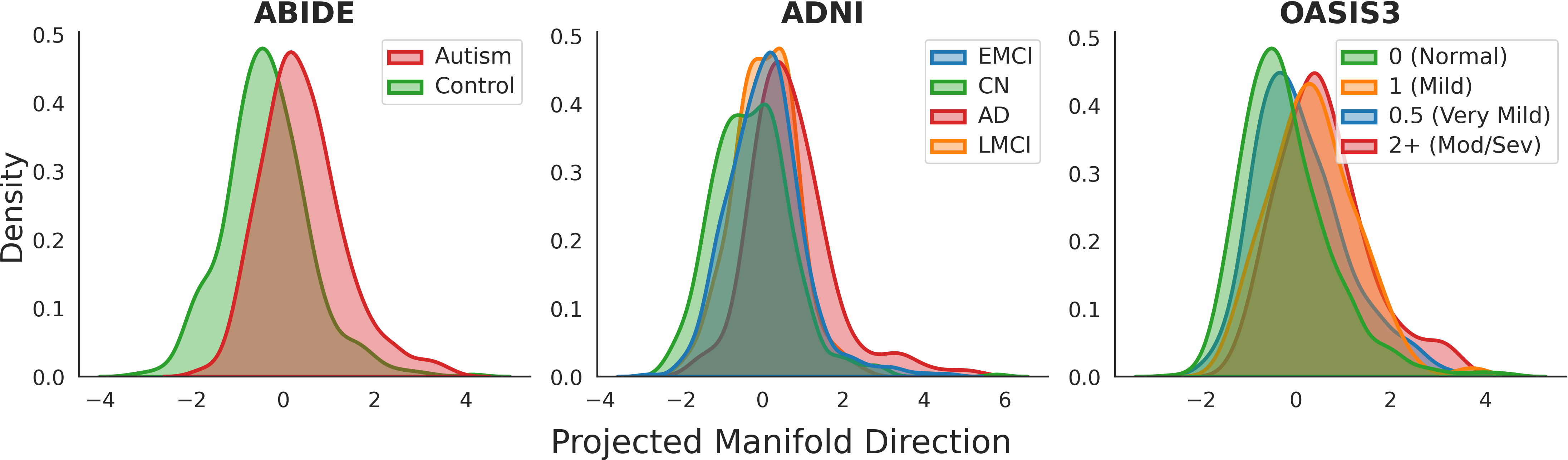}
    \vspace{-1.5em}
    \caption{\footnotesize \textbf{1D manifold projection.} Density estimates along the primary manifold direction across ABIDE, ADNI, and OASIS-3. A consistent diagnostic shift along the dominant geometric axis confirms that valid manifold structure carries clinical signal. Healthy controls are anchored in green.}
    \label{fig:intro_1d}
    \vspace{-1.5em}
\end{wrapfigure}
In neuroimaging, binary labels often collapse structurally diverse subpopulations into a single conditioning signal~\cite{varol2017hydra,wolfers2018mapping,wen2024dimensional}. Prior brain network classification
work has highlighted the importance of modeling
progressive diagnostic relationships and heterogeneous
graph structure~\cite{cho2024neurodegenerative}. This challenge is further exacerbated by inter-cohort distribution shifts and covariate imbalances, which can obscure clinically meaningful heterogeneity in neuroimaging datasets \cite{chytas2024pooling}. Fig.~\ref{fig:intro_2d} shows \emph{class-conditional distributions are not centered around a single prototype but instead span directions orthogonal to the main diagnostic axis, reflecting substantial variability that binary labels cannot capture.} Conditioning a generator on such coarse labels (e.g., labeling a scan simply as ``Alzheimer's'' without accounting for different subtypes or disease stages) forces the model to blur these distinct subgroups together into an unrealistic average~\cite{goodfellow2016nips}. Moreover, even when the generator produces samples in the correct geometric region, it may still fail to capture the underlying structure~\cite{benny2021evaluation,regenwetter2023beyond}. Thus, the challenge of support recovery, i.e., \emph{correctly assigning probability mass to the appropriate class-conditional regions on the manifold of valid connectomes}, remains unresolved (cannot be assessed by geometric validity alone).  

\begin{wrapfigure}{r}{0.55\linewidth}
    \centering
    \vspace{-1\baselineskip}
    \includegraphics[width=\linewidth]{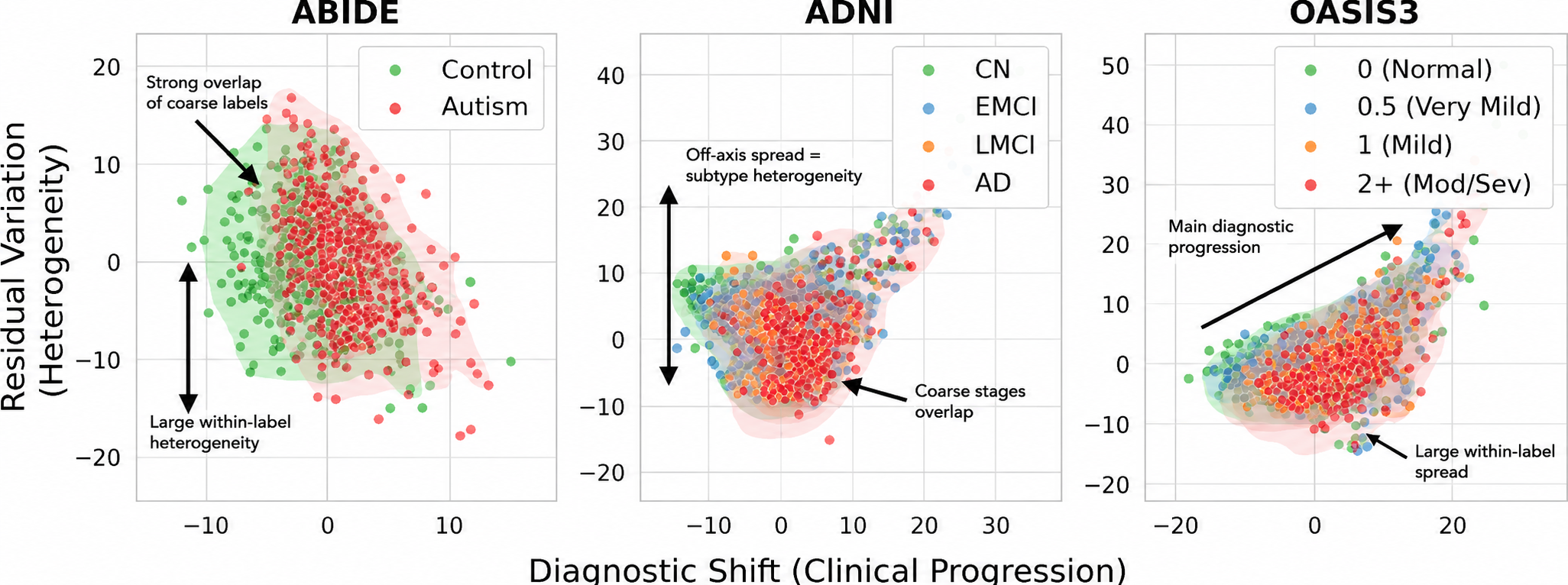}
    \vspace{-1.5em}
    \caption{\footnotesize \textbf{Latent structural heterogeneity.} 2D projections of connectomes across ABIDE, ADNI, and OASIS-3. \emph{Key observation:} The extensive vertical spread (residual variation) and severe overlap within individual clinical labels demonstrate that coarse categories collapse diverse subpopulations. This off-axis variation proves classes lack a single prototype, visually confirming the core challenge of conditional support recovery.}
    \label{fig:intro_2d}
    \vspace{-1em}
\end{wrapfigure}

\textbf{Core challenges in conditional support recovery.} While a valid chart is necessary, conditional connectome synthesis faces three inherent challenges. The \textit{label challenge} arises because coarse binary labels collapse structurally heterogeneous subpopulations, forcing feature blending~\cite{varol2017hydra, wolfers2018mapping, wen2024dimensional, japkowicz2002class, buda2018systematic, kang2019decoupling, byrd2019effect}. The \textit{global-context challenge} occurs because local noisy states lack sufficient information to recover global structure under limited reverse steps, while standard classifier-free guidance can overspecialize and reduce coverage~\cite{lipman2022flow, khan2026isokinetic, chung2024cfg++, ho2022classifier, kynkaanniemi2024applying}. The \textit{coordinate challenge} stems from isotropic corruption in a nonlinear chart, where equal-variance noise distorts informative coordinates and practically degrades class-conditional structure, making recovery increasingly difficult under constrained few-step sampling regimes~\cite{huang2022riemannian, bertolini2025diffusion, park2025temporal, su2025diffusion, wu2025optimal, song2020denoising}. 

All three challenges are compounded by a structural mismatch inherent to the chart: connectomes are relational objects~\cite{battaglia2018relational}, but the normalized-Cholesky parameterization imposes a strict ordering on brain regions, making latent coordinates explicitly not permutation-equivariant~\cite{zaheer2017deep, maron2018invariant}. Graph-based inductive biases are therefore motivated not by permutation invariance~\cite{bronstein2021geometric} but by the structured dependencies among ordered Cholesky coefficients. Each $z_{ij}$ encodes the orthogonalized partial correlation of region $i$ with respect to the preceding $j-1$ regions in the atlas ordering. Moreover, ROI-level attention with learned anatomical positional embeddings~\cite{he2025position, gao2025toward} consistently outperforms flat denoisers that ignore this relational structure~\cite{kan2022brain, bessadok2022graph, cui2022braingb, bronstein2017geometric}. Together, these shift the focus beyond geometric feasibility: \textit{the core difficulty is conditional support recovery}.

Keeping the aforementioned points in mind, we introduce \textbf{MAGNET}, a \textbf{M}anifold-\textbf{A}ware \textbf{G}raph \textbf{Net}work diffusion framework operating in a normalized-Cholesky chart~\cite{archakov2021new, collas2025riemannian}, where denoising is performed over region tokens rather than flattened coordinates. The central insight is that geometric validity is a necessary but insufficient condition and the binding constraint is conditional support recovery. It requires jointly resolving label ambiguity, missing global context, and coordinate corruption within the valid manifold. MAGNET addresses this through three coupled mechanisms: 

\begin{itemize}[leftmargin=*, nosep]
    \item We address the \emph{label challenge} via \textbf{class-anchored conditioning}: each class is modeled as a Gaussian mixture~\cite{mclachlan2019finite,fraley2002model,dempster1977maximum}, assigning subtype tokens to explicitly factor latent structural heterogeneity before conditioning reaches the denoiser~\cite{varol2017hydra,wolfers2018mapping,dhariwal2021diffusion,zhao2021improved}.
    
    \item For the \emph{global-context challenge}, we introduce an \textbf{amortized structural bridge}, a lightweight predictor that provides subtype-specific structural summaries derived from correlation statistics at each reverse step. This compensates for the inability of local noisy states to encode global connectivity under few-step regimes, while avoiding the diversity collapse typically induced by classifier-free guidance in limited-step inference~\cite{ho2022classifier,chung2024cfg++,kynkaanniemi2024applying,sadat2023cads}.
    
    \item Finally, the \emph{coordinate challenge} is handled via \textbf{relevance-preserving corruption}, a condition-adaptive forward process that uses the conditional–unconditional denoising gap to identify informative chart coordinates and reduce their corruption variance early. Unlike auxiliary-prior or conditioning-annealing approaches~\cite{su2025diffusion,sadat2023cads}, we directly modulate corruption covariance, ensuring class-relevant structure survives into the reverse trajectory where it is most recoverable~\cite{huang2022riemannian,park2025temporal}.
\end{itemize}

Sec.~\ref{sec:methodology} formalizes generation as a support-recovery problem, decoupling geometric validity from clinical fidelity. We establish theoretical guarantees and provide ablations (App.~\ref{app:frameworkanalysis}) validating our class-anchored conditioning, amortized bridge, and relevance-preserving corruption. Empirically (see Sec.~\ref{sec:exp_results}), MAGNET demonstrates robust conditional fidelity across ABIDE, ADNI, and OASIS-3, achieving superior zero-shot subclass recovery alongside accelerated sampling times.

\section{Methodology}
\label{sec:methodology}

We construct MAGNET using four coupled components, targeting the three challenges identified in Sec.~\ref{sec:intro} - label ambiguity from coarse supervision, missing global context under few-step inference, and coordinate-space corruption in the nonlinear chart. Sec.~\ref{subsec:post_validity}--\ref{subsec:graph_denoiser} establish the validity-preserving chart and ROI-level graph denoiser; Sec.~\ref{subsec:subtypes}--\ref{subsec:adaptive_corruption} introduce the three controllers that address label ambiguity, missing global context, and coordinate-space corruption respectively. Sec.~\ref{subsec:zero_shot} evaluates zero-shot subclass preservation. All proofs, ablations, and geometric analyses are in App.~\ref{app:proofs}--\ref{app:reversetime}.

\vspace{-1em}
\subsection{From validity to support recovery}
\label{subsec:post_validity}
\vspace{-0.5em}

Let $\mathcal{C}_d = \left\{C \in \mathbb{R}^{d \times d} : C = C^\top,\; C \succ 0,\; \mathrm{diag}(C) = \mathbf{1}\right\}$ denote the manifold of valid correlation connectomes~\cite{pennec2006riemannian}. We study conditional generation $C \sim p_\theta(C \mid y)$, where $y$ is a coarse label. Once validity is ensured, the central challenge shifts to \emph{support recovery}: generating samples from the correct class-conditional regions of the connectome manifold (Figure~\ref{fig:intro_2d}). This is difficult under coarse labels, which collapses structurally heterogeneous subpopulations into a single label~\cite{varol2017hydra,wolfers2018mapping,wen2024dimensional}. As this issue is orthogonal to geometric validity, validity-preserving transport alone is insufficient~\cite{collas2025riemannian}. Consequently, models may produce geometrically valid but clinically implausible samples by averaging across distinct modes~\cite{goodfellow2016nips,benny2021evaluation}, reducing diversity needed for downstream tasks~\cite{navidi2025adaptive,wolfers2018mapping}. MAGNET addresses this via three components: (i) a validity-preserving latent chart, (ii) an ROI-level graph denoiser, and (iii) controllers targeting coarse-label ambiguity, missing global context, and early corruption of condition-relevant coordinates.

\textbf{Geometry-preserving latent chart.} Instead of diffusing directly on $C$, we operate in the normalized-Cholesky chart~\cite{collas2025riemannian}, which bijectively parameterizes the free degrees of freedom of correlation matrices and guarantees valid outputs upon inversion~\cite{archakov2021new}. Let $C=LL^\top$ with $L$ lower triangular and positive diagonal, and define $\widetilde L=\mathrm{diag}(L)^{-1}L$. The chart coordinate is $z=\chi(C):=\operatorname{vech}_0(\widetilde L)\in\mathbb{R}^m$, where $m=\frac{d(d-1)}{2}$. Diffusion is performed on standardized coordinates $x=(z-\mu_z)/\sigma_z$, with statistics computed from training data~\cite{archakov2021new,pinkney2024short,collas2025riemannian}. Decoding reverses this process: we reconstruct $\widehat L$, form $G=\widehat L\widehat L^\top$, and renormalize to obtain $\widehat C=\mathrm{diag}(G)^{-1/2}\,G\,\mathrm{diag}(G)^{-1/2}$. By construction, $\widehat C$ is symmetric, positive definite, and unit-diagonal~\cite{archakov2021new,collas2025riemannian}, so validity is enforced intrinsically rather than via post-hoc projection. We reserve $\chi$ for this chart, while structural feature extractors used in the bridge and spectral loss are denoted separately by $\varphi(\cdot)$. 

\subsection{ROI-level graph denoising in the validity-preserving chart}
\label{subsec:graph_denoiser}
\vspace{-0.5em}

\begin{figure*}[!t]
    \centering
    \includegraphics[width=\linewidth]{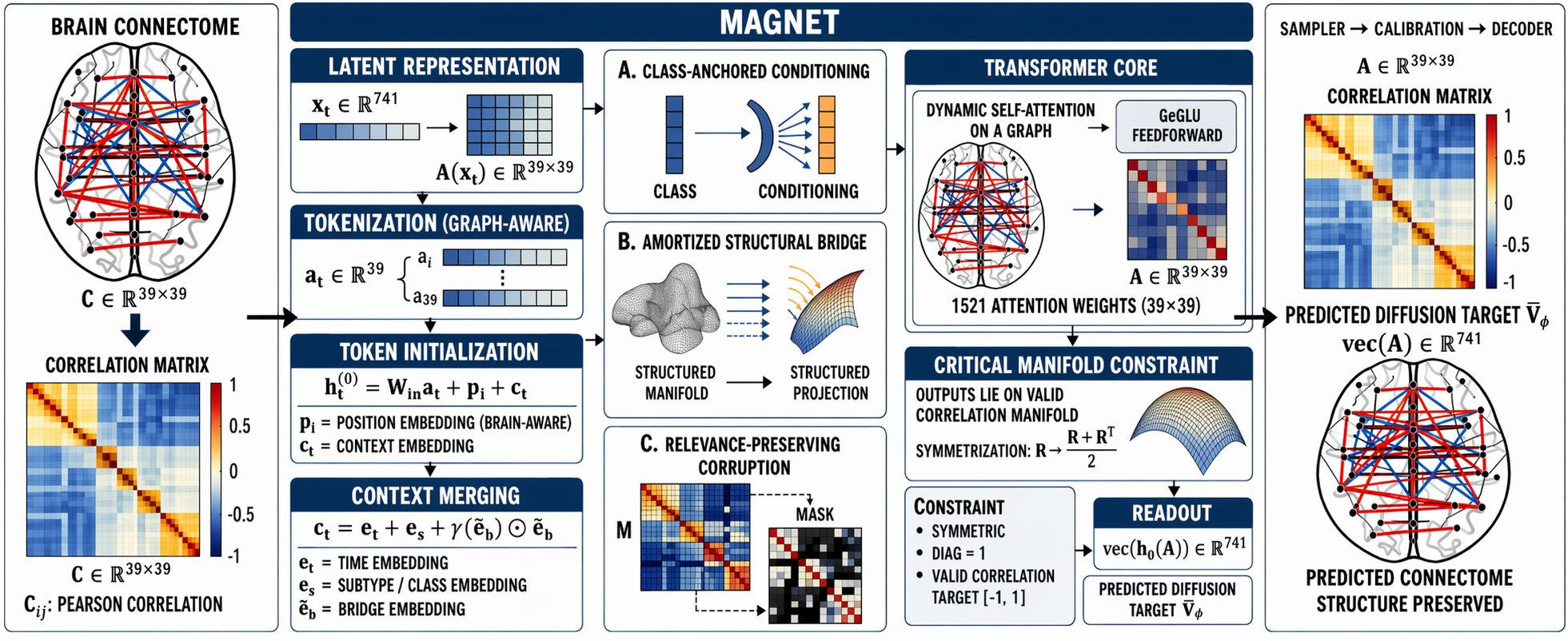}
    \vspace{-1.5em}\caption{\footnotesize\textbf{Illustration of MAGNET Architecture.} The pipeline maps raw correlation matrices to validity-preserving coordinates ($x_t$), which are tokenized and merged with additive context embeddings (time, subtype, and bridge). The Transformer core then performs graph-aware denoising, modulated by three dedicated controllers: \textbf{Class-Anchored Conditioning, Amortized Structural Bridge, and Relevance-Preserving Corruption}. Finally, row projection and symmetrization guarantee the predicted structure remains on the valid correlation manifold.}
    \label{fig:architecture}
    \vspace{-1em}
\end{figure*}

Validity-preserving charts guarantee geometrical feasibility but lack the relational inductive biases required to recover class-conditional interactions in flat latent spaces~\cite{battaglia2018relational,gao2025toward,he2025position,collas2025riemannian,bertolini2025diffusion}. Because functional connectomes inherently require representing nodes alongside relationship patterns~\cite{kan2022brain,bessadok2022graph,cui2022braingb}, MAGNET denoises ROI tokens directly. This successfully combines geometric feasibility with necessary structural biases~\cite{battaglia2018relational} (validated against flat charts in Appendix~\ref{subsec:backbone_controller_ablation}).

Given $x_t$, we reshape $x_t$ into a symmetric, zero-diagonal matrix $A(x_t) \in \mathbb{R}^{d \times d}$. Each row of this matrix is then used as an initial region token~\cite{kan2022brain,vaswani2017attention}. Concatenating orthogonal signals like timestep, class, and context often entangles their semantic roles. Instead, MAGNET uses additive token-level conditioning, ensuring each channel is independently optimizable~\cite{perez2018film,peebles2023scalable,nichol2021improved,ho2020denoising}. Defining $p_i$ as the learned anatomical positional embedding, $e_t$ as the timestep embedding~\cite{ho2020denoising}, $e_s$ as the subtype embedding, and $\widetilde{e}_b = \mathrm{Embed}_{\mathrm{spec}}(b_t)$ as the bridge embedding, where $b_t = B_\psi(x_t, t, s)$. Here, $x_t$ is the noisy normalized cholesky state, t is the diffusion timestep, s is the class-anchored subtype, and $B_\psi$ (.) is the amortized bridge predictor (see sec.~\ref{subsec:bridge}).  MAGNET forms a shared conditioning vector and initializes ROI tokens as:
\vspace{-0.5em}
\begin{equation}
c_t = e_t + e_s + \xi(\widetilde e_b)\odot \widetilde e_b,
\qquad
h_i^{(0)} = W_{\mathrm{in}}a_i(x_t) + p_i + c_t,
\label{eq:node_init}
\end{equation}
where $\xi(\cdot)$ is a learned gating map and $W_{\mathrm{in}}$ is a row projection. A pre-norm transformer stack~\cite{vaswani2017attention,xiong2020layer} with QK-normalized attention~\cite{henry2020query} and GeGLU feed-forward layers~\cite{shazeer2020glu} produces the final token matrix $H^{(L)}$. MAGNET then applies a row-wise output projection followed by symmetrization:
\begin{equation}
R = W_{\mathrm{out}}H^{(L)},
\qquad
E = \frac{1}{2}\left(R + R^\top\right),
\qquad
\widehat v_\theta = \operatorname{vech}_0(E).
\label{eq:readout}
\end{equation}
Eq.~\eqref{eq:readout} maps the region-level representation back to chart coordinates. Fig.~\ref{fig:architecture} summarizes this full architecture. The chart enforces valid correlation matrix output geometry~\cite{collas2025riemannian}, while Transformer backbone performing ROI-level graph denoising supplies the relational inductive bias~\cite{battaglia2018relational} needed for conditional support recovery. Detailed component implementations are provided in App.~\ref{app:archdetails}.


\vspace{-5pt}
\subsection{MAGNET Framework}
\vspace{-5pt}
\subsubsection{Class-anchored conditioning}
\vspace{-5pt}
\label{subsec:subtypes}

First problem arises due to coarse labels, where the application of one label to multiple latent structural subpopulations~\cite{varol2017hydra, wolfers2018mapping,wen2024dimensional} results in sample points located in different off-axis sub-regions rather than a single prototype (Fig.~\ref{fig:intro_2d}). Conditional on such labels, generator has no choice but to create a mean representation for each mode within the class~\cite{goodfellow2016nips}, generating geometrically valid yet clinically unrealistic samples~\cite{benny2021evaluation,regenwetter2023beyond,navidi2025adaptive}. Existing works on subpopulation identification and disease heterogeneity modeling shows that an explicit decomposition of the coarse labels into latent substructures significantly alleviates mode-averaging~\cite{varol2017hydra,wolfers2018mapping,wen2024dimensional}. Ignoring this sub-manifold structure harms generalization, analogous to label conflation in classification~\cite{japkowicz2002class,buda2018systematic}, and biases the denoiser toward majority modes, compressing diversity toward dominant structural patterns~\cite{byrd2019effect,kang2019decoupling}.

MAGNET refines each coarse class $y$ with a \emph{train-only subtype variable $s$}. For each class, we fit a diagonal-covariance Gaussian mixture in standardized chart coordinates via expectation maximization~\cite{dempster1977maximum}: $p(x \mid y)\approx \sum_{s=1}^{K_y}\pi_{y,s}\,p(x \mid y,s)$. We select $K_y$ by BIC~\cite{schwarz1978estimating,fraley2002model} and assign hard subtype labels $s_i=\arg\max_s p(s \mid x_i,y_i)$~\cite{mclachlan2019finite}. At generation, we sample $s\sim p(s \mid y)$ and condition the denoiser on the corresponding token. This variable $s$ captures latent within-class structure rather than a validated biological taxonomy~\cite{wolfers2018mapping,wen2024dimensional}.

\begin{proposition}[Subtype refinement monotonically reduces Bayes denoising risk]
\label{prop:anchored_risk}
Assume $\mathbb{E}\|X_0\|_2^2 < \infty$. Let $X_0$ be the clean latent variable, $X_t$ its noisy version, $Y$ the coarse class label, and $S$ a class-anchored subtype variable. Under squared loss, we define the Bayes denoising risk

\vspace{-0.5em}
\begin{equation}
\mathcal R_Y^\star
=
\inf_f \mathbb E\!\left[\|X_0-f(X_t,Y)\|_2^2\right],
\qquad
\mathcal R_{Y,S}^\star
=
\inf_g \mathbb E\!\left[\|X_0-g(X_t,Y,S)\|_2^2\right].
\label{eq:anchored_risks}
\end{equation}
\vspace{-0.5em}
Then
\vspace{-0.5em}
\begin{equation}
\mathcal R_Y^\star-\mathcal R_{Y,S}^\star
=
\mathbb E\!\left[
\left\|
\mathbb E[X_0\mid X_t,Y,S]-\mathbb E[X_0\mid X_t,Y]
\right\|_2^2
\right]
\ge 0.
\label{eq:anchored_identity}
\end{equation}
In particular, subtype refinement is beneficial exactly when it captures within-class structure unresolved by the coarse label.
\end{proposition}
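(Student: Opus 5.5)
The approach is the standard $L^2$ projection argument. Since $\mathbb{E}\|X_0\|_2^2<\infty$, $X_0$ lies in $L^2$, and conditional expectation onto a $\sigma$-algebra is the orthogonal projection onto the closed subspace of square-integrable measurable functions of the conditioning variables. Write $\mathcal G_1=\sigma(X_t,Y)$ and $\mathcal G_2=\sigma(X_t,Y,S)$, so that $\mathcal G_1\subseteq\mathcal G_2$. Define $m_1=\mathbb{E}[X_0\mid\mathcal G_1]$ and $m_2=\mathbb{E}[X_0\mid\mathcal G_2]$.

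\textbf{Step 1: identify both infima.} For any measurable $f$ with $f(X_t,Y)\in L^2$, expand
\[
\mathbb{E}\|X_0-f\|_2^2=\mathbb{E}\|X_0-m_1\|_2^2+\mathbb{E}\|m_1-f\|_2^2 .
\]
The cross term vanishes by the tower property, since $m_1-f$ is $\mathcal G_1$-measurable and $\mathbb{E}[X_0-m_1\mid\mathcal G_1]=0$. If $f\notin L^2$, the risk is infinite by Minkowski's inequality, so such $f$ can be ignored. Hence $\mathcal R_Y^\star=\mathbb{E}\|X_0-m_1\|_2^2$, and the infimum is attained at $m_1$. By the same argument, $\mathcal R_{Y,S}^\star=\mathbb{E}\|X_0-m_2\|_2^2$.

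\textbf{Step 2: Pythagorean decomposition.} Write $X_0-m_1=(X_0-m_2)+(m_2-m_1)$. The term $m_2-m_1$ is $\mathcal G_2$-measurable, because $m_1$ is $\mathcal G_1\subseteq\mathcal G_2$-measurable. Also $\mathbb{E}[X_0-m_2\mid\mathcal G_2]=0$, so the cross term has zero expectation; this is justified by Cauchy--Schwarz, since both factors are in $L^2$. Therefore
\[
\mathbb{E}\|X_0-m_1\|_2^2=\mathbb{E}\|X_0-m_2\|_2^2+\mathbb{E}\|m_2-m_1\|_2^2 .
\]
Subtracting the two risks gives the identity in \eqref{eq:anchored_identity}. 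Nonnegativity is immediate because the right-hand side is the expectation of a squared norm.

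\textbf{Step 3: the ``in particular'' clause.} The gap is zero if and only if $m_2=m_1$ almost surely, that is, if and only if $S$ carries no additional information about the posterior mean of $X_0$ beyond $(X_t,Y)$. So subtype refinement helps exactly when $\mathbb{E}[X_0\mid X_t,Y,S]$ differs from $\mathbb{E}[X_0\mid X_t,Y]$ on a set of positive probability.

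The only delicate point is measure-theoretic bookkeeping. One has to check that the infimum over arbitrary measurable $f$ coincides with the infimum over $L^2$ functions, and that the cross terms are integrable. Both follow from the finite second moment assumption, so I do not expect any genuine obstacle.
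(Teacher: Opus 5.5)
Your proposal is correct and takes essentially the same route as the paper. Both identify the Bayes risks with the conditional means, split the coarse residual as $(X_0-\mathbb{E}[X_0\mid X_t,Y,S])+(\mathbb{E}[X_0\mid X_t,Y,S]-\mathbb{E}[X_0\mid X_t,Y])$, and kill the cross term because the first piece has zero conditional mean given $(X_t,Y,S)$ while the second is $\sigma(X_t,Y,S)$-measurable. Your Step 1 (infima attained at the conditional means, with the Minkowski remark for non-square-integrable $f$) and your Cauchy--Schwarz integrability check make explicit bookkeeping that the paper asserts without proof.
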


\textbf{Remark.} Eq.~\ref{eq:anchored_identity} demonstrates that subtype refinement improves denoising exactly when $S$ captures within-class structure unresolved by the coarse label $Y$ (Fig.~\ref{fig:intro_2d}). Proofs are in App.~\ref{app:proof_anchored} and \ref{appsubsec:conditioning}.

\vspace{-5pt}
\subsubsection{Amortized structural bridge}
\vspace{-5pt}
\label{subsec:bridge}

Subtype conditioning is a solution for the problem of labeling, but it does not have the required global structural knowledge to use a few-step reverse sampling technique, hence creating a dilemma for the choice of either out-of-distribution averaging~\cite{benny2021evaluation} or constrained manifold learning~\cite{sadat2023cads,kynkaanniemi2024applying,chung2024cfg++}. In order to resolve this ambiguity without additional forward samples, we condense a structural feature of the subtype from a pre-trained dictionary of data into an amortized predictor~\cite{hinton2015distilling,chen2022analog}. Unlike self-conditioning, which recycles recursive denoiser outputs~\cite{chen2022analog}, this bridge provides an independently supervised, compact structural prior to offset the limitations of the noisy input $x_t$.

Let $\varphi(C)\in\mathbb{R}^{16}$ denote the proxy structural feature extractor, combining spectral properties (e.g., moments, rank, condition number) and off-diagonal connectivity statistics. For each subtype $s$, MAGNET fits a summary atom set $\mathcal A_s=\{a_{s,1},\dots,a_{s,K_s}\}\subset\mathbb{R}^{16}$ ($K_s\in\{1,2\}$) using either the subtype mean or two principal-split means, depending on training support. Given a clean latent $x_0$, the teacher bridge target is a distance-weighted convex combination of these atoms: $b^\star(x_0,s,t) = \sum_{k=1}^{K_s} w_k(x_0,s,t)\,a_{s,k}$. Because $x_0$ is unavailable at inference, MAGNET amortizes this process by predicting the target directly from the noisy state: $b_t = B_\psi(x_t,t,s) = f_\psi\!\left(\varphi_{\mathrm{cheap}}(x_t),e_t,e_s\right)$. Here, $\varphi_{\mathrm{cheap}}$ is a low-cost proxy of the noisy chart coordinates. Trained to match the teacher target, $B_\psi$ is injected via Eq.~\ref{eq:node_init}. By entirely replacing the teacher during sampling, $B_\psi$ bypasses repeated clean-state estimations, yielding a $24-33\times$ speedup (Table~\ref{tab:bridge_ablations}).

\vspace{-1.5em}
\begin{equation}
L_{\mathrm{bridge}} = \left\| B_\psi(x_t,t,s)-b^\star(x_0,s,t) \right\|_2^2. 
\label{eq:bridge_loss}
\end{equation}
At inference time, $B_\psi$ replaces the teacher entirely, yielding the approximately $25-32\times$ sampling speedup over teacher-bridge alternatives shown in Tab.~\ref{tab:bridge_ablations}.

\begin{proposition}[Bridge-assisted denoising risk and amortization error]
\label{thm:bridge_risk}
Let $U=(X_t,t,S)$, where $X_t$ is the noisy latent state, $t$ is diffusion time, and $S$ is the subtype token. Let $B^\star=b^\star(X_0,S,t)$ denote the teacher bridge target. Under squared loss, define
\begin{equation}
\mathcal R_U^\star
=
\inf_f \mathbb E\!\left[\|X_0-f(U)\|_2^2\right],
\qquad
\mathcal R_{U,B^\star}^\star
=
\inf_g \mathbb E\!\left[\|X_0-g(U,B^\star)\|_2^2\right].
\label{eq:bridge_risks}
\end{equation}
\vspace{-0.5em}
Then
\begin{equation}
\mathcal R_U^\star-\mathcal R_{U,B^\star}^\star
=
\mathbb E\!\left[
\left\|
\mathbb E[X_0\mid U,B^\star]-\mathbb E[X_0\mid U]
\right\|_2^2
\right]
\ge 0.
\label{eq:bridge_help}
\end{equation}
Moreover, if the Bayes regressor $\eta(u,b)=\mathbb E[X_0\mid U=u,B^\star=b]$ is $L$-Lipschitz in $b$, then the plug-in estimator $\widehat X_0(U)=\eta(U,B_\psi(U))$ satisfies
\begin{equation}
\mathbb E\!\left[\|X_0-\widehat X_0(U)\|_2^2\right]-\mathcal R_{U,B^\star}^\star
\le
L^2\,\mathbb E\!\left[\|B_\psi(U)-B^\star\|_2^2\right].
\label{eq:bridge_bound}
\end{equation}
\end{proposition}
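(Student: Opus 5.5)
The first part is the same orthogonality argument as Proposition~\ref{prop:anchored_risk}, with $U$ in place of $(X_t,Y)$ and $B^\star$ in place of $S$. Since $\mathbb E\|X_0\|_2^2<\infty$, the conditional means $m_U:=\mathbb E[X_0\mid U]$ and $m_{U,B}:=\mathbb E[X_0\mid U,B^\star]$ exist in $L^2$. They are the $L^2$-projections of $X_0$ onto the closed subspaces of square-integrable $\sigma(U)$- and $\sigma(U,B^\star)$-measurable functions. The two infima in Eq.~\eqref{eq:bridge_risks} are therefore attained at these conditional means, giving
\[
\mathcal R_U^\star=\mathbb E\|X_0-m_U\|_2^2,\qquad \mathcal R_{U,B^\star}^\star=\mathbb E\|X_0-m_{U,B}\|_2^2 .
\]
Write $X_0-m_U=(X_0-m_{U,B})+(m_{U,B}-m_U)$ and expand the square. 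The second term is $\sigma(U,B^\star)$-measurable. The tower property gives $\mathbb E[X_0-m_{U,B}\mid U,B^\star]=0$, so the cross term vanishes. This yields Eq.~\eqref{eq:bridge_help}, and nonnegativity is immediate.

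For the amortization bound, set $\eta^\star:=\eta(U,B^\star)=m_{U,B}$ and $\widehat X_0=\eta(U,B_\psi(U))$. Decompose $X_0-\widehat X_0=(X_0-\eta^\star)+(\eta^\star-\widehat X_0)$. Because $B_\psi(U)$ is a function of $U$, the difference $\eta^\star-\widehat X_0$ is again $\sigma(U,B^\star)$-measurable. The same orthogonality then gives the exact identity
\[
\mathbb E\|X_0-\widehat X_0\|_2^2=\mathcal R_{U,B^\star}^\star+\mathbb E\|\eta(U,B^\star)-\eta(U,B_\psi(U))\|_2^2 .
\]
Applying the Lipschitz hypothesis pointwise, $\|\eta(U,B^\star)-\eta(U,B_\psi(U))\|_2\le L\|B^\star-B_\psi(U)\|_2$. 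Squaring and taking expectations gives Eq.~\eqref{eq:bridge_bound}.

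The main obstacle is not algebraic but a matter of measurability and integrability bookkeeping, which I would state explicitly.
\begin{itemize}
\item The cross term vanishes only if $\eta^\star-\widehat X_0\in L^2$. This follows from Lipschitzness provided $\mathbb E\|B_\psi(U)-B^\star\|_2^2<\infty$; if that expectation is infinite, the bound is trivial.
\item $B^\star$ is square integrable because it is a convex combination of finitely many fixed atoms, so it is bounded.
\item The Lipschitz condition must hold for a fixed version of $\eta$ on the whole range of $B_\psi$, including values of $b$ outside the support of $B^\star$. I would take this as part of the hypothesis, choosing a Lipschitz version of the regressor.
\item Measurability of $B_\psi$ as a map of $U$ is what places $\widehat X_0$ in $\sigma(U)\subset\sigma(U,B^\star)$.
\end{itemize}
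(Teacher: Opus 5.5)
Your proposal is correct and follows the paper's proof essentially line for line. For Eq.~\eqref{eq:bridge_help} you use the same orthogonal decomposition $X_0-\mathbb E[X_0\mid U]=(X_0-\mathbb E[X_0\mid U,B^\star])+(\mathbb E[X_0\mid U,B^\star]-\mathbb E[X_0\mid U])$, and for Eq.~\eqref{eq:bridge_bound} you use the same insertion of $\eta(U,B^\star)$, giving an exact excess-risk identity followed by the pointwise Lipschitz bound. The differences are cosmetic: the paper conditions on $(U,B^\star,B_\psi(U))$, which generates the same $\sigma$-algebra as your $\sigma(U,B^\star)$, and your remarks on square-integrability, on $\mathbb E\|X_0\|_2^2<\infty$, and on choosing a Lipschitz version of $\eta$ make explicit assumptions that the paper leaves implicit.
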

\paragraph{Remark.} Prop.~\ref{thm:bridge_risk} shows that conditioning on the teacher bridge $B^\star$ reduces denoising risk~\cite{miyasawa1961empirical, efron2011tweedie} (Eq.~\ref{eq:bridge_help}), while minimizing $L_{\mathrm{bridge}}$ bounds the inference gap (Eq.~\ref{eq:bridge_loss}). As shown later in Tab.~\ref{tab:bridge_ablations} (App.~\ref{appsubsec:bridge}), lower distillation error directly improves $\alpha,\beta$-F1, emprirically validating the amortization bound (Eq.~\ref{eq:bridge_bound}). Furthermore, omitting $L_{\mathrm{bridge}}$ spikes tracking error by two orders of magnitude with no fidelity gain, confirming this active bridge mode is essential rather than optional (App.~\ref{appsubsec:bridge}, \ref{app:proof_bridge}).

\vspace{-5pt}
\subsubsection{Relevance-preserving corruption}
\vspace{-5pt}
\label{subsec:adaptive_corruption}

The third failure mode is geometric. Although the chart $\chi$ guarantees validity after decoding, isotropic corruption in chart coordinates is not geometry-neutral: because $\chi$ is nonlinear, equal-variance perturbations can move the noisy state away from the local class-conditional support that the reverse process must recover. This is especially harmful in the few-step regime where interpolation steps are larger. Motivated by geometry-aware diffusion and on-manifold correction methods \cite{huang2022riemannian,bertolini2025diffusion,he2023manifold,lee2025local,park2025temporal,roy2026manifoldgd}, we therefore replace isotropic corruption with a manifold-faithful, condition-adaptive noising rule as trajectory deviation at any timestep leads to accumulation of deviations away from the manifold \cite{roy2026manifoldgd,park2025temporal}.

Starting from a provisional isotropic sample $\tilde{x}_t = \sqrt{\bar{\alpha}_t}\,x_0
+ \sqrt{1-\bar{\alpha}_t}\,\epsilon$, $\epsilon \sim \mathcal{N}(0,I)$,
we estimate condition-relevant directions from the conditional-unconditional denoising gap
\begin{equation}
d_t
=
G_\theta(\tilde{x}_t,t,s,b_t)
-
G_\theta(\tilde{x}_t,t,\varnothing,\varnothing),
\quad
m_{t,j}
=
\frac{|d_{t,j}|}{\max_k |d_{t,k}|+\varepsilon},
\label{eq:relevance_score}
\end{equation}
where $m_t \in [0,1]^m$ scores chart coordinates by conditional relevance. Following the auxiliary-prior view of diffusion guidance \cite{su2025diffusion}, we define a task-specific manifold-faithfulness prior
\begin{equation}
\pi_t(x_t \mid x_0,\tilde{x}_t,s,b_t)
\propto
\exp\!\left(
-\lambda_t
\left\|
\sqrt{m_t}\odot
\bigl(x_t-\sqrt{\bar{\alpha}_t}x_0\bigr)
\right\|_2^2
\right),
\qquad
\lambda_t=\frac{\kappa \rho(t)}{2(1-\bar{\alpha}_t)},
\label{eq:mf_prior}
\end{equation}
where $\rho(t)\in[0,1]$ is a monotonically decreasing protection schedule. Equation~\eqref{eq:mf_prior} penalizes corruption away from the support-aligned coordinates indicated by $m_t$: directions judged condition-relevant are protected most strongly early in diffusion. \footnote{During training, $G_\theta$ in Equation \ref{eq:relevance_score} is evaluated with a stop-gradient, so the relevance mask $m_{t}$ does not back propagate through the denoizer and diffusion marginals remain tractable.}

Multiplying the isotropic Gaussian kernel by this prior yields a diagonal Gaussian corruption law, $q_{\mathrm{mf}}(x_t \mid x_0,\tilde{x}_t,s,b_t) = \mathcal{N}\!\left(
\sqrt{\bar{\alpha}_t}x_0,\,
(1-\bar{\alpha}_t)\,\mathrm{Diag}(\sigma_t^2)
\right)$, with coordinate-wise variance $\sigma_{t,j}^2 = \operatorname{clip}\!\bigl(
1-\kappa \rho(t)m_{t,j},
\ \sigma_{\min}^2,\ 1
\bigr)$. The implemented forward corruption is therefore $
x_t = \sqrt{\bar{\alpha}_t}\,x_0
+ \sqrt{1-\bar{\alpha}_t}\,
(\sigma_t \odot \epsilon),
\ \epsilon \sim \mathcal{N}(0,I).$

This rule is annealed in the spirit of \cite{sadat2023cads}: protection is strongest at early timesteps, where coarse corruption can erase condition-defining structure, and vanishes later so that the process reverts to standard isotropic noising near the clean end. Unlike \cite{sadat2023cads}, however, we anneal the corruption covariance rather than the conditioning signal itself. Thus MAGNET reduces off-support drift at its source: coordinates that matter most for conditional denoising retain higher signal-to-noise ratio early in the trajectory, while asymptotically recovering the standard diffusion kernel as $\rho(t)\to 0$. \textit{Intuitively, preserving condition-relevant structure for longer keeps the noisy state closer to the local class-conditional manifold at each timestep, making it easier for the reverse process to remain on, and return to, the correct support.} A Gaussian surrogate analysis formalizing this is in App.~\ref{app:proof_corruption} and corresponding ablations are reported in App.~\ref{appsubsec:corruption}.

\vspace{-5pt}
\subsection{Full Training Objective}
\vspace{-5pt}
\label{subsec:objective}

Our training objective is $L = L_{\mathrm{diff}} + \lambda_{\mathrm{bridge}} L_{\mathrm{bridge}} + \lambda_{\mathrm{spec}} L_{\mathrm{spec}}$. Here, $L_{\mathrm{diff}}$ corresponds to the diffusion loss, $L_{\mathrm{bridge}}$ is the bridge distillation error (Eq.~\ref{eq:bridge_loss}), and $L_{\mathrm{spec}}=\|\varphi(\widehat x_0)-\varphi(x_0)\|_2^2$ steers the predicted latent to retain structural features. $L_{\mathrm{diff}}$ uses min-SNR reweighting and inverse subject-frequency weights for longitudinal cohorts. $L_{\mathrm{bridge}}$ is additionally weighted by $\bar\alpha_t$ and masked to conditioned examples only and $L_{\mathrm{spec}}$ is mean-reduced over the minibatch.

We sample with DDIM~\cite{song2020denoising}, using six reverse steps as the primary setting. Generation begins by sampling a subtype $s \sim p(s \mid y)$ and initializing from Gaussian noise, followed by reverse denoising with subtype and bridge conditioning at every step. Instead of the standard linear CFG approach, we use a diversity-preserving variant that orthogonalizes the unconditional residual from the guidance direction (see App.~\ref{app:archdetails} for details), using $\gamma=1.5$  (App.~\ref{subsec:cfgscale}). This directly addresses the global-context failure discussed in Sec.~\ref{subsec:bridge}: plain linear CFG~\cite{ho2022classifier}  adds a fixed displacement along the guidance direction with a scale that can cause over-specialization toward narrow manifold regions~\cite{chung2024cfg++,kynkaanniemi2024applying}. Our formulation decomposes the unconditional prediction into components parallel and orthogonal to $g$, and \emph{replaces} the parallel variant with $\tanh$-saturated guidance instead of adding it. In standard CFG, the full unconditional prediction $\hat{v}_\emptyset$ is retained and $\gamma g$ is added which can keep the parallel component of $\hat{v}_\emptyset$ (might point in a conflicting direction to $g$) and can dilute directional control. By instead using $\hat{v}_\gamma = \hat{v}_\emptyset^\perp + \gamma \tanh(\|g\|_2)g$, the guidance direction is unambiguous and the $\tanh$ factor prevents over-steering for large $\|g\|_2$. The calibration is fit on the training split by matching class-conditional latent moments, and validity is preserved because decoding still proceeds through the normalized-Cholesky correlation chart (see App.~\ref{app:frameworkanalysis} for ablations and sampling sensitivity analyses).

\begin{table*}[!t]
\centering
\scriptsize
\setlength{\tabcolsep}{4pt}
\renewcommand{\arraystretch}{1.3}
\caption{\footnotesize
\textbf{Quantitative comparison of validity-preserving generative models under matched low-step budgets.}
We compare MAGNET against GDD-proj\cite{jo2022score}, TriangCFM, DiffeoGauss, and DiffeoCFM\cite{collas2025riemannian} on ABIDE~\cite{di2014autism}, ADNI~\cite{weiner2017alzheimer}, and OASIS-3~\cite{lamontagne2019oasis}. We report $\alpha$-Precision, $\beta$-Recall, and $\alpha,\beta$-F1~\cite{alaa2022faithful} to evaluate class-conditional support fidelity, ROC-AUC and F1 to evaluate downstream utility, and wall-clock sampling time per split. The {\setlength{\fboxsep}{1.2pt}\colorbox{LightGray}{Real Data}} row provides the empirical oracle from split-wise real-data comparisons, and the {\setlength{\fboxsep}{1.2pt}\colorbox{LightBlue}{MAGNET}} row highlights our method. Bold denotes the best generative model for each metric and underlining denotes the runner-up. Supporting geometric analyses are given in Appendix~\ref{app:geometricanalysis}--\ref{app:reversetime}.
}
\label{tab:main_brain_results}
\vspace{-1.0em}
\begin{tabular}{llcccccc}
\toprule
\multirow{2}{*}{\textbf{Dataset}} & \multirow{2}{*}{\textbf{Method}} 
& \multicolumn{3}{c}{\textbf{Quality Metrics}} 
& \multicolumn{2}{c}{\textbf{CAS Metrics}} 
& \multicolumn{1}{c}{\textbf{Time (s.)}} \\
\cmidrule(lr){3-5}\cmidrule(lr){6-7}\cmidrule(lr){8-8}
& & \textbf{$\alpha$-Precision $\uparrow$} & \textbf{$\beta$-Recall $\uparrow$} & \textbf{$\alpha,\beta$-F1 $\uparrow$}
& \textbf{ROC-AUC $\uparrow$} & \textbf{F1 $\uparrow$}
& \textbf{Sampling $\downarrow$} \\
\midrule

\multirow{6}{*}{\textbf{ABIDE}}
& \cellcolor{LightGray} Real Data 
& \cellcolor{LightGray} 0.80 $\pm$ 0.04 & \cellcolor{LightGray} 0.81 $\pm$ 0.04 & \cellcolor{LightGray} 0.80 $\pm$ 0.01
& \cellcolor{LightGray} 0.66 $\pm$ 0.04 & \cellcolor{LightGray} 0.58 $\pm$ 0.06 & \cellcolor{LightGray} N/A \\
& TriangCFM
& 0.21 $\pm$ 0.21 & 0.24 $\pm$ 0.35 & 0.20 $\pm$ 0.29
& 0.50 $\pm$ 0.05 & 0.40 $\pm$ 0.13 & 7.16 $\pm$ 1.17 \\
& TriangDDPM & 0.25 $\pm$ 0.04 & 0.22 $\pm$ 0.06 & 0.23 $\pm$ 0.04
& 0.52 $\pm$ 0.05 & 0.43 $\pm$ 0.08 & 5.45 $\pm$ 0.19 \\
& DiffeoGauss
& 0.51 $\pm$ 0.04 & 0.23 $\pm$ 0.03 & 0.32 $\pm$ 0.03
& \best{0.66 $\pm$ 0.05} & \best{0.56 $\pm$ 0.06} & \best{0.06 $\pm$ 0.01} \\
& DiffeoCFM
& \best{0.81 $\pm$ 0.05} & \second{0.50 $\pm$ 0.04} & \second{0.62 $\pm$ 0.05}
& 0.60 $\pm$ 0.04 & \underline{0.52 $\pm$ 0.07} & 9.04 $\pm$ 0.86 \\
& GDSS-proj
& 0.00 $\pm$ 0.00 & 0.28 $\pm$ 0.07 & 0.00 $\pm$ 0.00
& 0.47 $\pm$ 0.05 & 0.29 $\pm$ 0.31 & 1.05 $\pm$ 0.02 \\
& \cellcolor{LightBlue} \textbf{MAGNET (ours)}
& \cellcolor{LightBlue} \second{0.80 $\pm$ 0.04} & \cellcolor{LightBlue} \best{0.59 $\pm$ 0.04} & \cellcolor{LightBlue} \best{0.68 $\pm$ 0.03}
& \cellcolor{LightBlue} \second{0.64 $\pm$ 0.07} & \cellcolor{LightBlue} \best{0.56 $\pm$ 0.06} & \cellcolor{LightBlue} \second{0.38 $\pm$ 0.02} \\
\midrule

\multirow{6}{*}{\textbf{ADNI}}
& \cellcolor{LightGray} Real Data
& \cellcolor{LightGray} 0.87 $\pm$ 0.09 & \cellcolor{LightGray} 0.87 $\pm$ 0.02 & \cellcolor{LightGray} 0.87 $\pm$ 0.05
& \cellcolor{LightGray} 0.58 $\pm$ 0.03 & \cellcolor{LightGray} 0.59 $\pm$ 0.03 & \cellcolor{LightGray} N/A \\
& TriangCFM
& 0.23 $\pm$ 0.31 & 0.25 $\pm$ 0.35 & 0.23 $\pm$ 0.33
& 0.52 $\pm$ 0.05 & 0.37 $\pm$ 0.10 & 21.70 $\pm$ 3.35 \\
& TriangDDPM
& 0.27 $\pm$ 0.06
& 0.30 $\pm$ 0.02
& 0.28 $\pm$ 0.03
& 0.54 $\pm$ 0.08
& 0.43 $\pm$ 0.05
& 13.92 $\pm$ 0.93 \\
& DiffeoGauss
& 0.02 $\pm$ 0.01 & 0.50 $\pm$ 0.08 & 0.04 $\pm$ 0.02
& \best{0.58 $\pm$ 0.03} & 0.28 $\pm$ 0.13 & \second{1.42 $\pm$ 0.40} \\
& DiffeoCFM
& \second{0.68 $\pm$ 0.12} & \second{0.75 $\pm$ 0.04} & \second{0.71 $\pm$ 0.05}
& \second{0.57 $\pm$ 0.04} & 0.40 $\pm$ 0.09 & 36.08 $\pm$ 5.84 \\
& GDSS-proj
& 0.57 $\pm$ 0.05 & 0.38 $\pm$ 0.07 & 0.45 $\pm$ 0.05
& 0.48 $\pm$ 0.07 & \underline{0.53 $\pm$ 0.36} & 1.70 $\pm$ 0.01 \\
& \cellcolor{LightBlue} \textbf{MAGNET (ours)}
& \cellcolor{LightBlue} \best{0.71 $\pm$ 0.13} & \cellcolor{LightBlue} \best{0.84 $\pm$ 0.06} & \cellcolor{LightBlue} \best{0.76 $\pm$ 0.06}
& \cellcolor{LightBlue} \second{0.57 $\pm$ 0.07} & \cellcolor{LightBlue} \best{0.61 $\pm$ 0.05} & \cellcolor{LightBlue} \best{0.86 $\pm$ 0.28} \\
\midrule

\multirow{6}{*}{\textbf{OASIS-3}}
& \cellcolor{LightGray} Real Data
& \cellcolor{LightGray} 0.89 $\pm$ 0.04 & \cellcolor{LightGray} 0.88 $\pm$ 0.05 & \cellcolor{LightGray} 0.88 $\pm$ 0.01
& \cellcolor{LightGray} 0.71 $\pm$ 0.04 & \cellcolor{LightGray} 0.61 $\pm$ \cellcolor{LightGray} 0.06 & \cellcolor{LightGray} N/A \\
& TriangCFM
& 0.22 $\pm$ 0.27 & 0.26 $\pm$ 0.37 & 0.23 $\pm$ 0.32
& 0.51 $\pm$ 0.05 & 0.45 $\pm$ 0.17 & 15.29 $\pm$ 2.51 \\
& TriangDDPM
& 0.26 $\pm$ 0.08
& 0.31 $\pm$ 0.05
& 0.28 $\pm$ 0.05
& 0.55 $\pm$ 0.07
& 0.48 $\pm$ 0.03
& 8.79 $\pm$ 1.31 \\
& DiffeoGauss
& 0.52 $\pm$ 0.06 & 0.31 $\pm$ 0.02 & 0.39 $\pm$ 0.03
& \best{0.68 $\pm$ 0.03} & 0.40 $\pm$ 0.09 & 1.88 $\pm$ 1.10 \\
& DiffeoCFM
& \second{0.63 $\pm$ 0.06} & \second{0.38 $\pm$ 0.04} & \second{0.47 $\pm$ 0.05}
& \best{0.68 $\pm$ 0.03} & \underline{0.53 $\pm$ 0.04} & 23.24 $\pm$ 2.55 \\
& GDSS-proj
& 0.01 $\pm$ 0.02 & 0.04 $\pm$ 0.01 & 0.01 $\pm$ 0.01
& 0.51 $\pm$ 0.03 & 0.30 $\pm$ 0.32 & 1.44 $\pm$ 0.01 \\
& \cellcolor{LightBlue} \textbf{MAGNET (ours)}
& \cellcolor{LightBlue} \best{0.75 $\pm$ 0.07} & \cellcolor{LightBlue} \best{0.47 $\pm$ 0.06} & \cellcolor{LightBlue} \best{0.57 $\pm$ 0.06}
& \cellcolor{LightBlue} \best{0.68 $\pm$ 0.03} & \cellcolor{LightBlue} \best{0.56 $\pm$ 0.04} & \cellcolor{LightBlue} \best{0.63 $\pm$ 0.00} \\
\bottomrule
\end{tabular}
\end{table*}

\begin{figure}[!t]
    \centering
    \includegraphics[width=\linewidth]{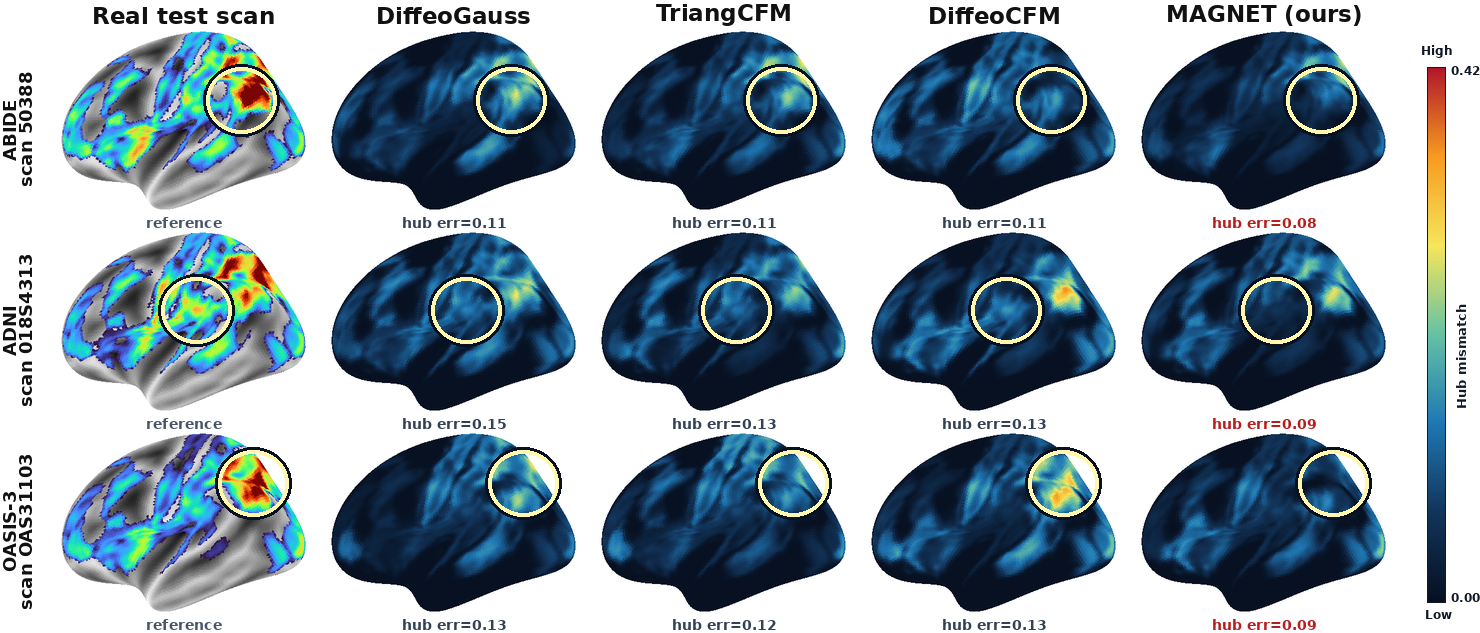}
    \vspace{-1.5em}
    \caption{\footnotesize
      \textbf{Spatial mismatch of cortical hubs.} Cortical surface maps illustrating the structural deviation of generated samples relative to a held-out real test scan (left). Brighter regions indicate a larger spatial mismatch compared to the real reference, while darker regions signify higher fidelity. MAGNET consistently produces the darkest surfaces across all cohorts, visually confirming superior support recovery and yielding the lowest quantitative hub errors (shown below each panel).}
    \label{fig:connectome_summary}
    \vspace{-1.0em}
\end{figure}

\vspace{-1em}
\section{Experimental Results}
\vspace{-1em}
\label{sec:exp_results}

We evaluate MAGNET on three neuroimaging cohorts: ABIDE~\cite{di2014autism}, ADNI~\cite{weiner2017alzheimer}, and OASIS-3~\cite{lamontagne2019oasis}. Regional time series are extracted using Nilearn~\cite{Nilearn} with the MSDL atlas~\cite{varoquaux2011multi} ($d=39$), z-scored, and converted to correlation matrices via OAS-estimated covariances~\cite{dadi2019benchmarking}. The dataset comprises 4989 scans with strict subject-level splits: ABIDE (846 scans; 455 control, 391 positive), ADNI (2343 scans; 980 control, 1363 positive), and OASIS-3 (1800 scans; 1047 control, 753 positive). 

In the primary benchmark, we use binary labels (control = 0, all diagnoses = 1), intentionally imposing coarse labels to test recovery of within-class structure (Sec.~\ref{subsec:quantitative}). A separate zero-shot multiclass evaluation assesses whether finer clinical subtypes are preserved despite binary training (Sec.~\ref{subsec:zero_shot}). Baseline details and experimental setup are provided in App.~\ref{appsec:baselines} and App.~\ref{subsec:exp_setup} respectively.
\vspace{-1em}
\subsection{Main Benchmark}
\vspace{-0.5em}
\label{subsec:quantitative}
\begin{table*}[!t]
\centering
\scriptsize
\caption{\footnotesize
\textbf{Zero-shot recovery of fine-grained clinical structure.}
Generators are trained using only binary labels, but evaluated on resolved four-class clinical spaces: CN/SMC/MCI/AD for ADNI and CDR $0/0.5/1/2+$ for OASIS-3. Real held-out samples retain their metadata subclass labels, while generated samples are mapped into the same subclass space using a classifier trained only on real training data. Macro $\alpha$-precision, $\beta$-recall, and $\alpha,\beta$-F1 measure support recovery on the resolved subclass manifolds; multiclass CAS macro-F1 measures downstream utility; class KL divergence and Bhattacharyya distance measure preservation of subclass balance; and $\lambda$-MSE measures alignment of subclass-specific covariance structure. Bold and underlining denote the best and second-best generative models for each metric.
}
\label{tab:zero_shot_metrics}
\setlength{\tabcolsep}{1.8pt}
\renewcommand{\arraystretch}{1}
\vspace{-1.0em}
\begin{tabular}{l l c c c c c c c}
\toprule
\multirow{2}{*}{} & \multirow{2}{*}{\textbf{Method}} 
& \multicolumn{3}{c}{\textbf{Quality (Macro-Averaged)}} 
& \multicolumn{1}{c}{\textbf{CAS}} 
& \multicolumn{3}{c}{\textbf{Distributional Preservation $\downarrow$}} \\
\cmidrule(lr){3-5} \cmidrule(lr){6-6} \cmidrule(l){7-9}
& & \textbf{$\alpha$-Prec. $\uparrow$} & \textbf{$\beta$-Recall $\uparrow$} & \textbf{$\alpha,\beta$-F1 $\uparrow$} & \textbf{Macro F1 $\uparrow$} & \textbf{Class $D_{KL}$} & \textbf{Class $D_B$} & \textbf{$\lambda$-MSE} \\
\midrule

\multirow{5}{*}{\rotatebox[origin=c]{90}{\textbf{ADNI}}}
& TriangCFM 
& $\mathbf{0.51 \pm 0.11}$ & $0.21 \pm 0.05$ & $0.25 \pm 0.05$ & $0.19 \pm 0.04$ & $0.19 \pm 0.07$ & $0.05 \pm 0.02$ & $\underline{9.16 \pm 0.16}$ \\
& DiffeoGauss 
& $0.00 \pm 0.00$ & $\underline{0.54 \pm 0.04}$ & $0.01 \pm 0.01$ & $0.26 \pm 0.03$ & $\underline{0.05 \pm 0.06}$ & $0.01 \pm 0.01$ & $11.37 \pm 0.64$ \\
& DiffeoCFM 
& $\underline{0.38 \pm 0.16}$ & $0.48 \pm 0.06$ & $\underline{0.33 \pm 0.07}$ & $\underline{0.27 \pm 0.03}$ & $0.06 \pm 0.08$ & $\underline{0.01 \pm 0.01}$ & $14.30 \pm 0.89$ \\
& GDSS-proj
& $0.27 \pm 0.10$ & $0.30 \pm 0.06$ & $0.20 \pm 0.04$
& $0.19 \pm 0.03$ & $0.34 \pm 0.16$ & $0.09 \pm 0.03$ & $13.47 \pm 10.59$ \\
& \cellcolor{LightBlue} \textbf{MAGNET (ours)} 
& \cellcolor{LightBlue} $0.34 \pm 0.07$ & \cellcolor{LightBlue} $\mathbf{0.56 \pm 0.14}$ & \cellcolor{LightBlue} $\mathbf{0.35 \pm 0.04}$ & \cellcolor{LightBlue} $\mathbf{0.28 \pm 0.02}$ & \cellcolor{LightBlue} $\mathbf{0.05 \pm 0.03}$ & \cellcolor{LightBlue} $\mathbf{0.01 \pm 0.01}$ & \cellcolor{LightBlue} $\mathbf{6.08 \pm 0.53}$ \\

\midrule

\multirow{5}{*}{\rotatebox[origin=c]{90}{\textbf{OASIS-3}}}
& TriangCFM 
& $0.59 \pm 0.05$ & $\underline{0.19 \pm 0.06}$ & $\underline{0.26 \pm 0.07}$ & $0.25 \pm 0.05$ & $0.03 \pm 0.03$ & $0.01 \pm 0.01$ & $\underline{1.75 \pm 0.83}$ \\
& DiffeoGauss 
& $\underline{0.61 \pm 0.06}$ & $0.15 \pm 0.04$ & $0.21 \pm 0.05$ & $0.34 \pm 0.05$ & $0.03 \pm 0.03$ & $0.01 \pm 0.01$ & $2.24 \pm 0.14$ \\
& DiffeoCFM 
& $0.60 \pm 0.06$ & $0.18 \pm 0.05$ & $0.26 \pm 0.06$ & $\underline{0.34 \pm 0.04}$ & $\underline{0.03 \pm 0.03}$ & $\underline{0.01 \pm 0.01}$ & $1.93 \pm 0.46$ \\
& GDSS-proj
& $0.00 \pm 0.00$ & $0.02 \pm 0.01$ & $0.00 \pm 0.00$
& $0.22 \pm 0.07$ & $0.11 \pm 0.09$ & $0.03 \pm 0.02$ & $4.00 \pm 3.04$ \\
& \cellcolor{LightBlue} \textbf{MAGNET (ours)} 
& \cellcolor{LightBlue} $\mathbf{0.65 \pm 0.14}$ & \cellcolor{LightBlue} $\mathbf{0.21 \pm 0.05}$ & \cellcolor{LightBlue} $\mathbf{0.28 \pm 0.05}$ & \cellcolor{LightBlue} $\mathbf{0.36 \pm 0.04}$ & \cellcolor{LightBlue} $\mathbf{0.02 \pm 0.02}$ & \cellcolor{LightBlue} $\mathbf{0.01 \pm 0.00}$ & \cellcolor{LightBlue} $\mathbf{1.61 \pm 0.97}$ \\

\bottomrule
\end{tabular}
\vspace{-2.0em}
\end{table*}

Under matched sampling budgets, MAGNET improves upon prior baselines, delivering up to a 21\% relative increase in class-conditional fidelity ($\alpha,\beta$-F1) over the strongest validity-preserving baselines (Tab.~\ref{tab:main_brain_results}). These gains are most pronounced on heterogeneous cohorts: $21\%$ on OASIS-3 and $7\%$ on ADNI. This suggests that MAGNET better captures the within-class clinical variation that confounds support recovery in these cohorts. Furthermore, this strict geometric fidelity directly translates to downstream utility: MAGNET achieves the highest or tied-best classifier-based F1 scores across all three cohorts. Crucially, MAGNET establishes this performance while simultaneously accelerating generation. Computational graph reparameterizations have similarly been explored to improve the efficiency of probabilistic deep learning \cite{nazarovs2021graph}. The primary speedup over flow-matching baselines arises from the graph-denoiser architecture operating in discrete diffusion steps rather than continuous ODE integration; the amortized structural bridge provides an additional approximately $25-32\times$ reduction relative to teacher-bridge alternatives (Tab.~\ref{tab:bridge_ablations}). 

Fig.~\ref{fig:connectome_summary} further confirms these gains topologically by visualizing the spatial mismatch of cortical hubs relative to real test scans. While baselines exhibit brighter surface regions indicative of larger spatial deviations, MAGNET generates noticeably darker surfaces that closely resemble the real distributions, yielding the lowest hub errors across all three cohorts. Finally, we note that generic discrete graph models (e.g., GDSS) fail to achieve this level of fidelity. Because these models are natively designed for sparse networks with discrete edge attributes, they cannot inherently enforce the strict positive-definite and unit-diagonal constraints of correlation matrices. Consequently, their reliance on post-hoc projection to force outputs into valid dense, continuous functional connectomes severely disrupts the generative trajectory and destroys subtle structural patterns, rendering them fundamentally insufficient for conditional support recovery (see App.~\ref{appsec:baselines} for detailed analysis).
\vspace{-1em}
\subsection{Zero-Shot Recovery of Fine-Grained Clinical Subclasses}
\label{subsec:zero_shot}
\vspace{-0.5em}
To test if binary-trained generators natively preserve unseen fine-grained clinical structure, we map generated samples into these subclasses using a classifier trained exclusively on real data. Because no generator is exposed to these labels during training, this zero-shot setting isolates the preservation of classifier-resolved clinical organization rather than validating biological taxonomy. This confound is controlled across methods by applying the same real-data-trained resolver to every generator.

\textbf{Results.} As detailed in Tab.~\ref{tab:zero_shot_metrics}, MAGNET achieves the strongest subclass covariance alignment in this zero-shot setting, reducing $\lambda$-MSE on ADNI by $33-57\%$ relative to baselines. Also, TriangCFM (a continuous euclidean baseline) over-indexes on precision at the severe expense of coverage. MAGNET also attains the best macro $\beta$-recall, $\alpha,\beta$-F1, and CAS macro-F1 on ADNI, while sweeping all support-recovery, CAS, and $\lambda$-MSE metrics on OASIS-3. Across both datasets, MAGNET preserves subclass balance with the lowest KL divergence and tied Bhattacharyya distance. Distributional alignment has also been studied through optimal transport regularization \cite{mehta2023efficient}. Results indicate MAGNET better preserves fine-grained clinical organization under coarse binary supervision. Crucially, after generated samples are assigned to the resolved subclass space, $\lambda$-MSE directly compares subclass covariance spectra rather than classifier accuracy, providing complementary evidence of subclass preservation. In stark contrast, the generic discrete graph baseline (GDSS-proj) suffers catastrophic degradation in this fine-grained setting (e.g., dropping to zero F1 on OASIS-3), further emphasizing that post-hoc projection severely disrupts the subtle topological variations necessary for resolving unseen clinical subclasses (see App.~\ref{appsec:baselines} for details).

\vspace{-1em}
\subsection{Graph Backbone and Controller Necessity}
\vspace{-0.5em}
\label{subsec:backbone_controller_ablation}

\begin{table*}[!t]
\centering
\scriptsize
\setlength{\tabcolsep}{3.0pt}
\renewcommand{\arraystretch}{1.15}
\caption{\footnotesize
\textbf{Backbone and controller ablation.}
All variants use the same normalized-Cholesky validity chart and a 6-step sampling budget. Controller columns denote class-anchored conditioning (Subtype), amortized bridge conditioning (Bridge), relevance-preserving noise (Noise), and diversity-preserving classifier-free guidance (Guidance). The vector row tests whether the validity chart alone is sufficient with a flat denoiser. The graph row restores the region-level graph denoiser but disables the controller. The individual controller rows activate one mechanism at a time on top of the graph denoiser. The penultimate row uses the integrated controller without diversity preserving classifier-free guidance, and the final row is full MAGNET.  
}
\label{tab:backbone_controller_ablation}
\begin{tabular}{lccccccccc}
\toprule
\multirow{2}{*}{\textbf{Dataset}}
& \multirow{2}{*}{\textbf{Backbone}}
& \multicolumn{4}{c}{\textbf{Controller}}
& \multicolumn{3}{c}{\textbf{Quality}}
& \multirow{2}{*}{\textbf{Sampling (s.)} $\downarrow$} \\
\cmidrule(lr){3-6}
\cmidrule(lr){7-9}
& & \textbf{Subtype} & \textbf{Bridge} & \textbf{Noise} & \textbf{Guidance}
& \textbf{$\alpha$-Prec.} $\uparrow$
& \textbf{$\beta$-Rec.} $\uparrow$
& \textbf{$\alpha,\beta$-F1} $\uparrow$
& \\
\midrule

\multirow{7}{*}{\textbf{ABIDE}}
& Vector & \xmark & \xmark & \xmark & \xmark
& $0.52 \pm 0.05$ & $0.18 \pm 0.03$ & $0.26 \pm 0.03$ & $\mathbf{0.04 \pm 0.00}$ \\
& Graph & \xmark & \xmark & \xmark & \xmark
& $0.70 \pm 0.05$ & $0.46 \pm 0.06$ & $0.56 \pm 0.05$ & $0.36 \pm 0.01$ \\
& Graph & \cmark & \xmark & \xmark & \xmark
& $0.72 \pm 0.05$ & $0.50 \pm 0.06$ & $0.59 \pm 0.05$ & $0.39 \pm 0.01$ \\
& Graph & \xmark & \cmark & \xmark & \xmark
& $0.73 \pm 0.06$ & $0.48 \pm 0.06$ & $0.58 \pm 0.06$ & $0.38 \pm 0.01$ \\
& Graph & \xmark & \xmark & \cmark & \xmark
& $0.71 \pm 0.05$ & $0.50 \pm 0.04$ & $0.59 \pm 0.03$ & $0.38 \pm 0.00$ \\
& Graph & \cmark & \cmark & \cmark & \xmark
& $0.77 \pm 0.06$ & $0.54 \pm 0.05$ & $0.63 \pm 0.04$ & $0.38 \pm 0.06$ \\
& \cellcolor{LightBlue} Graph & \cellcolor{LightBlue}\cmark & \cellcolor{LightBlue}\cmark & \cellcolor{LightBlue}\cmark & \cellcolor{LightBlue}\cmark
& \cellcolor{LightBlue}$\mathbf{0.80 \pm 0.04}$ & \cellcolor{LightBlue}$\mathbf{0.59 \pm 0.04}$ & \cellcolor{LightBlue}$\mathbf{0.68 \pm 0.03}$ &
\cellcolor{LightBlue}$0.38 \pm 0.02$ \\
\midrule

\multirow{7}{*}{\textbf{ADNI}}
& Vector & \xmark & \xmark & \xmark & \xmark
& $0.52 \pm 0.01$ & $0.25 \pm 0.04$ & $0.34 \pm 0.04$ & $\mathbf{0.07 \pm 0.00}$ \\
& Graph & \xmark & \xmark & \xmark & \xmark
& $0.61 \pm 0.08$ & $0.54 \pm 0.08$ & $0.57 \pm 0.06$ & $0.72 \pm 0.01$ \\
& Graph & \cmark & \xmark & \xmark & \xmark
& $0.63 \pm 0.06$ & $0.72 \pm 0.10$ & $0.67 \pm 0.06$ & $0.74 \pm 0.04$ \\
& Graph & \xmark & \cmark & \xmark & \xmark
& $0.62 \pm 0.14$ & $0.78 \pm 0.07$ & $0.69 \pm 0.11$ & $0.76 \pm 0.02$ \\
& Graph & \xmark & \xmark & \cmark & \xmark
& $0.64 \pm 0.10$ & $0.76 \pm 0.08$ & $0.69 \pm 0.07$ & $0.75 \pm 0.02$ \\
& Graph & \cmark & \cmark & \cmark & \xmark
& $0.68 \pm 0.12$ & $0.80 \pm 0.04$ & $0.74 \pm 0.09$ & $0.72 \pm 0.01$ \\
& \cellcolor{LightBlue} Graph & \cellcolor{LightBlue}\cmark & \cellcolor{LightBlue}\cmark & \cellcolor{LightBlue}\cmark & \cellcolor{LightBlue}\cmark
& \cellcolor{LightBlue}$\mathbf{0.71 \pm 0.13}$ & \cellcolor{LightBlue}$\mathbf{0.84 \pm 0.06}$ & \cellcolor{LightBlue}$\mathbf{0.76 \pm 0.06}$ &
\cellcolor{LightBlue}$0.86 \pm 0.28$ \\
\midrule

\multirow{7}{*}{\textbf{OASIS-3}}
& Vector & \xmark & \xmark & \xmark & \xmark
& $0.58 \pm 0.09$ & $0.26 \pm 0.04$ & $0.36 \pm 0.05$ & $\mathbf{0.05 \pm 0.00}$ \\
& Graph & \xmark & \xmark & \xmark & \xmark
& $0.67 \pm 0.08$ & $0.44 \pm 0.07$ & $0.54 \pm 0.08$ & $0.60 \pm 0.01$ \\
& Graph & \cmark & \xmark & \xmark & \xmark
& $0.68 \pm 0.08$ & $0.44\pm 0.07$ & $0.54 \pm 0.06$ & $0.61 \pm 0.02$ \\
& Graph & \xmark & \cmark & \xmark & \xmark
& $0.70 \pm 0.06$ & $0.46 \pm 0.06$ & $0.55 \pm 0.06$ & $0.60 \pm 0.02$ \\
& Graph & \xmark & \xmark & \cmark & \xmark
& $0.70 \pm 0.08$ & $0.47 \pm 0.05$ & $0.56 \pm 0.06$ & $0.61 \pm 0.02$ \\
& Graph & \cmark & \cmark & \cmark & \xmark
& $0.72 \pm 0.07$ & $0.46 \pm 0.05$ & $0.56 \pm 0.06$ & $0.59 \pm 0.00$ \\
& \cellcolor{LightBlue} Graph & \cellcolor{LightBlue}\cmark & \cellcolor{LightBlue}\cmark & \cellcolor{LightBlue}\cmark & \cellcolor{LightBlue}\cmark
& \cellcolor{LightBlue}$\mathbf{0.75 \pm 0.07}$ & \cellcolor{LightBlue}$\mathbf{0.47 \pm 0.06}$ & \cellcolor{LightBlue}$\mathbf{0.57 \pm 0.06}$ &
\cellcolor{LightBlue}$0.63 \pm 0.00$ \\
\bottomrule
\vspace{-2em}
\end{tabular}
\end{table*}

MAGNET relies on two core pillars: a validity-preserving chart for mathematical feasibility~\cite{collas2025riemannian} and a graph-structured denoiser for relational inductive biases~\cite{dadi2019benchmarking, cui2022braingb, mohammadi2024graph}. Tab.~\ref{tab:backbone_controller_ablation} isolates these components under a fixed six-step generation budget~\cite{song2020denoising} to evaluate backbone necessity and controller synergy in resolving coarse-label deficits.

\textbf{The structural gains of the graph backbone.} Switching from the vector denoiser to the graph backbone yields substantial gains across all datasets. While the pullback chart guarantees mathematical validity, it lacks a mechanism for modeling ROI relationships; the graph denoiser supplies this crucial inductive bias, bridging mathematical soundness with anatomical relevance~\cite{dadi2019benchmarking, cui2022braingb, mohammadi2024graph}.

\textbf{Coupled controller dynamics.} The control components: subtype conditioning, amortized bridge, relevance-preserving noise, and classifier-free guidance~\cite{ho2022classifier} act as interdependent latent transforms rather than strictly additive modules. Individually, the amortized bridge provides the largest improvement, particularly on the highly diverse ADNI dataset ($\alpha,\beta$-F1 improves from 0.57 to 0.69). 

\textbf{Generalization across atlas resolutions.} To test whether MAGNET's gains depend on the compact parcellation, we repeat the ABIDE benchmark on two higher-resolution atlases: AAL116, an anatomically defined 116-region parcellation, and CC200, a functionally derived parcellation with approximately 200 regions. Both experiments use a common quality-controlled ABIDE cohort of 831 scans with CPAC preprocessing, band-pass filtering, and no global signal regression, together with the same 10-split group-aware evaluation protocol. These settings increase the dimensionality of the generation problem while varying the construction of the atlas. MAGNET remains the strongest generator at higher atlas resolutions, achieving the best $\alpha,\beta$-F1 on AAL116 and CC200, with relative gains of $17\%$ and $14\%$ over DiffeoCFM, respectively (Tab.~\ref{tab:rebuttal_high_res}).

\begin{table}[!t]
\centering
\scriptsize
\setlength{\tabcolsep}{3pt}
\renewcommand{\arraystretch}{1.15}
\caption{\footnotesize
\textbf{Generalization to higher-resolution ABIDE parcellations.} We evaluate MAGNET and the competing generators on AAL116, an anatomically defined 116-region atlas, and CC200, a functionally derived parcellation with approximately 200 regions. Both settings use the same quality-controlled 831-scan ABIDE cohort and 10-split group-aware evaluation protocol. We report class-conditional support fidelity, downstream classifier-based utility, and wall-clock sampling time. Results are mean $\pm$ standard deviation across splits. The Real Data row provides the empirical reference, and bold denotes the best generative model within each atlas.
}
\label{tab:rebuttal_high_res}
\begin{tabular}{llcccccc}
\toprule
\textbf{Atlas} & \textbf{Method}
& \textbf{$\alpha$-Prec.$\uparrow$}
& \textbf{$\beta$-Rec.$\uparrow$}
& \textbf{$\alpha,\beta$-F1$\uparrow$}
& \textbf{ROC-AUC$\uparrow$}
& \textbf{F1$\uparrow$}
& \textbf{Time (s)$\downarrow$} \\
\midrule
\multirow{7}{*}{\textbf{AAL116}}
& \cellcolor{LightGray} Real Data       & \cellcolor{LightGray} 0.78 $\pm$ 0.06 & \cellcolor{LightGray} 0.77 $\pm$ 0.09 & \cellcolor{LightGray} 0.77 $\pm$ 0.05 & \cellcolor{LightGray} 0.72 $\pm$ 0.09 & \cellcolor{LightGray} 0.63 $\pm$ 0.07 & \cellcolor{LightGray} N/A \\
& TriangCFM       & 0.34 $\pm$ 0.09 & 0.02 $\pm$ 0.01 & 0.04 $\pm$ 0.02 & 0.49 $\pm$ 0.05 & 0.42 $\pm$ 0.19 & 3.80 $\pm$ 0.21 \\
& TriangDDPM      & 0.41 $\pm$ 0.05 & 0.12 $\pm$ 0.07 & 0.37 $\pm$ 0.08 & 0.53 $\pm$ 0.08 & 0.48 $\pm$ 0.12 & 3.21 $\pm$ 0.32 \\
& DiffeoGauss     & 0.55 $\pm$ 0.05 & 0.35 $\pm$ 0.07 & 0.43 $\pm$ 0.05 & 0.53 $\pm$ 0.07 & 0.32 $\pm$ 0.12 & 1.94 $\pm$ 0.39 \\
& DiffeoCFM       & 0.58 $\pm$ 0.12 & 0.62 $\pm$ 0.08 & 0.59 $\pm$ 0.05 & 0.65 $\pm$ 0.08 & 0.54 $\pm$ 0.13 & 4.16 $\pm$ 0.24 \\
& GDSS-proj       & 0.05 $\pm$ 0.02 & 0.21 $\pm$ 0.01 & 0.08 $\pm$ 0.03 & 0.51 $\pm$ 0.03 & 0.34 $\pm$ 0.09 & 1.43 $\pm$ 0.12 \\
& \cellcolor{LightBlue} \textbf{MAGNET} & \cellcolor{LightBlue} \textbf{0.78 $\pm$ 0.05} & \cellcolor{LightBlue} \textbf{0.62 $\pm$ 0.08} & \cellcolor{LightBlue} \textbf{0.69 $\pm$ 0.05} & \cellcolor{LightBlue} \textbf{0.68 $\pm$ 0.06} & \cellcolor{LightBlue} \textbf{0.58 $\pm$ 0.07} & \cellcolor{LightBlue} \textbf{1.12 $\pm$ 0.07} \\
\midrule
\multirow{7}{*}{\textbf{CC200}}
& \cellcolor{LightGray} Real Data       & \cellcolor{LightGray} 0.79 $\pm$ 0.06 & \cellcolor{LightGray} 0.66 $\pm$ 0.06 & \cellcolor{LightGray} 0.71 $\pm$ 0.02 & \cellcolor{LightGray} 0.76 $\pm$ 0.02 & \cellcolor{LightGray} 0.68 $\pm$ 0.04 & \cellcolor{LightGray} N/A \\
& TriangCFM       & 0.39 $\pm$ 0.02 & 0.12 $\pm$ 0.05 & 0.18 $\pm$ 0.06 & 0.48 $\pm$ 0.03 & 0.47 $\pm$ 0.09 & 4.21 $\pm$ 0.13 \\
& TriangDDPM      & 0.43 $\pm$ 0.02 & 0.21 $\pm$ 0.12 & 0.28 $\pm$ 0.11 & 0.55 $\pm$ 0.02 & 0.52 $\pm$ 0.11 & 3.71 $\pm$ 0.16 \\
& DiffeoGauss     & 0.57 $\pm$ 0.03 & 0.41 $\pm$ 0.02 & 0.48 $\pm$ 0.02 & 0.58 $\pm$ 0.01 & 0.41 $\pm$ 0.08 & 1.73 $\pm$ 0.19 \\
& DiffeoCFM       & 0.65 $\pm$ 0.05 & 0.53 $\pm$ 0.03 & 0.58 $\pm$ 0.03 & 0.63 $\pm$ 0.04 & 0.54 $\pm$ 0.07 & 3.78 $\pm$ 0.13 \\
& GDSS-proj       & 0.08 $\pm$ 0.03 & 0.26 $\pm$ 0.11 & 0.12 $\pm$ 0.04 & 0.55 $\pm$ 0.06 & 0.41 $\pm$ 0.04 & 1.56 $\pm$ 0.07 \\
& \cellcolor{LightBlue} \textbf{MAGNET} & \cellcolor{LightBlue} \textbf{0.77 $\pm$ 0.09} & \cellcolor{LightBlue} \textbf{0.58 $\pm$ 0.09} & \cellcolor{LightBlue} \textbf{0.66 $\pm$ 0.06} & \cellcolor{LightBlue} \textbf{0.68 $\pm$ 0.05} & \cellcolor{LightBlue} \textbf{0.59 $\pm$ 0.05} & \cellcolor{LightBlue} \textbf{1.42 $\pm$ 0.05} \\
\bottomrule
\end{tabular}
\vspace{-2em}
\end{table}

\textbf{Biological plausibility and clinical structure.}
Beyond distributional fidelity, we evaluate neuroscientific plausibility through cortical hub preservation and network topology, established descriptors of biologically meaningful brain organization~\cite{van2013network,rubinov2010complex}. Fig.~\ref{fig:connectome_summary} measures hub preservation, while Figs.~\ref{fig:connectome_class0}-\ref{fig:connectome_class1} compare connectivity over fixed, real edge patterns. MAGNET further achieves the lowest subclass covariance error on ADNI and OASIS-3, including a $33$--$57\%$ reduction in $\lambda$-MSE on ADNI (Tab.~\ref{tab:zero_shot_metrics}), while its gains persist on the higher-resolution AAL116 and CC200 atlases (Tab.~\ref{tab:rebuttal_high_res}).

\vspace{-1.5em}
\section{Conclusion}\label{sec:conclusion}
\vspace{-1.5em}
Geometric validity is necessary for conditional connectome synthesis, but it is not sufficient for structural recovery. In this paper, we introduced MAGNET, a structure-aware generator that uses the normalized-Cholesky chart for validity and focuses the modeling problem on recovering class-conditional organization inside the valid correlation manifold. MAGNET combines an ROI-level graph denoiser with class-anchored subtype conditioning, an amortized structural bridge, and relevance-preserving corruption to reduce ambiguity from coarse labels, global structural context, and condition-relevant coordinates. Across three cohorts, MAGNET improves distributional fidelity, support recovery, zero-shot clinical alignment, and few-step sampling efficiency over strong baselines. MAGNET combines an ROI-level graph denoiser with class-anchored subtype conditioning, an amortized structural bridge, and relevance-preserving corruption to address ambiguity from coarse labels, missing global context, and corruption of condition-relevant coordinates. Future directions include multi-modal imaging inputs, longitudinal trajectory generation, and evaluation of synthetic cohort utility in downstream clinical prediction. More broadly, the amortized bridge principle of distilling structural summaries into fast inference-time predictors could be extended to other constrained generative settings beyond connectomes.

\noindent
\textbf{Acknowledgment}
Prof. Lokhande acknowledges support from University at Buffalo startup funds, an Adobe Research Gift, an NVIDIA Academic Grant, and the National Center for Advancing Translational Sciences of the NIH (award UM1TR005296 to the University at Buffalo). Prof. Kim's contribution to this project is supported by RS-2019-II191906 (Artificial Intelligence Graduate School, POSTECH).

\bibliographystyle{plainnat}
\bibliography{neurips_2026}

\newpage
\appendix


\section{Notations}\label{app:notations}
We summarize the main notation used.
\begin{center}
\renewcommand{\arraystretch}{1.3}
\begin{tabular}{ll}
\toprule
\textbf{Symbol} & \textbf{Meaning} \\
\midrule

$\chi(C)$ & Normalized-Cholesky chart coordinate of $C$ \\

$x_t$ & Noisy latent state at diffusion time $t$ \\

$s$ & Train-only subtype token within class $y$ \\

$b_t$ & Amortized bridge state at time $t$ \\

$\varphi(\cdot)$ & Structural feature extractor used by the bridge and spectral loss \\

$B_\psi(x_t,t,s)$ & Amortized bridge predictor \\

$b^\star(x_0,s,t)$ & Teacher bridge target \\

$L_{\mathrm{bridge}}$ & Bridge distillation loss \\

$\alpha,\beta\text{-F1}$ & Harmonic mean of $\alpha$-precision and $\beta$-recall \\

$\mathcal{C}_d$ & Set of $d \times d$ valid correlation connectomes \\

$C \in \mathcal{C}_d$ & A correlation connectome \\

$y \in \{0,1\}$ & Class label \\

$P_{\mathrm{val}}^y$ & Held-out real distribution for class $y$ \\

$Q_m^y$ & Generated distribution for method $m$, class $y$ \\

$\Phi(C) = \log C$ & Log-Euclidean map to $\mathrm{Sym}(d)$ \\

$d_{\mathrm{LE}}(C_1, C_2)$ & Log-Euclidean distance (Eq.~\ref{eq:le-distance}) \\

$\mu^{\log,\mathrm{val}}_y$ & Mean of $\log C$ under $P_{\mathrm{val}}^y$ \\

$\bar{C}_{\mathrm{LE},y}^{\mathrm{val}}$ & Log-Euclidean Fréchet mean \\

$r_y(C)$ & Radial distance from real mean (Eq.~\ref{eq:radial-coordinate}) \\

$\nu_y^{\mathrm{val}}, \nu_y^m$ & Radial pushforward distributions \\

$W_{\mathrm{rad}}^{m,y}$ & Radial Wasserstein distance (Eq.~\ref{eq:radial-w1}) \\

$E_{\mathrm{rad}}^{m,y}$ & Radial quantile calibration error (Eq.~\ref{eq:radial-logerr}) \\

$z_i^{\mathrm{val},y}$ & Log-centered vectorized connectome \\

$W_y$ & PCA basis (top 2 directions from real data) \\

$u_y(C)$ & 2D chart coordinate (Eq.~\ref{eq:tangent-chart}) \\

$\mathrm{ISR}_{m,y}$ & Inside-support rate (Eq.~\ref{eq:isr}) \\

$C_{m,t}^y$ & Generated sample at reverse-time $t$ \\

$\mathrm{ISR}_{m,y}(t)$ & Time-dependent inside-support rate (Eq.~\ref{eq:isr-time}) \\

$\bar{C}_{m,t}^y$ & Generated mean connectome at time $t$ \\

$\bar{C}_{\mathrm{val}}^y$ & Real mean connectome \\

$M_y$ & Real-defined edge mask \\

$\mathrm{MAE}_{m,t}^y$ & Mean absolute error on mask (Eq.~\ref{eq:masked-mae}) \\

$\mathrm{Corr}_{m,t}^y$ & Correlation on masked edges (Eq.~\ref{eq:masked-corr}) \\

\bottomrule
\end{tabular}
\end{center}

\section{Proofs of Formal Results}
\label{app:proofs}

This appendix collects the proofs of the formal results stated in Section~\ref{sec:methodology}. The main text uses these statements to justify the design of class-anchored conditioning, the amortized structural bridge, and relevance-preserving corruption while keeping the method section visually compact.

\subsection{Proof of Proposition~\ref{prop:anchored_risk}}
\label{app:proof_anchored}

The result follows from the $L^2$-projection property of conditional expectation which is a generalization of the Rao-Blackwell theorem to the squared-loss denoising setting. We include the derivation for completeness. The key non-trivial empirical content is that the gain is \emph{strictly positive} whenever $S$ encodes structure that $Y$ collapses. Fig.~\ref{fig:intro_2d} also confirms visually for all three cohorts.

\begin{proof}
For squared loss, the Bayes estimators are the conditional means: $f^\star(X_t,Y) = \mathbb{E}[X_0 \mid X_t, Y]$ and 
$g^\star(X_t,Y,S) = \mathbb{E}[X_0 \mid X_t, Y, S]$. We decompose the coarser residual as
\begin{align*}
X_0 - \mathbb{E}[X_0 \mid X_t,Y]
= \underbrace{X_0 - \mathbb{E}[X_0 \mid X_t,Y,S]}_{A}
+ \underbrace{\mathbb{E}[X_0 \mid X_t,Y,S] - \mathbb{E}[X_0 \mid X_t,Y]}_{B}.
\end{align*}
Since $A$ is the residual from the finer predictor, $\mathbb{E}[A \mid X_t,Y,S]=0$. Because $B$ is  $\sigma(X_t,Y,S)$-measurable, the cross term $\mathbb{E}[\langle A,B\rangle] = 0$. Taking squared norms and expectations yields Eq.~\eqref{eq:anchored_identity} directly. Strict positivity holds whenever $B \not\equiv 0$ with positive probability, i.e., whenever $S$ resolves within-class structure unresolved by $Y$.
\end{proof}

\textbf{Remark (empirical content).} The identity converts the entire visual motivation of Fig.~\ref{fig:intro_2d} into a concrete optimality criterion. The gain is zero iff the subtype provides no additional information beyond the coarse label. The heterogeneity entropy $\mathbf{H}$ analysis in App.~\ref{appsubsec:conditioning} confirms this is non-zero and scales with dataset complexity.

\subsection{Proof of Proposition~\ref{thm:bridge_risk}}
\label{app:proof_bridge}

\begin{proof}
The risk improvement identity Eq.~\eqref{eq:bridge_help} follows by the same $L^2$-projection argument as Prop.~\ref{prop:anchored_risk}, with $(U)$ in place of $(X_t,Y)$ and $(U,B^\star)$ in place of $(X_t,Y,S)$; we omit the repetition. The result is the amortization bound of Eq.~\eqref{eq:bridge_bound}, which is proved below.
\begin{align*}
f^\star(U) &= \mathbb E[X_0\mid U],\\
g^\star(U,B^\star) &= \mathbb E[X_0\mid U,B^\star].
\end{align*}
Substituting these into Eq.~\ref{eq:bridge_risks} we get
\begin{align*}
\mathcal R_U^\star
&=
\mathbb E\!\left[\|X_0-\mathbb E[X_0\mid U]\|_2^2\right],\\
\mathcal R_{U,B^\star}^\star
&=
\mathbb E\!\left[\|X_0-\mathbb E[X_0\mid U,B^\star]\|_2^2\right].
\end{align*}

Now we decompose the coarser residual as
\begin{align*}
X_0-\mathbb E[X_0\mid U]
&=
\underbrace{X_0-\mathbb E[X_0\mid U,B^\star]}_{A}
+
\underbrace{\mathbb E[X_0\mid U,B^\star]-\mathbb E[X_0\mid U]}_{B}.
\end{align*}
Now, upon squaring we get
\begin{align*}
\|X_0-\mathbb E[X_0\mid U]\|_2^2
=
\|A\|_2^2+\|B\|_2^2+2\langle A,B\rangle.
\end{align*}
We now condition on $(U,B^\star)$. Since $A = X_0-\mathbb E[X_0\mid U,B^\star]$, we have
\begin{align*}
\mathbb E[A\mid U,B^\star]=0.
\end{align*}
Because $B$ is measurable with respect to $(U,B^\star)$, we get
\begin{align*}
\mathbb E[\langle A,B\rangle \mid U,B^\star]
=
\left\langle \mathbb E[A\mid U,B^\star],\,B \right\rangle
=
0.
\end{align*}
Now, we take the conditional expectation followed by an expectation again, we get
\begin{align*}
\mathcal R_U^\star
=
\mathcal R_{U,B^\star}^\star
+
\mathbb E\!\left[
\left\|
\mathbb E[X_0\mid U,B^\star]-\mathbb E[X_0\mid U]
\right\|_2^2
\right],
\end{align*}
which proves Eq.~\ref{eq:bridge_help}.

We now prove the amortization bound in Eq.~\ref{eq:bridge_bound}. Let $\eta(u,b)=\mathbb E[X_0\mid U=u,B^\star=b]$ and we define the plug-in predictor $\widehat X_0(U)=\eta(U,B_\psi(U))$. Inserting and subtracting the bridge-aware Bayes predictor, we get:
\begin{align*}
X_0-\widehat X_0(U)
=
\underbrace{X_0-\eta(U,B^\star)}_{A}
+
\underbrace{\eta(U,B^\star)-\eta(U,B_\psi(U))}_{B}.
\end{align*}
Now, we take the square and the expectation to yield
\begin{align*}
\mathbb E\!\left[\|X_0-\widehat X_0(U)\|_2^2\right]
&=
\mathbb E\!\left[\|A\|_2^2\right]
+
\mathbb E\!\left[\|B\|_2^2\right]
+
2\,\mathbb E\!\left[\langle A,B\rangle\right].
\end{align*}
Further, we condition on $(U,B^\star,B_\psi(U))$. Since $\eta(U,B^\star)=\mathbb E[X_0\mid U,B^\star]$, the residual $A$ has zero conditional mean given $(U,B^\star)$, and therefore also given $(U,B^\star,B_\psi(U))$. Hence
\begin{align*}
\mathbb E[\langle A,B\rangle \mid U,B^\star,B_\psi(U)]
=
\left\langle
\mathbb E[A\mid U,B^\star,B_\psi(U)],
\,B
\right\rangle
=
0.
\end{align*}
Thus the cross term vanishes, and the first term is exactly $\mathcal R_{U,B^\star}^\star$, so
\begin{align*}
\mathbb E\!\left[\|X_0-\widehat X_0(U)\|_2^2\right]-\mathcal R_{U,B^\star}^\star
=
\mathbb E\!\left[\|\eta(U,B^\star)-\eta(U,B_\psi(U))\|_2^2\right].
\end{align*}
By the assumed Lipschitz property of $\eta$ in its bridge argument, we get
\begin{align*}
\|\eta(U,B^\star)-\eta(U,B_\psi(U))\|_2
\le
L\,\|B^\star-B_\psi(U)\|_2.
\end{align*}
Again, upon squaring and taking the expectation we get,
\begin{align*}
\mathbb E\!\left[\|\eta(U,B^\star)-\eta(U,B_\psi(U))\|_2^2\right]
\le
L^2\,\mathbb E\!\left[\|B_\psi(U)-B^\star\|_2^2\right].
\end{align*}
By finally combining the last two equations, we prove Eq.~\ref{eq:bridge_bound}.
\end{proof}

\textbf{Lipschitz condition on the Bayes regressor.} The $L$-Lipschitz condition on $\eta(\cdot,b)=\mathbb{E}[X_0\mid U,B^\star=b]$ is a regularity assumption on the underlying data distribution: it requires that the population-level conditional expectation varies smoothly with the bridge input. This is a standard smoothness assumption in nonparametric estimation~\cite{miyasawa1961empirical}. Separately, to confirm that the error bound in Eq.~\eqref{eq:bridge_bound} remains finite in practice, we note that the deployed bridge predictor $B_\psi$, parameterized as a $D$-layer MLP with weight matrices $\{W_d\}$ and $\sigma$-Lipschitz activations, satisfies $\mathrm{Lip}(B_\psi) \le \sigma^{D-1}\prod_{d=1}^D\|W_d\|_{\mathrm{op}}$, which is finite under $\ell_2$ weight regularization~\cite{bartlett2017spectrally,miyato2018spectral}. This confirms numerical tractability but does not substitute for the distributional regularity assumption on $\eta$. Because $b^\star$ is a convex combination of atoms from a compact set $\mathcal A_s$ with diameter $\delta_s = \max_{j,k}\|a_{s,j}-a_{s,k}\|_2 < \infty$, the bridge input is bounded: $\|b^\star\|_2 \le \max_k \|a_{s,k}\|_2$. Because $b^\star$ is a convex combination of atoms from the finite set $\mathcal A_s$, the bridge input is restricted to the compact domain $\operatorname{conv}(\mathcal A_s)$ and is therefore bounded. Compactness alone does not imply Lipschitz continuity of the Bayes regressor; accordingly, Eq.~\eqref{eq:bridge_bound} explicitly assumes that $\eta(u,b)=\mathbb{E}[X_0\mid U=u,B^\star=b]$ is $L$-Lipschitz in $b$ on the relevant bridge domain. A sufficient condition is that $\eta$ be continuously differentiable with respect to $b$ on a neighborhood of this domain with uniformly bounded Jacobian. The bounded bridge domain therefore controls where this regularity assumption is required, rather than implying the assumption itself. This is a distributional property and cannot be inferred from the Lipschitz constant of the deployed approximator $B_\psi$. 

Although our amortized bridge is trained against a teacher target \textit{(drawing directly on the principles of knowledge distillation~\cite{hinton2015distilling})}, our structural objective is entirely different. In traditional distillation, the capacity is shrunk in the same output space, while we use the structure for addressing a lack of context, to provide a macroscopic overview of the conditional probability distribution which cannot be captured by an isolated state $X_t$ via few-step sampling~\cite{ho2022classifier,chen2022analog}. Equation~\ref{eq:bridge_bound} represents this difference by constraining the amortization tracking error rather than shrinking the capacity. As shown in Appendix~\ref{appsubsec:bridge}, tracking the teacher target effectively leads to increased generative accuracy, allowing us to maintain structural properties of the teacher bridge with up to approximately $25-32\times$ savings in sampling cost.

\textbf{Note.} i) Prop. \ref{prop:adaptive_surrogate} and \ref{prop:anchored_risk} are \emph{design-justification} results, not end-to-end generalization guarantees with the purpose of showing the information-theoretic direction of MAGNET: adding a subtype variable or a structural bridge can only reduce Bayes denoising risk, and cannot hurt in the oracle limit. This rules out the possibility that the architecture is fundamentally misspecified even under ideal conditions, which is a nontrivial baseline to establish before empirical validation. ii) For the learned regime, \textit{we deliberately rely on empirical evidence}: Tab.~\ref{tab:bridge_ablations} shows that lower amortization tracking error directly correlates
with higher $\alpha,\beta$-F1, empirically validating the amortization bound (Eq.~\ref{eq:bridge_bound}) in the regime where it matters. Appendix~\ref{appsubsec:conditioning} shows that the class-anchored conditioning gain scales with dataset heterogeneity
entropy $\mathbf{H}$, empirically validating that the learned subtypes capture within-class structure rather than being a regularization artifact. App.~\ref{app:proof_bridge} discusses why finiteness of $L$ is guaranteed on the relevant domain by compactness of the atom set $\mathcal{A}_s$, and we clarify there that \textit{this is a distributional regularity condition rather than a claim about the deployed network}.

\subsection{A surrogate result for relevance-preserving corruption}
\label{app:proof_corruption}

The adaptive corruption rule in Section~\ref{subsec:adaptive_corruption} is data-driven: the relevant directions are estimated from the conditional--unconditional guidance difference introduced in Section~\ref{subsec:adaptive_corruption}. To formalize the underlying design principle, we analyze a stylized Gaussian surrogate.

\begin{lemma}[Adaptive corruption preserves class separation under a Gaussian surrogate]
\label{prop:adaptive_surrogate}
Let the clean latent decompose as $X_0=(U,V)$, where $U\in\mathbb R^r$ is the class-relevant subspace and $V\in\mathbb R^q$ is the nuisance subspace. Assume that conditioned on $Y=y$, $U\mid Y=y \sim \mathcal N(\mu_y,\Sigma_U)$ and $V\mid Y=y \sim \mathcal N(0,\Sigma_V)$, with $\Sigma_U \succ 0$, $\Sigma_V \succ 0$, and $U \perp V \mid Y$. For fixed diffusion time $t$, consider the anisotropic corruption model
\begin{equation}
X_t
=
\sqrt{\bar\alpha_t}\,X_0
+
\sqrt{1-\bar\alpha_t}\,\xi,
\qquad
\xi
\sim
\mathcal N\!\left(
0,
\begin{bmatrix}
\sigma_{\mathrm{rel}}^2 I_r & 0\\
0 & \sigma_{\mathrm{nui}}^2 I_q
\end{bmatrix}
\right).
\label{eq:block_corruption_appendix}
\end{equation}
Let $\Delta=\mu_1-\mu_0$, and define the class-separation functional
\begin{equation}
D_t(\sigma_{\mathrm{rel}}^2)
=
(\mu_{t,1}-\mu_{t,0})^\top \Sigma_t^{-1}(\mu_{t,1}-\mu_{t,0}),
\label{eq:Dt_appendix}
\end{equation}
where $\mu_{t,y}$ and $\Sigma_t$ are the class-conditional mean and covariance of $X_t$. Then
\begin{equation}
D_t(\sigma_{\mathrm{rel}}^2)
=
\bar\alpha_t\,
\Delta^\top
\left(
\bar\alpha_t\Sigma_U+(1-\bar\alpha_t)\sigma_{\mathrm{rel}}^2 I_r
\right)^{-1}
\Delta,
\label{eq:Dt_closed_appendix}
\end{equation}
which is strictly decreasing in $\sigma_{\mathrm{rel}}^2$ whenever $\Delta\neq 0$.
\end{lemma}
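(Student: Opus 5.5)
The plan is to compute the class-conditional moments of $X_t$ directly, exploit the block structure to reduce the Mahalanobis form to the relevant block, and then establish monotonicity by a matrix-derivative (or Loewner-order) argument. We assume, as is implicit in the corruption model, that the noise $\xi$ is independent of $(X_0,Y)$.

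\textbf{Step 1: moments of $X_t$.} Since $\xi$ is independent of $X_0$ given $Y$, we have
\[
\mu_{t,y}=\sqrt{\bar\alpha_t}\,(\mu_y,0).
\]
Hence the mean difference is
\[
\mu_{t,1}-\mu_{t,0}=\sqrt{\bar\alpha_t}\,(\Delta,0).
\]
By the conditional independence $U\perp V\mid Y$ and the block-diagonal noise covariance, the class-conditional covariance is
\[
\Sigma_t=\mathrm{blockdiag}\bigl(M(\sigma_{\mathrm{rel}}^2),\,N\bigr),
\]
with
\[
M(s)=\bar\alpha_t\Sigma_U+(1-\bar\alpha_t)s I_r,\qquad N=\bar\alpha_t\Sigma_V+(1-\bar\alpha_t)\sigma_{\mathrm{nui}}^2 I_q .
\]
This covariance is the same for both classes, which is what makes $D_t$ in Eq.~\eqref{eq:Dt_appendix} well defined with a single $\Sigma_t$. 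Both blocks are positive definite because $\Sigma_U,\Sigma_V\succ 0$; we treat $\bar\alpha_t\in(0,1)$ and $s\ge 0$.

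\textbf{Step 2: reduction.} The inverse of a block-diagonal matrix is block-diagonal, so $\Sigma_t^{-1}=\mathrm{blockdiag}(M^{-1},N^{-1})$. The zero nuisance component of the mean difference kills the $N^{-1}$ block, leaving
\[
D_t=\bar\alpha_t\,\Delta^\top M(\sigma_{\mathrm{rel}}^2)^{-1}\Delta .
\]
This is exactly Eq.~\eqref{eq:Dt_closed_appendix}.

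\textbf{Step 3: strict monotonicity.} This is the only step needing care. Differentiate using
\[
\frac{d}{ds}M(s)^{-1}=-M^{-1}\Bigl(\frac{dM}{ds}\Bigr)M^{-1}=-(1-\bar\alpha_t)M^{-2},
\]
which gives
\[
\frac{dD_t}{ds}=-\bar\alpha_t(1-\bar\alpha_t)\,\|M(s)^{-1}\Delta\|_2^2 .
\]
This derivative is strictly negative whenever $\Delta\neq 0$, since $M^{-1}$ is nonsingular and $\bar\alpha_t\in(0,1)$.

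An equivalent derivative-free route diagonalizes $\Sigma_U=Q\Lambda Q^\top$ and writes
\[
D_t=\bar\alpha_t\sum_i \frac{(Q^\top\Delta)_i^2}{\bar\alpha_t\lambda_i+(1-\bar\alpha_t)s}.
\]
Each summand is nonincreasing in $s$, and at least one is strictly decreasing because some $(Q^\top\Delta)_i\neq 0$.

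The main obstacle is only bookkeeping: noting that the strictness claim requires $\bar\alpha_t<1$, i.e.\ nonzero noise. At $\bar\alpha_t=1$ the statement degenerates, and $D_t$ is constant in $\sigma_{\mathrm{rel}}^2$.
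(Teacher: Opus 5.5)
Your proposal is correct and matches the paper's proof essentially step for step: you compute the block-diagonal class-conditional moments of $X_t$, reduce the quadratic form to the relevant block, and differentiate $M(s)^{-1}$ to get $\frac{d}{ds}D_t=-\bar\alpha_t(1-\bar\alpha_t)\,\Delta^\top M(s)^{-2}\Delta<0$. Your remarks that $\xi$ must be independent of $(X_0,Y)$ and that strictness requires $\bar\alpha_t\in(0,1)$ state explicitly assumptions the paper leaves implicit.
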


\begin{proof}
Under Eq.~\ref{eq:block_corruption_appendix}, the conditional distribution of $X_t$ given $Y=y$ is Gaussian with mean
\begin{align*}
\mu_{t,y}
=
\sqrt{\bar\alpha_t}
\begin{bmatrix}
\mu_y\\
0
\end{bmatrix}
\end{align*}
and covariance
\begin{align*}
\Sigma_t
=
\begin{bmatrix}
\bar\alpha_t\Sigma_U+(1-\bar\alpha_t)\sigma_{\mathrm{rel}}^2 I_r & 0\\
0 & \bar\alpha_t\Sigma_V+(1-\bar\alpha_t)\sigma_{\mathrm{nui}}^2 I_q
\end{bmatrix}.
\end{align*}
Since only the relevant block carries class-dependent means,
\begin{align*}
\mu_{t,1}-\mu_{t,0}
=
\sqrt{\bar\alpha_t}
\begin{bmatrix}
\Delta\\
0
\end{bmatrix}.
\end{align*}
Substituting these expressions into Eq.~\ref{eq:Dt_appendix} yields Eq.~\ref{eq:Dt_closed_appendix}.

To prove monotonicity, write $A=\bar\alpha_t\Sigma_U$, $s=\sigma_{\mathrm{rel}}^2$, and $\beta=1-\bar\alpha_t$. Then
\begin{align*}
D_t(s)=\bar\alpha_t\,\Delta^\top(A+\beta s I_r)^{-1}\Delta.
\end{align*}
Differentiate with respect to $s$. Using the matrix derivative identity
\begin{align*}
\frac{d}{ds}(A+\beta s I_r)^{-1}
=
-\beta(A+\beta s I_r)^{-2},
\end{align*}
we obtain
\begin{align*}
\frac{d}{ds}D_t(s)
=
-\bar\alpha_t\beta\,
\Delta^\top(A+\beta s I_r)^{-2}\Delta.
\end{align*}
Because $A+\beta s I_r$ is positive definite, $(A+\beta s I_r)^{-2}$ is positive definite as well. Therefore the quadratic form above is strictly positive whenever $\Delta\neq 0$, and hence
\begin{align*}
\frac{d}{ds}D_t(s)<0
\qquad \text{for } \Delta\neq 0.
\end{align*}
This proves that $D_t(\sigma_{\mathrm{rel}}^2)$ is strictly decreasing in $\sigma_{\mathrm{rel}}^2$.
\end{proof}

\textbf{Remark.}
Lemma~\ref{prop:adaptive_surrogate} is a population-level surrogate statement that explains why protecting a sparse set of class-relevant directions is preferable to isotropic corruption when the reverse budget is small. In the full model, those directions are estimated from the guidance signal rather than assumed known.


\section{Baselines}
\label{appsec:baselines}

We compare MAGNET against baselines designed to isolate the roles of validity-preserving geometry, learned transport, and structure-aware conditional recovery. All baselines are evaluated on the same splits, preprocessing pipeline, and sampling budgets as MAGNET, so differences reflect model design rather than data handling or integration depth.

\textbf{DiffeoCFM.}
DiffeoCFM~\cite{collas2025riemannian} is the strongest prior validity-preserving baseline in our setting. It performs conditional flow matching in a global pullback chart of the correlation manifold induced by the normalized-Cholesky parameterization. This baseline tests whether validity-preserving transport alone is sufficient for conditional connectome generation.

\textbf{DiffeoGauss.}
DiffeoGauss is a structural ablation of DiffeoCFM in which the learned transport dynamics are replaced by a class-conditional Gaussian model fit directly in the same pullback chart. It therefore preserves the geometry of DiffeoCFM while removing learned flow dynamics. Comparing against DiffeoGauss isolates the contribution of learned transport beyond the validity-preserving coordinate system itself.

\textbf{TriangCFM.}
TriangCFM is a Euclidean flow-matching baseline defined directly on the strict lower-triangular coordinates, without a validity-preserving correlation-manifold chart. Generated samples must therefore be mapped back to the correlation manifold through post-hoc projection. This baseline isolates the importance of validity-preserving geometry by retaining continuous transport while removing the pullback construction.

\textbf{Real Data.}
The real-data row is not a generative model, but an empirical oracle computed by comparing disjoint real-data splits under the same evaluation pipeline. It quantifies the finite-sample ceiling induced by split-wise estimation and provides a reference point for interpreting generative fidelity and downstream utility.

\textbf{Why not generic graph generative baselines?} We do not include graph generative models such as GDSS~\cite{jo2022score} or DiGress~\cite{vignac2022digress} because they are designed for sparse graphs with discrete edge labels, while our task requires the generation of dense, continuous-valued functional connectomes. In functional connectomes, every ROI pair carries a real-valued dependency, and the output must satisfy symmetry and unit-diagonal structure. These requirements fall outside the design assumptions of such baselines. Adapting them to this setting would require a nontrivial redesign of their representation, denoising objective, and output parameterization, along with additional procedures to force a valid connectome structure. This would make the comparison hard to interpret, since any performance degradation could come either from the underlying model or from the adaptation stack needed to fit a fundamentally different data type. We therefore focus on baselines that are inherently designed for continuous connectome generation under comparable structural assumptions.

To make this limitation concrete, we also implemented an adaptation of GDSS. We trained the official GDSS adjacency score network on dense signed ROI adjacency matrices and applied post-hoc projection to force each generated output back to the correlation-matrix domain. We denote this adapted baseline as GDSS-proj. This is not a native connectome generator: the projection step is required because generic graph diffusion does not enforce positive definiteness or unit-diagonal correlation structure. The results in Tables~\ref{tab:main_brain_results}--\ref{tab:zero_shot_metrics} support the design choice behind MAGNET. Although GDSS-proj is fast, it substantially underperforms validity-aware connectome generators on support fidelity and zero-shot subclass recovery, indicating that generic graph diffusion plus post-hoc projection is insufficient for dense functional connectome synthesis.

\section{Experimental Setup}\label{subsec:exp_setup}

\textbf{Evaluation metrics.} We assess class-conditional support recovery using $\alpha$-precision (fidelity to held-out support) and $\beta$-recall (coverage of that support), summarizing the trade-off with their harmonic mean, $\alpha,\beta\text{-F1}$~\cite{alaa2022faithful}. Because generative fidelity alone does not guarantee the preservation of task-relevant structures, we also evaluate downstream utility by training a logistic regression classifier on Riemannian tangent-space features extracted from generated samples, reporting ROC-AUC and F1 on real validation connectomes.

\paragraph{Zero-shot subclass covariance metric.} To evaluate whether generated samples reproduce the dominant covariance spectrum of resolved clinical subclasses (e.g., in ADNI~\cite{weiner2017alzheimer} or OASIS-3~\cite{lamontagne2019oasis}), we report $\lambda$-MSE. For each subclass $\ell\in\mathcal{L}$, we collect held-out real connectomes $\mathcal{R}_\ell$ and generated connectomes $\mathcal{G}_\ell$ assigned to $\ell$ by the real-data-trained resolver. We project each matrix to a valid correlation manifold and vectorize its strictly upper-triangular entries via $\psi(C)=\operatorname{vech}_0(\Pi_{\mathcal{C}_d}(C))\in\mathbb{R}^{p}$, where $p=d(d-1)/2$. After mean-centering these features to form $X_\ell$ and $\widehat{X}_\ell$, we compute their top covariance eigenvalues from singular values $s_{\ell,k}$ and $\widehat{s}_{\ell,k}$ as $\lambda^{\mathrm{real}}_{\ell,k}=s_{\ell,k}^2/(n_\ell-1)$ and $\lambda^{\mathrm{gen}}_{\ell,k}=\widehat{s}_{\ell,k}^2/(\widehat{n}_\ell-1)$. For $K_\ell=\min(5,n_\ell-1,\widehat{n}_\ell-1,p)$, the metric is defined as $\lambda\text{-MSE}=|\mathcal{L}_{\mathrm{valid}}|^{-1}\sum_{\ell\in\mathcal{L}_{\mathrm{valid}}}K_\ell^{-1}\sum_{k=1}^{K_\ell}(\lambda^{\mathrm{real}}_{\ell,k}-\lambda^{\mathrm{gen}}_{\ell,k})^2$, where $\mathcal{L}_{\mathrm{valid}}$ contains subclasses with $\geq 5$ real and generated samples. Unlike CAS macro-F1, this strictly isolates the alignment of subclass-specific covariance spread rather than evaluating classifier accuracy.

\textbf{Training and sampling protocol.} All methods are evaluated across $10$ disjoint train/validation splits (10\% validation). Experiments were conducted on a single NVIDIA A6000 GPU. Wall-clock sampling times in Table~\ref{tab:main_brain_results} are measured as the mean time to generate one full split's worth of samples, averaged over 10 splits on the same hardware.  For ADNI and OASIS-3, train/test disjointness is strictly enforced at the subject level~\cite{collas2025riemannian}. MAGNET utilizes balanced class sampling, inverse subject-frequency weighting on repeated-subject cohorts, and min-SNR loss reweighting. To isolate model design rather than integration depth, we fix the computational budget: MAGNET samples via a $6$-step DDIM chain~\cite{song2020denoising}, while DiffeoCFM and its ablations use matched $6$-point ODE grids.


\section{Architectural Details}  \label{app:archdetails}

MAGNET couples the validity-preserving normalized-Cholesky chart with a region-aware denoising backbone. In the MSDL setting, each connectome has $39$ regions, giving $741$ chart coordinates. Rather than processing the noisy latent state $x_t$ as a flat vector, the denoiser first lifts it into a symmetric ROI-level matrix $A(x_t)$. The chart coordinates are written into the strict lower triangle, reflected across the diagonal, and the diagonal is set to zero. The $i$th row, denoted $a_i(x_t)$, is then interpreted as the current noisy
connectivity profile of region $i$.

Each region is represented as one token. The initial token combines the row-wise connectivity profile with anatomical identity and global conditioning: $h_i^{(0)}=W_{\mathrm{in}}a_i(x_t)+p_i+c_t.$ Here, $W_{\mathrm{in}}$ is a learned row projection, $p_i$ is a learned ROI positional embedding, and $c_t$ is a shared conditioning broadcast to all regions. The conditioning vector combines the diffusion timestep embedding, the class-anchored subtype embedding, and the amortized structural bridge embedding. In the implementation, the bridge embedding is passed through a learned gate before being injected, allowing the denoiser to modulate how strongly the global structural summary affects each reverse step.

The token sequence is processed by a pre-norm Transformer encoder with hidden size $128$, four layers, four QK-normalized attention heads, GeGLU feed-forward blocks of width $512$, and dropout $0.1$. Because attention is applied across all ROI tokens, each region updates its representation using information from the full current connectome. This gives the denoiser a relational inductive bias without hard-coding an anatomical edge graph. The learned positional embeddings preserve ROI identity, so the model remains tied to the atlas-defined ordering of brain regions.

After the transformer stack, MAGNET maps the final token matrix $H^{(L)}$ back to chart coordinates. A row-wise output projection produces a square matrix $R$, which is symmetrized and vectorized over the strict lower triangle: $E=\frac{1}{2}(R+R^\top),\ \widehat v_\theta=\operatorname{vech}_0(E).$ Thus, the denoiser predicts a diffusion target in the same normalized-Cholesky chart used by the sampler. The chart itself, rather than the denoiser output alone, is what guarantees that decoded samples are valid correlation matrices.

The same graph denoiser is used for conditional and unconditional predictions during classifier-free guidance. Let $\widehat v_c$ and $\widehat v_\varnothing$ denote the conditional and unconditional denoiser outputs, and let $g=\widehat v_c-\widehat v_\varnothing$ be the guidance direction. MAGNET retains the component of the unconditional prediction that is
orthogonal to $g$: $\widehat v_\varnothing^\perp=\widehat v_\varnothing-\frac{\langle \widehat v_\varnothing,g\rangle}{\|g\|_2^2+\varepsilon}g.$
The guided prediction is then $\widehat v_\gamma=\widehat v_\varnothing^\perp
+\gamma\,\tanh(\|g\|_2)\,g.$ The $\tanh(\|g\|_2)$ factor limits over-steering when the conditional and unconditional predictions are close, while the orthogonal residual preserves variation not aligned with the class-discriminative direction.

Finally, MAGNET applies class-wise coordinate-affine calibration to the final sampled latent before decoding. This calibration is fitted on the training split by matching class-conditional latent means and variances between uncalibrated generated samples and real training latents. The calibrated latent is clipped and then decoded through the normalized-Cholesky inverse map, so validity is still enforced by the chart-based decoder.


\begin{table*}[!t]
\centering
\scriptsize
\caption{\footnotesize\textbf{Bridge ablations across ABIDE, ADNI, and OASIS-3.} We compare six structural-conditioning variants: no bridge, self-conditioned bridge, teacher bridge at inference, amortized bridge without $L_{\mathrm{bridge}}$, amortized bridge with $L_{\mathrm{bridge}}$, and the full MAGNET objective. We report $\alpha,\beta$-F1, ROC-AUC, F1, bridge distillation error where applicable, and sampling time. Lower bridge error tracks stronger structural fidelity, while removing $L_{\mathrm{bridge}}$ causes severe bridge-tracking failure. Gray rows denote the real-data reference. Bold highlights the best generative model for each metric.}
\setlength{\tabcolsep}{5pt}
\renewcommand{\arraystretch}{1.12}
\begin{tabular}{l l c c c c c}
\toprule
\multirow{2}{*}{\textbf{Dataset}} & \multirow{2}{*}{\textbf{Method}} 
& \multicolumn{1}{c}{\textbf{Quality}} 
& \multicolumn{2}{c}{\textbf{CAS Metrics}} 
& \multirow{2}{*}{\begin{tabular}[c]{@{}c@{}}\textbf{Bridge Distillation}\\ \textbf{Error $\downarrow$}\end{tabular}} 
& \multicolumn{1}{c}{\textbf{Time (s.)}} \\
\cmidrule(lr){3-3} \cmidrule(lr){4-5} \cmidrule(l){7-7}
& & $\alpha,\beta$-\textbf{F1} $\uparrow$ & \textbf{ROC-AUC} $\uparrow$ & \textbf{F1} $\uparrow$ & & \textbf{Sampling} $\downarrow$ \\
\midrule

\multirow{7}{*}{\textbf{ABIDE}}
& \cellcolor{LightGray} Real Data 
& \cellcolor{LightGray} 0.80 $\pm$ 0.01 & \cellcolor{LightGray} 0.66 $\pm$ 0.04 & \cellcolor{LightGray} 0.58 $\pm$ 0.06 & \cellcolor{LightGray} N/A & \cellcolor{LightGray} N/A \\
& No bridge 
& 0.58 $\pm$ 0.05 & 0.66 $\pm$ 0.06 & 0.57 $\pm$ 0.09 & - & \textbf{0.10 $\pm$ 0.00} \\
& Self-conditioning 
& 0.51 $\pm$ 0.06 & 0.66 $\pm$ 0.06 & 0.56 $\pm$ 0.11 & - & 3.01 $\pm$ 0.01 \\
& Teacher bridge 
& 0.64 $\pm$ 0.06 & 0.66 $\pm$ 0.06 & 0.57 $\pm$ 0.09 & - & 3.00 $\pm$ 0.01 \\
& Amortized bridge ($L_{bridge}$) 
& 0.65 $\pm$ 0.05 & 0.66 $\pm$ 0.05 & 0.57 $\pm$ 0.08 & 0.48 $\pm$ 0.04 & 0.12 $\pm$ 0.00 \\
& Amortized bridge w/o bridge loss 
& 0.59 $\pm$ 0.07 & 0.66 $\pm$ 0.06 & 0.56 $\pm$ 0.09 & 14.81 $\pm$ 0.53 & 0.13 $\pm$ 0.00 \\
& \cellcolor{LightBlue} Proposed ($L_{bridge} + L_{spec}$) 
& \cellcolor{LightBlue} \textbf{0.68 $\pm$ 0.04} & \cellcolor{LightBlue} \textbf{0.67 $\pm$ 0.06} & \cellcolor{LightBlue} \textbf{0.58 $\pm$ 0.09} & \cellcolor{LightBlue} \textbf{0.46 $\pm$ 0.04} & \cellcolor{LightBlue} 0.12 $\pm$ 0.00 \\
\midrule

\multirow{7}{*}{\textbf{ADNI}}
& \cellcolor{LightGray} Real Data 
& \cellcolor{LightGray} 0.87 $\pm$ 0.05 & \cellcolor{LightGray} 0.58 $\pm$ 0.03 & \cellcolor{LightGray} 0.59 $\pm$ 0.03 & \cellcolor{LightGray} N/A & \cellcolor{LightGray} N/A \\
& No bridge 
& 0.59 $\pm$ 0.12 & 0.58 $\pm$ 0.06 & 0.66 $\pm$ 0.05 & - & \textbf{0.22 $\pm$ 0.00} \\
& Self-conditioning 
& 0.41 $\pm$ 0.15 & 0.58 $\pm$ 0.04 & 0.55 $\pm$ 0.22 & - & 8.99 $\pm$ 0.04 \\
& Teacher bridge 
& 0.67 $\pm$ 0.11 & 0.58 $\pm$ 0.05 & 0.65 $\pm$ 0.05 & - & 8.09 $\pm$ 0.08 \\
& Amortized bridge ($L_{bridge}$) 
& 0.68 $\pm$ 0.12 & 0.58 $\pm$ 0.06 & 0.65 $\pm$ 0.06 & 0.05 $\pm$ 0.00 & 0.25 $\pm$ 0.00 \\
& Amortized bridge w/o bridge loss 
& 0.48 $\pm$ 0.11 & 0.58 $\pm$ 0.05 & \textbf{0.67 $\pm$ 0.05} & 14.67 $\pm$ 1.16 & 0.25 $\pm$ 0.00 \\
& \cellcolor{LightBlue} Proposed ($L_{bridge} + L_{spec}$)
& \cellcolor{LightBlue} \textbf{0.74 $\pm$ 0.08} & \cellcolor{LightBlue} \textbf{0.58 $\pm$ 0.07} & \cellcolor{LightBlue} \textbf{0.67 $\pm$ 0.06} & \cellcolor{LightBlue} \textbf{0.05 $\pm$ 0.00} & \cellcolor{LightBlue} 0.25 $\pm$ 0.00 \\
\midrule

\multirow{7}{*}{\textbf{OASIS-3}}
& \cellcolor{LightGray} Real Data 
& \cellcolor{LightGray} 0.88 $\pm$ 0.01 & \cellcolor{LightGray} 0.71 $\pm$ 0.04 & \cellcolor{LightGray} 0.61 $\pm$ 0.06 & \cellcolor{LightGray} N/A & \cellcolor{LightGray} N/A \\
& No bridge 
& 0.49 $\pm$ 0.07 & 0.69 $\pm$ 0.05 & 0.58 $\pm$ 0.05 & - & \textbf{0.18 $\pm$ 0.00} \\
& Self-conditioning 
& 0.46 $\pm$ 0.06 & 0.69 $\pm$ 0.04 & 0.57 $\pm$ 0.04 & - & 6.38 $\pm$ 0.05 \\
& Teacher bridge 
& 0.54 $\pm$ 0.06 & 0.69 $\pm$ 0.04 & \textbf{0.58 $\pm$ 0.04} & - & 6.28 $\pm$ 0.02 \\
& Amortized bridge ($L_{bridge}$)
& 0.55 $\pm$ 0.05 & 0.69 $\pm$ 0.05 & 0.57 $\pm$ 0.05 & 0.09 $\pm$ 0.00 & 0.20 $\pm$ 0.00 \\
& Amortized bridge w/o bridge loss 
& 0.49 $\pm$ 0.05 & \textbf{0.69 $\pm$ 0.04} & 0.57 $\pm$ 0.05 & 14.90 $\pm$ 1.06 & 0.21 $\pm$ 0.00 \\
& \cellcolor{LightBlue} Proposed ($L_{bridge} + L_{spec}$) 
& \cellcolor{LightBlue} \textbf{0.57 $\pm$ 0.05} & \cellcolor{LightBlue} 0.68 $\pm$ 0.04 & \cellcolor{LightBlue} \textbf{0.58 $\pm$ 0.05} & \cellcolor{LightBlue} \textbf{0.08 $\pm$ 0.00} & \cellcolor{LightBlue} 0.20 $\pm$ 0.00 \\
\bottomrule
\vspace{-3em}
\end{tabular}
\label{tab:bridge_ablations}
\end{table*}

\section{Framework Analysis}\label{app:frameworkanalysis}

\subsection{Amortized Structural Bridge}
\label{appsubsec:bridge}

To evaluate the role of the amortized structural bridge, we compare six variants: no bridge, self-conditioned bridge, teacher bridge at inference, amortized bridge without $L_{\mathrm{bridge}}$, amortized bridge with $L_{\mathrm{bridge}}$, and the full MAGNET objective. For bridge-enabled variants, we report the Bridge Distillation Error $\|B_\psi(x_t,t,s)-b^\star(x_0,s,t)\|_2^2$, which measures how well the amortized bridge tracks the time-dependent teacher target.

Table~\ref{tab:bridge_ablations} shows a clear bridge-quality versus structural-fidelity pattern. Lower distillation error is consistently associated with stronger $\alpha,\beta$-F1, while removing $L_{\mathrm{bridge}}$ causes a severe bridge-tracking failure without yielding a corresponding fidelity gain. This is the regime the bridge is designed for, and the ablation matches the Remark of proposition~\ref{thm:bridge_risk}: the amortized bridge is useful to the extent that it tracks the teacher target well.

The efficiency gap is equally important. Teacher-bridge and self-conditioned alternatives are substantially slower because they require a second structural pass at inference, whereas the amortized bridge preserves the few-step sampling regime. In practice, the bridge-enabled MAGNET variants retain roughly $30 -60\times$ lower sampling cost than the teacher and self-conditioned alternatives while delivering the strongest overall fidelity profile. Taken together, these results justify the amortized bridge as the primary structural-conditioning mechanism in MAGNET. It provides the gains of explicit structural guidance without sacrificing the speed advantages of few-step generation.

\subsection{Relevance-Preserving Corruption Ablation}\label{appsubsec:corruption}

\begin{table*}[!t]
\centering
\scriptsize
\caption{
\footnotesize
\textbf{Relevance-preserving corruption ablation across ABIDE, ADNI, and OASIS-3.}
We compare standard isotropic diffusion against three temporal variations of our adaptive protection schedule (Low, Adaptive, and High density decay). Performance remains stable across all adaptive regimes, demonstrating that structural recovery is invariant to the exact decay curve provided early-stage topological protection is active.
}
\label{tab:corruption_ablation}

\begin{tabular}{llccccc}
\toprule
\multirow{2}{*}{\textbf{Dataset}} 
& \multirow{2}{*}{\textbf{Configuration}} 
& \multirow{2}{*}{\textbf{Decay ($p$)}} 
& \textbf{Quality} 
& \multicolumn{2}{c}{\textbf{CAS Metrics}} 
& \multirow{2}{*}{\textbf{Sampling Time (s) $\downarrow$}} \\
\cmidrule(lr){4-4} 
\cmidrule(lr){5-6}
& & 
& \textbf{$\alpha,\beta$-F1 $\uparrow$} 
& \textbf{ROC-AUC $\uparrow$} 
& \textbf{F1 $\uparrow$} 
& \\

\midrule
\multirow{4}{*}{\textbf{ABIDE}}
& Isotropic (Baseline) & N/A & $0.62 \pm 0.03$ & $0.60 \pm 0.07$ & $0.50 \pm 0.09$ & $0.36 \pm 0.00$ \\
& Low Density & 1 & $0.68 \pm 0.03$ & $0.64 \pm 0.07$ & $0.55 \pm 0.06$ & $0.44 \pm 0.12$ \\
& \cellcolor{LightBlue} \textbf{Adaptive (Ours)} & \cellcolor{LightBlue} \textbf{3} & \cellcolor{LightBlue} $\mathbf{0.68 \pm 0.03}$ & \cellcolor{LightBlue} $\mathbf{0.64 \pm 0.07}$ & \cellcolor{LightBlue} $\mathbf{0.56 \pm 0.06}$ & \cellcolor{LightBlue} $\mathbf{0.38 \pm 0.02}$ \\
& High Density & 5 & $0.69 \pm 0.03$ & $0.64 \pm 0.07$ & $0.55 \pm 0.05$ & $0.39 \pm 0.03$ \\

\midrule
\multirow{4}{*}{\textbf{ADNI}}
& Isotropic (Baseline) & N/A & $0.69 \pm 0.09$ & $0.52 \pm 0.07$ & $0.55 \pm 0.05$ & $0.73 \pm 0.01$ \\
& Low Density & 1 & $0.75 \pm 0.07$ & $0.57 \pm 0.06$ & $0.61 \pm 0.06$ & $0.76 \pm 0.02$ \\
& \cellcolor{LightBlue} \textbf{Adaptive (Ours)} & \cellcolor{LightBlue} \textbf{3} & \cellcolor{LightBlue} $\mathbf{0.76 \pm 0.06}$ & \cellcolor{LightBlue} $\mathbf{0.57 \pm 0.07}$ & \cellcolor{LightBlue} $\mathbf{0.61 \pm 0.05}$ & \cellcolor{LightBlue} $\mathbf{0.86 \pm 0.28}$ \\
& High Density & 5 & $0.76 \pm 0.06$ & $0.56 \pm 0.06$ & $0.61 \pm 0.05$ & $0.76 \pm 0.02$ \\

\midrule
\multirow{4}{*}{\textbf{OASIS-3}}
& Isotropic (Baseline) & N/A & $0.53 \pm 0.06$ & $0.64 \pm 0.04$ & $0.49 \pm 0.05$ & $0.60 \pm 0.00$ \\
& Low Density & 1 & $0.58 \pm 0.06$ & $0.68 \pm 0.03$ & $0.55 \pm 0.04$ & $0.63 \pm 0.01$ \\
& \cellcolor{LightBlue} \textbf{Adaptive (Ours)} & \cellcolor{LightBlue} \textbf{3} & \cellcolor{LightBlue} $\mathbf{0.59 \pm 0.06}$ & \cellcolor{LightBlue} $\mathbf{0.68 \pm 0.03}$ & \cellcolor{LightBlue} $\mathbf{0.56 \pm 0.04}$ & \cellcolor{LightBlue} $\mathbf{0.63 \pm 0.00}$ \\
& High Density & 5 & $0.57 \pm 0.06$ & $0.68 \pm 0.03$ & $0.56 \pm 0.04$ & $0.63 \pm 0.01$ \\

\bottomrule
\end{tabular}
\vspace{-1.5em}
\end{table*}

Our relevance-preserving corruption relies on a monotonically decreasing protection schedule, $\rho(t)$, to shield condition-defining coordinates during early forward diffusion. To evaluate the framework's sensitivity to this mechanism, we parameterize the decay curve as $\rho(t) = (1 - \tau_t)^p$, where $\tau_t \in [0,1]$ is the normalized continuous diffusion time (i.e., $\tau_t = t/T$). The hyperparameter $p$ strictly controls the temporal density of the protection, dictating how quickly structural coordinates yield to Gaussian noise, rather than the total magnitude of protection applied.

We ablate this schedule across four distinct regimes: \textbf{(i) Isotropic (Baseline):} No adaptive manifold preservation; \textbf{(ii) Low Density ($p = 1$):} Gradual, linear relaxation of constraints; \textbf{(iii) Adaptive ($p = 3$, Ours):} Our primary, balanced decay schedule; and \textbf{(iv) High Density ($p = 5$):} Aggressive, early release of structural constraints.

Table~\ref{tab:corruption_ablation} highlights a clear trend across ABIDE, ADNI, and OASIS-3: MAGNET is robust to the exact shape of the decay curve. Generative fidelity ($\alpha,\beta$-F1) and downstream utility (CAS) remain consistently stable across the Low, Adaptive, and High density regimes. The adaptive schedule provides consistent improvement over isotropic diffusion across all three cohorts. Crucially, the robustness to the exact decay shape ($p=1,3,5$) confirms that the mechanism is not sensitive to hyperparameter tuning — any protection schedule that shields condition-relevant coordinates during the high-noise phase suffices. The primary contribution of relevance-preserving corruption is providing a cleaner latent trajectory for the bridge and guidance components to operate on, as evidenced by the full-system results in Table~\ref{tab:backbone_controller_ablation}.

\subsection{Classifier-Free Guidance Scale}
\label{subsec:cfgscale}

\begin{figure*}[!t]
    \centering
    \includegraphics[width=\linewidth]{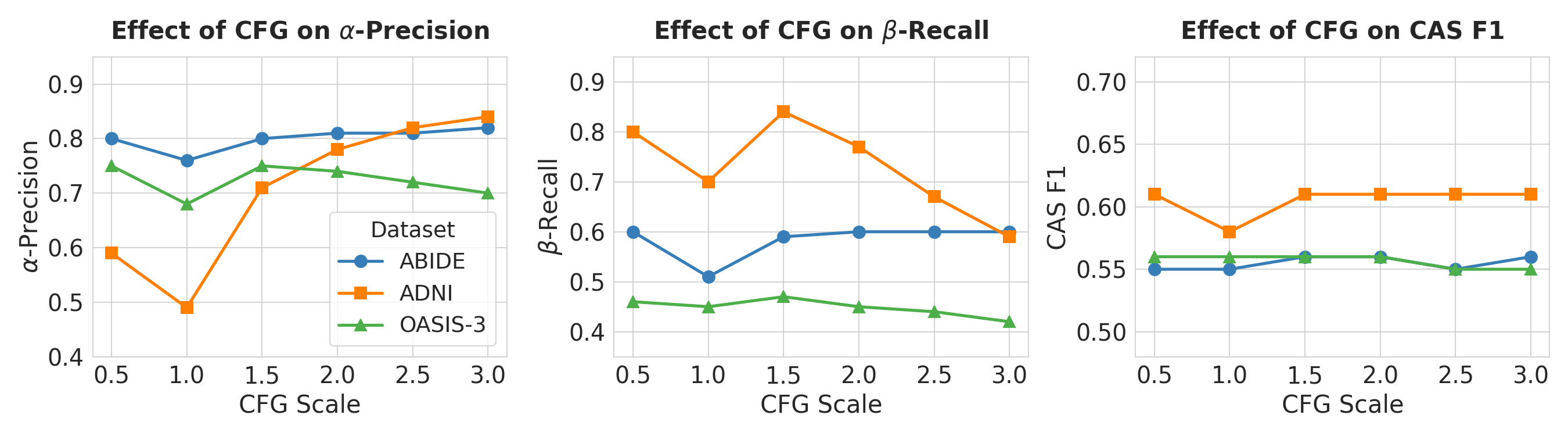} 
    \caption{\footnotesize\textbf{Classifier-free guidance scale ablation for MAGNET.} We vary the guidance scale $\gamma$ from $0.5$ to $3.0$ and plot class-conditional fidelity ($\alpha$-Precision, $\beta$-Recall) together with downstream utility (CAS F1) on ABIDE, ADNI, and OASIS-3. Across datasets, performance is strongest in the moderate-guidance regime, with $\gamma=1.5$ giving the most stable fidelity--utility trade-off. Larger guidance scales tend to increase precision at the expense of recall, consistent with overspecialization to a narrower region of the class manifold.}
    \label{fig:cfg_ablation}
    \vspace{-2em}
\end{figure*}

The classifier-free guidance scale $\gamma$ controls how strongly reverse-time sampling is steered toward the conditional branch. In MAGNET, guidance is not implemented as plain linear extrapolation. Instead, the sampler scales the conditional steering direction while retaining an orthogonal residual from the unconditional prediction. Figure~\ref{fig:cfg_ablation} shows that performance is strongest in the moderate-guidance regime.

On ADNI, $\gamma=1.5$ gives the best overall fidelity--utility trade-off, improving clearly over both weak guidance ($\gamma=0.5$) and overly strong guidance ($\gamma\geq 2.5$). The main failure mode at high guidance is not precision but coverage: $\alpha$-precision increases, but $\beta$-recall drops sharply, indicating overspecialization to a narrower subset of the class manifold. OASIS-3 follows the same pattern, though less sharply, with $\gamma=1.5$ again near-optimal. ABIDE is less sensitive once guidance is enabled, consistent with its simpler binary structure.

A useful implementation detail is that $\gamma=1.0$ in Fig.~\ref{fig:cfg_ablation} is implemented as a discrete branch skip (the unconditional forward pass is omitted entirely), rather than as a continuous evaluation of the guidance formula at that scale. Consequently, the transition from $\gamma=1.0$ to $\gamma=0.5$ represents a qualitative mode change, not a smooth interpolation, which accounts for the performance gap observed at that boundary. This explains both its lower runtime and its weaker fidelity relative to the moderately guided regime. We therefore use $\gamma=1.5$ throughout the paper.

\subsection{Class-Anchored Conditioning}\label{appsubsec:conditioning}

To evaluate the necessity of Class-Anchored Conditioning, we quantify using the complexity of the datasets. The heterogeneity entropy $\mathbf{H}$ of a dataset is the Shannon Entropy of the discovered structural subtypes $S$ conditioned on coarse labels $C$ as $\mathbf{H} = - \sum_{c \in \mathcal{C}} P(c) \sum_{k=1}^{K_c} P(s_k | c) \log P(s_k | c)$ where $s_k$ represents the latent sub-manifolds within a given class $c$. As $\mathbf{H}$ increases, the variance in a single label expands, leading to denoising ambiguity.

\begin{wrapfigure}{r}{0.55\textwidth}
    \centering
    \vspace{-1.5em}
    \includegraphics[width=\linewidth]{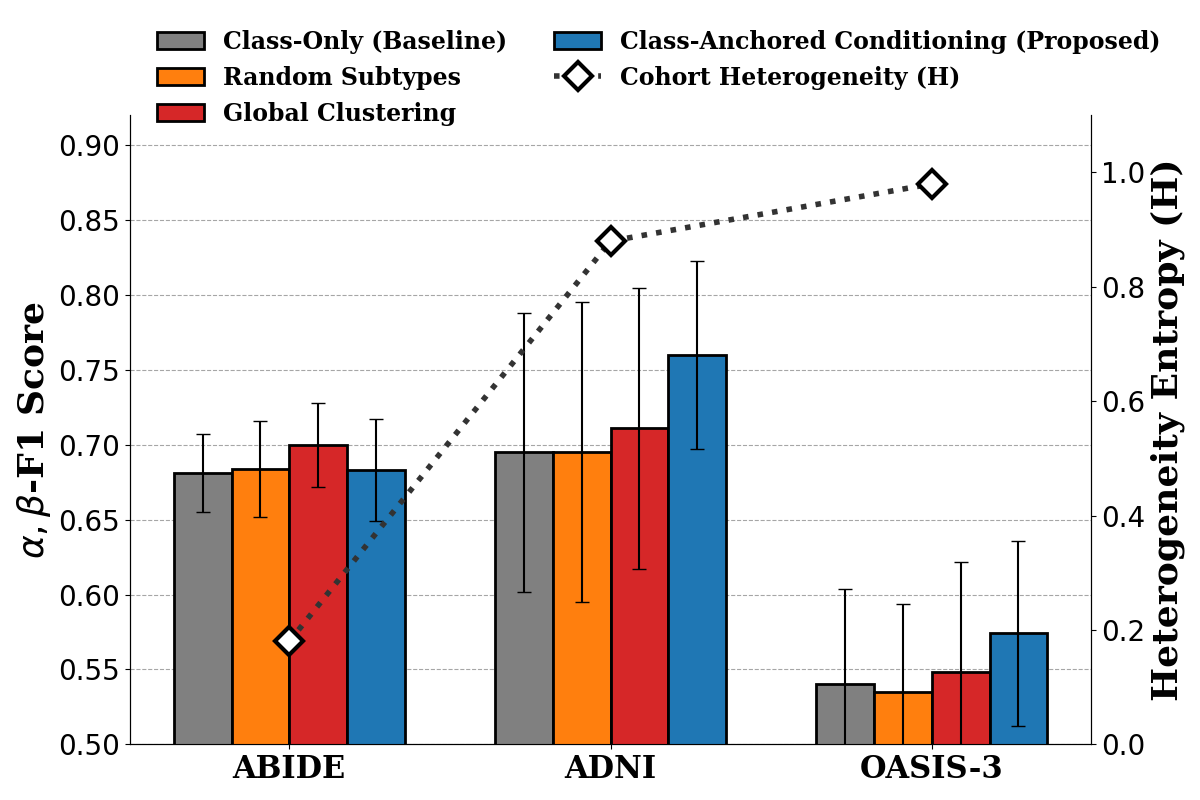}
    \caption{\footnotesize\textbf{Efficacy of Class-Anchored Conditioning} against class-only, random-subtype, and global clustering baselines. The performance gain of Class-Anchored Conditioning strictly scales with structural heterogeneity $\mathbf{H}$.}
    \label{fig:cac_ablation}
    \vspace{-1em}
\end{wrapfigure} 

We ablate against three paradigms (Fig.~\ref{fig:cac_ablation}) to isolate performance gains. \textbf{Class-only} utilizes the class coarse labels, \textbf{Random Subtypes} reassigns tokens randomly within a class to measure if gains are a regularizing artifact of the noise injection, and \textbf{Global Clustering} to discover $M$ latent structural clusters across the dataset. On highly heterogeneous cohort like ADNI and OASIS-3, the global clustering fails to differentiate between structurally similar but clinically distinct states. Our proposed Class-Anchored Conditioning enforces a stricter hierarchy to find $K$ distinct clusters only within the bounds of a specific clinical label ensuring the diffusion process is guided by latent substructures, not violating the broader class condition. The performance gain in generative fidelity confirms high-quality connectome generation in complex clinical datasets requires sub-manifold conditioning that remains strictly anchored to the coarse class constraint.

\subsection{Self-conditioning probability}
\label{subsec:selfconditioningprobability}

We also tested the fallback self-conditioning probability $p_{\mathrm{sc}}$, which in earlier non-amortized variants controls how often a first-pass denoiser estimate is reused as auxiliary structural context during training. In the final MAGNET configuration, however, structural context is supplied primarily by the amortized bridge, so this fallback mechanism is no longer central to the active model.

Empirically, varying $p_{\mathrm{sc}}$ from $0.2$ to $0.8$ produces no meaningful change in generative fidelity or downstream performance across ABIDE, ADNI, and OASIS-3: all three datasets remain effectively invariant at the precision reported in the main benchmark, with only minor variation in runtime. We therefore treat $p_{\mathrm{sc}}$ as a secondary implementation parameter and use $p_{\mathrm{sc}}=0.5$ throughout the paper. This confirms that, once the amortized bridge is active, structural recovery is largely insensitive to the scheduling of the fallback self-conditioning branch.

\subsection{Training and Sampling Efficiency}
\label{appsubsec:time_complexity}

MAGNET trades additional training cost for substantially faster repeated sampling. On a single NVIDIA A6000 GPU, training requires $532.08 \pm 16.66$~s per split on ABIDE, $1593.63 \pm 27.99$~s on ADNI, and $1242.53 \pm 18.49$~s on OASIS-3, corresponding to approximately $9$--$27$ minutes per trained split. This is roughly an order of magnitude more training time than the DiffeoCFM and TriangCFM baselines under the same hardware setting.

The inference regime is substantially different. MAGNET uses a six-step DDIM sampler and requires $0.38$~s, $0.86$~s, and $0.63$~s per generated split on ABIDE, ADNI, and OASIS-3, respectively, compared with $9.04$~s, $36.08$~s, and $23.24$~s for DiffeoCFM (Table~\ref{tab:main_brain_results}). This corresponds to approximately $24\times$, $42\times$, and $37\times$ faster sampling, respectively. The additional training cost is therefore incurred once when fitting a checkpoint, whereas the sampling advantage is retained whenever that checkpoint is subsequently used for synthetic cohort generation.

Importantly, this efficiency does not arise from sacrificing conditional fidelity: MAGNET simultaneously improves $\alpha,\beta$-F1 by $7$--$21\%$ over the strongest validity-preserving baseline across the primary cohorts and achieves the lowest zero-shot subclass covariance error in Table~\ref{tab:zero_shot_metrics}. These results characterize the computational trade-off of MAGNET as higher one-time training cost in exchange for improved support recovery and substantially lower repeated inference cost.

\subsection{Sensitivity to $\lambda_{\mathrm{bridge}}$ and $\lambda_{\mathrm{spec}}$}
\label{appsubsec:loss_ablation}

The full MAGNET objective combines the diffusion loss with bridge supervision and structural regularization,
$
L = L_{\mathrm{diff}}
+ \lambda_{\mathrm{bridge}}L_{\mathrm{bridge}}
+ \lambda_{\mathrm{spec}}L_{\mathrm{spec}}.
$
We therefore evaluate whether the gains of the proposed framework depend on narrowly tuned auxiliary-loss weights. For each loss, we vary its coefficient over
$\{0,0.025,0.05,0.1,0.2,0.4\}$ while holding the other coefficient fixed at its default value of $0.1$. The nonzero sweep spans a $16\times$ range around the default operating regime. All remaining training, sampling, and evaluation settings are unchanged.

Table~\ref{tab:loss_weight_sensitivity} reveals two complementary properties. First, both auxiliary objectives make a material contribution to support recovery. Removing the spectral term decreases $\alpha,\beta$-F1 from $0.68$ to $0.51$ on ABIDE, from $0.76$ to $0.68$ on ADNI, and from $0.57$ to $0.49$ on OASIS-3, corresponding to relative reductions of approximately $11$--$25\%$. Removing bridge supervision produces similarly consistent degradation, reducing $\alpha,\beta$-F1 to $0.53$, $0.65$, and $0.42$, respectively. These ablations are consistent with the intended roles of the two losses: $L_{\mathrm{bridge}}$ constrains the amortized predictor to track its structural target, while $L_{\mathrm{spec}}$ encourages the denoised latent to preserve global structural statistics.

Second, once either loss is enabled, performance is insensitive to its precise weighting. Across the nonzero $\lambda_{\mathrm{spec}}$ sweep, $\alpha,\beta$-F1 varies by at most $0.02$ on each cohort; across the nonzero $\lambda_{\mathrm{bridge}}$ sweep, the maximum variation is $0.03$. These changes are smaller than, or comparable to, the split-to-split variability reported in the main benchmark. ROC-AUC and downstream F1 exhibit the same qualitative stability. This behavior is also consistent with the construction of the objective: $L_{\mathrm{bridge}}$ is restricted to conditioned examples and weighted by $\bar{\alpha}_t$, while $L_{\mathrm{spec}}$ is mean-reduced over the minibatch.

Thus, the auxiliary objectives are important through their presence rather than through a narrowly selected coefficient. The default choice
$\lambda_{\mathrm{bridge}}=\lambda_{\mathrm{spec}}=0.1$
lies within a broad stable region across all three cohorts, indicating that MAGNET's performance is not dependent on fine-grained tuning of either loss weight.

\begin{table*}[!t]
\centering
\scriptsize
\caption{\footnotesize
\textbf{Sensitivity to bridge-supervision and spectral-regularization weights.}
We vary $\lambda_{\mathrm{spec}}$ and $\lambda_{\mathrm{bridge}}$ independently while fixing the other coefficient at its default value of $0.1$. All remaining training and evaluation settings are unchanged. Results are mean $\pm$ standard deviation across the same splits used in the main benchmark. Setting either coefficient to zero measures removal of the corresponding objective, while the nonzero values test sensitivity over a $16\times$ range.}
\label{tab:loss_weight_sensitivity}
\setlength{\tabcolsep}{3.8pt}
\renewcommand{\arraystretch}{1.12}
\begin{tabular}{lcccc|cccc}
\toprule
\multirow{2}{*}{Dataset}
& \multicolumn{4}{c|}{$\lambda_{\mathrm{spec}}$ sensitivity}
& \multicolumn{4}{c}{$\lambda_{\mathrm{bridge}}$ sensitivity} \\
\cmidrule(lr){2-5} \cmidrule(lr){6-9}
& $\lambda_{\mathrm{spec}}$ & $\alpha,\beta$-F1 $\uparrow$ & ROC-AUC $\uparrow$ & F1 $\uparrow$
& $\lambda_{\mathrm{bridge}}$ & $\alpha,\beta$-F1 $\uparrow$ & ROC-AUC $\uparrow$ & F1 $\uparrow$ \\
\midrule
\multirow{6}{*}{ABIDE}
& 0     & 0.51 $\pm$ 0.04 & 0.52 $\pm$ 0.06 & 0.43 $\pm$ 0.01 & 0     & 0.53 $\pm$ 0.03 & 0.56 $\pm$ 0.06 & 0.41 $\pm$ 0.09 \\
& 0.025 & 0.66 $\pm$ 0.06 & 0.62 $\pm$ 0.09 & 0.55 $\pm$ 0.03 & 0.025 & 0.66 $\pm$ 0.05 & 0.63 $\pm$ 0.05 & 0.55 $\pm$ 0.06 \\
& 0.05  & 0.67 $\pm$ 0.01 & 0.62 $\pm$ 0.06 & 0.55 $\pm$ 0.09 & 0.05  & 0.66 $\pm$ 0.03 & 0.64 $\pm$ 0.08 & 0.55 $\pm$ 0.05 \\
& 0.1 (default) & 0.68 $\pm$ 0.03 & 0.64 $\pm$ 0.07 & 0.56 $\pm$ 0.06 & 0.1 (default) & 0.68 $\pm$ 0.03 & 0.64 $\pm$ 0.07 & 0.56 $\pm$ 0.06 \\
& 0.2   & 0.67 $\pm$ 0.02 & 0.63 $\pm$ 0.02 & 0.56 $\pm$ 0.12 & 0.2   & 0.66 $\pm$ 0.03 & 0.64 $\pm$ 0.02 & 0.54 $\pm$ 0.01 \\
& 0.4   & 0.68 $\pm$ 0.07 & 0.64 $\pm$ 0.01 & 0.55 $\pm$ 0.07 & 0.4   & 0.66 $\pm$ 0.04 & 0.65 $\pm$ 0.01 & 0.55 $\pm$ 0.04 \\
\midrule
\multirow{6}{*}{ADNI}
& 0     & 0.68 $\pm$ 0.03 & 0.49 $\pm$ 0.01 & 0.53 $\pm$ 0.08 & 0     & 0.65 $\pm$ 0.06 & 0.45 $\pm$ 0.03 & 0.47 $\pm$ 0.07 \\
& 0.025 & 0.75 $\pm$ 0.09 & 0.57 $\pm$ 0.03 & 0.61 $\pm$ 0.09 & 0.025 & 0.73 $\pm$ 0.01 & 0.56 $\pm$ 0.01 & 0.61 $\pm$ 0.03 \\
& 0.05  & 0.75 $\pm$ 0.02 & 0.55 $\pm$ 0.08 & 0.61 $\pm$ 0.03 & 0.05  & 0.75 $\pm$ 0.09 & 0.57 $\pm$ 0.02 & 0.60 $\pm$ 0.02 \\
& 0.1 (default) & 0.76 $\pm$ 0.06 & 0.57 $\pm$ 0.07 & 0.61 $\pm$ 0.05 & 0.1 (default) & 0.76 $\pm$ 0.06 & 0.57 $\pm$ 0.07 & 0.61 $\pm$ 0.05 \\
& 0.2   & 0.75 $\pm$ 0.03 & 0.56 $\pm$ 0.01 & 0.60 $\pm$ 0.08 & 0.2   & 0.75 $\pm$ 0.04 & 0.57 $\pm$ 0.06 & 0.60 $\pm$ 0.09 \\
& 0.4   & 0.76 $\pm$ 0.09 & 0.56 $\pm$ 0.03 & 0.60 $\pm$ 0.01 & 0.4   & 0.74 $\pm$ 0.02 & 0.55 $\pm$ 0.04 & 0.59 $\pm$ 0.04 \\
\midrule
\multirow{6}{*}{OASIS-3}
& 0     & 0.49 $\pm$ 0.09 & 0.59 $\pm$ 0.07 & 0.47 $\pm$ 0.01 & 0     & 0.42 $\pm$ 0.01 & 0.53 $\pm$ 0.04 & 0.42 $\pm$ 0.06 \\
& 0.025 & 0.57 $\pm$ 0.08 & 0.67 $\pm$ 0.05 & 0.56 $\pm$ 0.09 & 0.025 & 0.55 $\pm$ 0.04 & 0.68 $\pm$ 0.08 & 0.56 $\pm$ 0.03 \\
& 0.05  & 0.56 $\pm$ 0.06 & 0.68 $\pm$ 0.08 & 0.55 $\pm$ 0.04 & 0.05  & 0.55 $\pm$ 0.09 & 0.65 $\pm$ 0.06 & 0.57 $\pm$ 0.01 \\
& 0.1 (default) & 0.57 $\pm$ 0.06 & 0.68 $\pm$ 0.03 & 0.56 $\pm$ 0.04 & 0.1 (default) & 0.57 $\pm$ 0.06 & 0.68 $\pm$ 0.03 & 0.56 $\pm$ 0.04 \\
& 0.2   & 0.57 $\pm$ 0.07 & 0.67 $\pm$ 0.01 & 0.55 $\pm$ 0.08 & 0.2   & 0.56 $\pm$ 0.03 & 0.67 $\pm$ 0.09 & 0.56 $\pm$ 0.09 \\
& 0.4   & 0.58 $\pm$ 0.01 & 0.65 $\pm$ 0.09 & 0.54 $\pm$ 0.07 & 0.4   & 0.57 $\pm$ 0.06 & 0.66 $\pm$ 0.01 & 0.55 $\pm$ 0.08 \\
\bottomrule
\end{tabular}
\end{table*}



\section{Geometric Analysis of Conditional Connectome Generation}\label{app:geometricanalysis}
In this section, we analyze whether the generated connectomes are in the same class-conditional geometric space through a series of experiments to compare against the held-out real validation data. In order to keep \textit{occupancy, overlap, and support recovery} non self-referential, we fix all the coordinate system, support contours, and local scales using the real data and ensure valid comparison between $Q_m^y$ and $P_{\val}^y$.

\subsection{Visualization Setup}

Let $\Cd=\left\{C\in\R^{d\times d}\;:\;C=C^\top,\;C\succ 0,\;\diagop(C)=\mathbf{1}\right\}$ denote the symmetric positive definite correlation matrix manifold on the MSDL atlas~\cite{varoquaux2011multi}. Let $P^y_{\val}$ denote the held-out real validation distribution and $Q^y_m$ denote the class-conditional distribution generated by method $m$ where $y\in\{0,1\}$ is the dataset specific class label. We compare $Q_m(\cdot\mid Y=y)$ against $P_{\val}(\cdot\mid Y=y)$ to make class-conditional comparisons. This is essential because \textit{a conditional generator can appear globally plausible while still failing to reproduce the support or geometry of a particular class}~\cite{benny2021evaluation,regenwetter2023beyond}.

\subsection{Geometry, Assumptions, and why this analysis}

\paragraph{Log-Euclidean representation.} Since $\mathcal{C_d}$ is not a vector space, euclidean operations in matrix space do not capture the geometry of connectomes faithfully. Standard operations on $C_1,C_2\in\mathcal{C_d}$ treat the objects as points in a flat vector space, ignoring the intrinsic geometric structure of the set. Hence, we adopt the log-euclidean representation~\cite{arsigny2006log, arsigny2007geometric} and define $\Phi:\mathcal{C_d}\to\mathrm{Sym}(d),\quad \Phi(C)=\log C$ where $\mathrm{Sym}(d)$ is the vector space of real symmetric matrices and $\log C$ is the matrix logarithm. The log-euclidean distance is 
\begin{equation}
d_{\LE}(C_1,C_2) = \|\Phi(C_1)-\Phi(C_2)\|_F.
\label{eq:le-distance}
\end{equation}

Moreover, every $C\in\mathcal{C}_d$ has strictly positive eigenvalues and $\log C$ is well defined via the spectral decomposition. If, $C = U \operatorname{diag}(\lambda_1,\dots,\lambda_d) U^\top$ with $\lambda_i>0$, then $\log C = U \operatorname{diag}(\log \lambda_1,\dots,\log \lambda_d) U^\top$. Because $\log C$ is symmetric, the image of $\Phi$ lies in $\mathrm{Sym}(d)$. Thus, instead of comparing the connectomes in euclidean matrix space, we first map them to $\mathrm{Sym}(d)$ and then utilize euclidean geometry via the Frobenius norm.

\paragraph{Assumptions.} All the visual analysis is based on the following assumptions.

\begin{enumerate}[label=\textbf{A\arabic*.}, leftmargin=2.6em]
    \item \textbf{SPD Validity.} The basic structural assumption required by the log-euclidean model is that all connectomes lie in $\mathcal{C}_d$, making $\Phi(C)$ well defined.
    \item \textbf{Class-conditional evaluation.} We evaluate $Q_m^y$ against $P_{\text{val}}^y$ for each class $y$. Enforcing this allows us to differentiate between models that mixes structures while still appearing globally close to the pooled real distribution.
    \item \textbf{Real-derived reference geometry.} All references for evaluations are derived from the real data only to prevent the reference from adapting to the generated samples.
    \item \textbf{Local regularity of held-out support.} The density of $\Phi_{\#}P_{\text{val}}^y$ is assumed to be locally bounded away from $0$ and $\infty$ to support interior of $P_{\text{val}}$. We adapted the regime from~\cite{cover1967nearest, devroye2013probabilistic} where nearest-neighbor radii consistently estimate local spacing. 
\end{enumerate}

\paragraph{Why these assumptions enter the analysis.} \textbf{A1, A2, and A3} ensure that every visual object remains fixed across methods and is well defined in a valid geometry. \textbf{A4} is the standard condition to estimate negative local spacing in the nearest neighbor radii. Each assumption allows to interpret the visualization in regards to the class-conditional distributional agreement in addition to summary statistics.


\subsection{Global geometric fidelity through radial calibration}\label{app:radialanalysis}

We want to identify if the generated samples have the correct global geometric understanding as per the held-out real center. For class $y$,  the log-domain mean is $\mu_{\log,\val}^y= \E_{C\sim P_{\val}^y}[\Phi(C)]$, and the corresponding log-Euclidean Fr\'echet mean is $\bar C^y_{\LE,\val} = \exp(\mu_{\log,\val}^y)$. The radial coordinate of a connectome is 

\begin{equation}
r_y(C)=d_{\LE}(C,\bar C^y_{\LE,\val})=\|\Phi(C)-\mu_{\log,\val}^y\|_F.
\label{eq:radial-coordinate}
\end{equation}

Utilizing Eq.~\ref{eq:radial-coordinate}, we map the full manifold-valued distribution to a one-dimensional law to summarize the mass as a function of distance from the held-out real center. This is a push-forward construction, $\nu^y_{\val}=(r_y)_{\#}P_{\val}^y, \ \nu^y_m=(r_y)_{\#}Q_m^y$ where $(r_y)_\#\mu$ denotes the pushforward of the distribution $\mu$ under the map $r_y$. This allows us to sample a matrix from the distribution, compute its radial distance $r_y(C)$, and study the resulting distribution on $\mathbb{R}_{\ge 0}$. The radial curves in Fig.~\ref{fig:radial-calibration} highlight these pushforward laws. To account for radial mismatch, we use the 1-Wasserstein distance~\cite{kantorovich1939mathematical}:
\begin{align}
W^{\mathrm{rad}}_{m,y}=W_1\bigl(\nu_y^m, \nu_y^{\mathrm{val}}\bigr),
\label{eq:radial-w1}
\end{align}
and the log-quantile calibration error
\begin{align}
E^{\mathrm{rad}}_{m,y} = \frac{1}{|Q|}\sum_{q\in Q}\left|\log \frac{Q_{\nu_y^m}(q) + \varepsilon}{Q_{\nu_y^{\mathrm{val}}}(q) + \varepsilon}\right|,
\label{eq:radial-logerr}
\end{align}
where $Q\subset(0,1)$ is a fixed quantile grid and $Q_{\nu}(q)$ denotes the
quantile function of a distribution $\nu$.

\begin{proposition}
\label{prop:le_frechet}
$\bar C_y^{\mathrm{LE},\mathrm{val}} = \exp(\mu_y^{\log,\mathrm{val}})$ minimizes the expected squared log-Euclidean distance to the held-out real distribution: 
\begin{align*}
\bar C_y^{\mathrm{LE},\mathrm{val}} \in \arg\min_{C\in\mathcal{C}_d} \mathbb{E}_{X\sim P_y^{\mathrm{val}}} \bigl[d_{\mathrm{LE}}(X,C)^2\bigr]. 
\end{align*}
\end{proposition}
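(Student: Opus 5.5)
The plan is to move the problem through the log-Euclidean chart $\Phi$, where it becomes an ordinary least-squares problem in the vector space $\mathrm{Sym}(d)$. There the minimizer is the mean $\mu_y^{\log,\mathrm{val}}$, and we then map back with $\exp$.

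\textbf{Step 1 (reduction).} By Eq.~\eqref{eq:le-distance}, for any $C$ in the SPD cone we have $d_{\LE}(X,C)^2=\|\log X-\log C\|_F^2$. The matrix logarithm is a bijection from SPD matrices onto $\mathrm{Sym}(d)$ with inverse $\exp$. So minimizing over SPD matrices $C$ is the same as minimizing $J(S)=\E_{X\sim P_y^{\mathrm{val}}}\|\log X-S\|_F^2$ over $S\in\mathrm{Sym}(d)$, followed by setting $C=\exp(S)$. Assumption A1 guarantees that $\log X$ is defined almost surely. We additionally need $\E\|\log X\|_F^2<\infty$, which is automatic for the empirical validation measure; this makes both $J$ and $\mu_y^{\log,\mathrm{val}}=\E[\log X]\in\mathrm{Sym}(d)$ finite.

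\textbf{Step 2 (bias--variance identity).} Expand $\log X-S=(\log X-\mu)+(\mu-S)$ with $\mu=\mu_y^{\log,\mathrm{val}}$. The cross term $\E\langle\log X-\mu,\mu-S\rangle_F$ vanishes because $\E[\log X-\mu]=0$. This is the same orthogonality argument used in the proof of Proposition~\ref{prop:anchored_risk}, now applied to an unconditional expectation. It gives
\[
J(S)=J(\mu)+\|\mu-S\|_F^2 ,
\]
so $S=\mu$ is the unique minimizer over $\mathrm{Sym}(d)$. Pulling back through the bijection, $\exp(\mu)=\bar C_y^{\mathrm{LE},\mathrm{val}}$ uniquely minimizes the expected squared log-Euclidean distance over the SPD cone.

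\textbf{Step 3 (the main obstacle: restricting to $\mathcal{C}_d$).} The statement minimizes over $\mathcal{C}_d$, the unit-diagonal correlation matrices. The difficulty is that $\exp(\mu)$ need not have unit diagonal, even though every $\log X$ comes from a correlation matrix.

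The restriction argument itself is immediate in one direction. Since $\mathcal{C}_d\subset\mathrm{SPD}(d)$, any SPD minimizer that happens to lie in $\mathcal{C}_d$ is also a minimizer over $\mathcal{C}_d$. So the claim follows whenever $\bar C_y^{\mathrm{LE},\mathrm{val}}\in\mathcal{C}_d$, and that membership is the hypothesis the argument needs.

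Here is what I would try first.

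\emph{Check membership directly.} Show that $\mathrm{diag}(\exp(\mu))=\mathbf 1$ under the paper's setup. I do not expect this to hold in general.

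\emph{Otherwise, change the feasible set.} Interpret the $\arg\min$ over the SPD cone, which is the set on which $\exp(\mu)$ is the Fr\'echet mean. The $\mathcal{C}_d$-constrained minimizer is then the log-Euclidean projection of $\exp(\mu)$ onto $\mathcal{C}_d$. This holds because Step 2's identity reduces the constrained problem to minimizing $\|\mu-\log C\|_F$ over $C\in\mathcal{C}_d$.

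I would state explicitly whichever of these two readings the argument relies on, so the claim matches what is actually proved.
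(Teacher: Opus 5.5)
Your proposal follows essentially the same route as the paper's proof. Both reduce via $\log$ to a least-squares problem on $\mathrm{Sym}(d)$ whose unique minimizer is $\mu_y^{\log,\mathrm{val}}$; the paper completes the square and invokes strict convexity where you use the orthogonal bias--variance split. The paper likewise relaxes the feasible set from $\mathcal{C}_d$ to the ambient space, explicitly conceding that $\exp(\mu_y^{\log,\mathrm{val}})$ is a reference point in the SPD cone rather than a minimizer within $\mathcal{C}_d$.

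Your doubt in Step 3 is well founded. For $d=2$, take $X$ equally likely to have off-diagonal entry $\pm\tfrac12$; the log-Euclidean mean is then $\tfrac{\sqrt3}{2}I\notin\mathcal{C}_2$. So the statement holds only under the SPD-cone reading you propose, which is exactly what both arguments establish.
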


\begin{proof}
For any fixed $C\in\mathcal{C}_d$, by Eq.~\ref{eq:le-distance}, $d_{\mathrm{LE}}(X,C)^2 = \|\log X-\log C\|_F^2$. Therefore the optimization problem is
\begin{align*}
\min_{C\in\mathcal{C}_d}
\mathbb{E}\bigl[\|\log X-\log C\|_F^2\bigr].
\end{align*}

Now let $M = \log C$. Because every $C\in\mathcal{C}_d$ is SPD, the matrix logarithm $\log C$ is well defined and maps $\mathcal{C}_d$ injectively into $\operatorname{Sym}(d)$; the image is a proper submanifold of $\operatorname{Sym}(d)$. We relax the constraint and minimize over the ambient space $\operatorname{Sym}(d)$, so the unconstrained minimizer $\bar{C}_y^{\mathrm{LE,val}} = \exp(\mu_y^{\log,\mathrm{val}})$ serves as a reference point in the broader SPD manifold rather than a minimizer strictly within $\mathcal{C}_d$. Hence the problem becomes: $\min_{M\in\operatorname{Sym}(d)} \mathbb{E}\bigl[\|\log X-M\|_F^2\bigr].$

Eq.~\ref{eq:radial-w1} measures the minimal transport cost
required to transform the generated radial space into the held-out real radial
space. Eq.~\ref{eq:radial-logerr} provides a scale-sensitive comparison of
the two distributions across their entire support.

Let $Z = \log X$. Then $Z$ is a random element of the Euclidean vector space $\operatorname{Sym}(d)$, and the objective becomes $f(M) = \mathbb{E}\bigl[\|Z-M\|_F^2\bigr]$. We now expand this expression. Recall that for any matrices $A,B$, $\|A-B\|_F^2 = \langle A-B,A-B\rangle_F$, where the Frobenius inner product is $\langle A,B\rangle_F = \operatorname{tr}(A^\top B)$. Since all matrices here are symmetric, this is just the standard Euclidean inner product on $\operatorname{Sym}(d)$. Expanding the square gives $\|Z-M\|_F^2 = \|Z\|_F^2 - 2\langle Z,M\rangle_F + \|M\|_F^2$. Taking expectation on both sides, $f(M) = \mathbb{E}[\|Z\|_F^2] -2\langle \mathbb{E}[Z],M\rangle_F +\|M\|_F^2$. The first term, $\mathbb{E}[\|Z\|_F^2]$, does not depend on $M$, so it is a constant for the optimization. Thus minimizing $f(M)$ is equivalent to minimizing
\begin{align*}
g(M)=\|M\|_F^2 - 2\langle \mathbb{E}[Z],M\rangle_F.
\end{align*}

Now we explain why this objective is convex, and in fact \emph{strictly convex}. The term $\|M\|_F^2$ is a strictly convex quadratic function of $M$, because for any distinct $M_1,M_2\in\operatorname{Sym}(d)$ and any $t\in(0,1)$,
\begin{align*}
\|tM_1+(1-t)M_2\|_F^2<t\|M_1\|_F^2 + (1-t)\|M_2\|_F^2.
\end{align*}
The linear term $-2\langle \mathbb{E}[Z],M\rangle_F$ is both convex and concave, since linear functions do not affect strict convexity. Therefore $g(M)$, and hence $f(M)$, is strictly convex. A strictly convex function has at most one minimizer.

To find the minimizer explicitly, we complete the square:
\begin{align*}
\|M\|_F^2 - 2\langle \mathbb{E}[Z],M\rangle_F=\|M-\mathbb{E}[Z]\|_F^2 - \|\mathbb{E}[Z]\|_F^2.
\end{align*}

Therefore $f(M) = \mathbb{E}[\|Z\|_F^2] - \|\mathbb{E}[Z]\|_F^2 + \|M-\mathbb{E}[Z]\|_F^2$.
Again, the first two terms are constants with respect to $M$. So $f(M)$ is minimized exactly when $\|M-\mathbb{E}[Z]\|_F^2 = 0$, that is, $M = \mathbb{E}[Z]$. Recalling that $Z=\log X$, we obtain $M = \mathbb{E}[\log X] = \mu_y^{\log,\mathrm{val}}$. Since $M=\log C$, this gives $C = \exp(\mu_y^{\log,\mathrm{val}}) = \bar C_y^{\mathrm{LE},\mathrm{val}}$.
By strict convexity, this minimizer is unique.
\end{proof}

\begin{figure*}
    \centering
    \includegraphics[width=\linewidth]{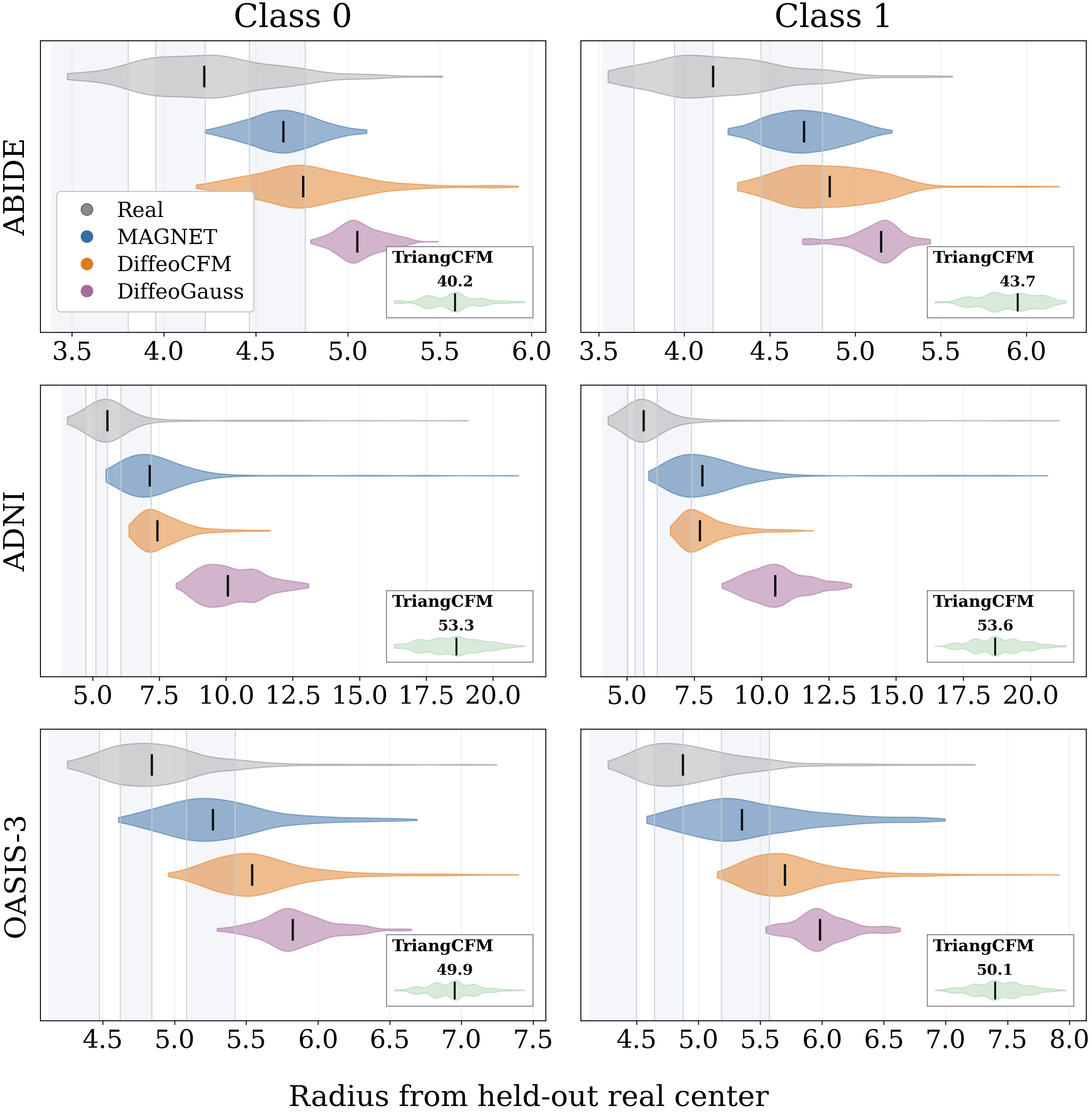}
    \caption{\footnotesize
\textbf{Global radial calibration in log-Euclidean geometry.} We plot the distribution of distances from the held-out real data center for each class and dataset. A systematic left shift relative to Real \textit{(grey)} indicates manifold contraction, while a right shift indicates overdispersion. MAGNET \textit{(blue)} consistently demonstrates the closest alignment to the real distribution across all datasets, while baselines like TriangCFM \textit{(insets)} exhibit significant overdispersion.}
\vspace{-1.5em}
\label{fig:radial-calibration}
\end{figure*}

\begin{proposition}
\label{prop:radial_consistency}
If the generated class-conditional distribution matches the held-out real distribution, i.e., $Q_y^m = P_y^{\mathrm{val}}$, then the induced radial laws are identical: $\nu_y^m = \nu_y^{\mathrm{val}}$.
Consequently, $W^{\mathrm{rad}}_{m,y} = 0 \quad \text{and} \quad E^{\mathrm{rad}}_{m,y} = 0$.
\end{proposition}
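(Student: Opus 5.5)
The plan is to observe that both radial laws are pushforwards of their respective distributions under one and the same map, and then read off the two metrics. The key point is that the radial coordinate $r_y$ in Eq.~\eqref{eq:radial-coordinate} is built only from real data, through the center $\mu_{\log,\val}^y$ (assumption A3). So $r_y$ is a fixed deterministic function on $\Cd$, whatever method $m$ is being evaluated. It is also Borel measurable: $\Phi=\log$ is continuous on SPD matrices (A1), so $C\mapsto \|\log C-\mu_{\log,\val}^y\|_F$ is continuous. Hence the pushforward $(r_y)_\#\mu$ is a well-defined Borel probability measure on $\R_{\ge 0}$ for any distribution $\mu$ on $\Cd$.

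First I would check the equality of laws on sets. For any Borel set $A\subseteq\R_{\ge0}$,
\[
\nu_y^m(A)=Q_y^m(r_y^{-1}(A))=P_y^{\val}(r_y^{-1}(A))=\nu_y^{\val}(A),
\]
using the hypothesis $Q_y^m=P_y^{\val}$. This gives $\nu_y^m=\nu_y^{\val}$.

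Next I would conclude for each metric. $W_1$ is a metric on probability measures with finite first moment, so $W^{\mathrm{rad}}_{m,y}=W_1(\nu,\nu)=0$. To be explicit, the diagonal coupling $(R,R)$ with $R\sim\nu$ has zero cost, and $W_1\ge0$. The one obstacle is finiteness of the first moment. If it fails, I would note that the diagonal coupling still gives cost $0$, so $W_1=0$ in the extended sense; alternatively I would assume $\E\|\log X\|_F<\infty$, which is already implicit in defining $\mu_{\log,\val}^y$. For $E^{\mathrm{rad}}_{m,y}$, a quantile function $Q_\nu(q)=\inf\{r: \nu([0,r])\ge q\}$ depends only on the law $\nu$. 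So $Q_{\nu_y^m}(q)=Q_{\nu_y^{\val}}(q)$ for every $q$ in the grid $Q$, each log-ratio equals $\log 1=0$ (the $\varepsilon>0$ keeps it well defined), and the average in Eq.~\eqref{eq:radial-logerr} is zero.

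Finally, I would remark that this is a population-level statement. With finite samples the empirical radial laws differ, and the metrics are only consistent estimators of zero.
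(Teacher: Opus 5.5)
Your proposal is correct and follows the paper's own route: continuity (hence measurability) of $r_y$, equality of the pushforwards via $\nu(A)=Q(r_y^{-1}(A))$ for every measurable $A$, and then $W_1=0$ together with every log-quantile term vanishing because the quantile functions coincide. Your extra remarks add detail the paper leaves implicit: that $r_y$ is fixed by real data only, that the first moment is finite, and that $\varepsilon$ keeps each log-ratio well defined.
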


\begin{proof}
Recall Eq.~\ref{eq:le-distance}, the matrix logarithm is continuous on $\mathcal{C}_d$, subtraction by a fixed matrix is continuous, and the Frobenius norm is also continuous. Therefore $r_y$ is continuous, hence measurable. By definition of pushforward, for any measurable set $A \subseteq \mathbb{R}_{\ge 0}$, $\nu_y^m(A) = Q_y^m\bigl(r_y^{-1}(A)\bigr), \quad \nu_y^{\mathrm{val}}(A) = P_y^{\mathrm{val}}\bigl(r_y^{-1}(A)\bigr)$. If $Q_y^m = P_y^{\mathrm{val}}$ as probability measures on $\mathcal{C}_d$, then for every measurable $A$, $Q_y^m\bigl(r_y^{-1}(A)\bigr) = P_y^{\mathrm{val}}\bigl(r_y^{-1}(A)\bigr)$. Hence $\nu_y^m(A) = \nu_y^{\mathrm{val}}(A)$, which implies $\nu_y^m = \nu_y^{\mathrm{val}}$.

If two probability distributions are identical, then any proper distance or discrepancy between them must vanish. The 1-Wasserstein distance $W_1(\nu_y^m, \nu_y^{\mathrm{val}})$ is zero if and only if the two measures are identical. The quantile functions also coincide, i.e., $Q_{\nu_y^m}(q) = Q_{\nu_y^{\mathrm{val}}}(q) \quad \text{for all } q \in (0,1)$, which implies every term in Eq.~\ref{eq:radial-logerr} is zero. Therefore, $W^{\mathrm{rad}}_{m,y} = 0$ and $E^{\mathrm{rad}}_{m,y} = 0$.
\end{proof}
\textbf{\textit{Remark.}} Proposition~\ref{prop:radial_consistency} establishes a one-directional necessity: matching radial laws is required, but fundamentally insufficient, for full distribution agreement. Intuitively, the radial mapping $r_y$ acts as an isotropic aggregator, collapsing the $\frac{d(d-1)}{2}$-dimensional tangent space of $\operatorname{Sym}(d)$ into a single non-negative scalar. Because this operation strips away all angular information, two distributions can easily share identical radial profiles while occupying completely disjoint directional regions. Geometrically, this is analogous to comparing a uniform spherical shell to a narrow, dense equatorial ring; both share the same radial distance from the center, yet their structures differ entirely. In the context of connectomes, this implies that a generator might perfectly replicate the correct \textit{magnitude} of overall network deviation from the mean class state, but allocate those topological changes to entirely incorrect functional edges or subnetworks. Therefore, while vanishing radial discrepancy ($W^{\mathrm{rad}}_{m,y} = 0$) provides a rigorous, necessary confirmation that global variance is calibrated, it must be paired with directional or topological evaluations to guarantee true conditional support recovery.

\textbf{\textit{Empirical Takeaway.}} Figure~\ref{fig:radial-calibration} visualizes the full radial pushforward distributions and their Wasserstein distances to the real held-out center. Table~\ref{tab:radial-median} reports the corresponding \emph{median} log-Euclidean radial distances for MAGNET, DiffeoCFM, and DiffeoGauss. TriangCFM is excluded from the table because its severe overdispersion (median radial distances exceeding 40 units, visible as inset annotations in Fig.~\ref{fig:radial-calibration}) places it far outside the axis range of the remaining methods, making a shared table misleading. Across all three datasets, MAGNET achieves the smallest deviation from the real center, confirming tighter global geometric alignment. If a generator matches the true distribution, then this radial law must match exactly, and both Wasserstein and quantile discrepancies vanish. However, radial agreement alone does not guarantee full geometric fidelity.

\subsection{Local Support Occupancy via Tangent Charts}\label{subsec:tangent_occupancy}
While radial analysis verifies that the generated samples exhibit the correct global scale around the real class center (Appendix~\ref{app:radialanalysis}), it does not reveal whether the generated probability mass correctly aligns with the true support of the data distribution. To investigate this local occupancy, we construct a low-dimensional tangent chart using held-out real data and evaluate how the generated samples project onto this localized coordinate space. 

\noindent
\textbf{Log-domain centering and vectorization.} For each held-out real connectome $C_i^{\mathrm{val},y}$, we define
\begin{align}
z_i^{\mathrm{val},y} = \operatorname{vech}_0 \bigl(\log C_i^{\mathrm{val},y} - \mu_y^{\log,\mathrm{val}}\bigr) \in \mathbb{R}^p, \quad p = \frac{d(d-1)}{2}
\end{align}
Here $\operatorname{vech}_0(M)$ denotes vectorization of the nonredundant off-diagonal entries of $M$. This step converts each connectome into a vector in a Euclidean space, centered at the held-out real mean.

\noindent
\textbf{A real-defined tangent chart.} Let $W_y \in \mathbb{R}^{p\times 2}$ denote the first two principal directions (PCA) of the held-out real vectors $\{z_i^{\mathrm{val},y}\}$. We define the
two-dimensional chart coordinate of any connectome $C$ by
\begin{align}
u_y(C) = W_y^\top \operatorname{vec}_{\triangle} \bigl(\log C - \mu_y^{\log,\mathrm{val}}\bigr)
\label{eq:tangent-chart}
\end{align}

Importantly, the basis $W_y$ is computed using \emph{held-out real data only}. Generated samples are projected into a chart they did not define.

\noindent
\textbf{Real support region in the chart.} Let $\widehat{\rho}_y^{\mathrm{val}}$ be a kernel density estimate of the distribution of $\{u_y(C_i^{\mathrm{val},y})\}$. For a chosen threshold $\tau_y$, the support region:
\begin{align}
S_y = \{\, u \in \mathbb{R}^2 : \widehat{\rho}_y^{\mathrm{val}}(u) \ge \tau_y \,\}
\end{align}
The KDE uses a Gaussian kernel with Scott's bandwidth~\cite{scott2011multivariate} $h_y = n_y^{-1/6}$ and empirical covariance $\widehat{\Sigma}_y$ of the real chart points~\cite{devroye2013probabilistic}. The threshold $\tau_y = \operatorname{Quantile}_{0.10}\!\left(\{\widehat{\rho}_y^{\mathrm{val}}(u_y(C_i^{\mathrm{val},y}))\}_i\right)$ is the 10th percentile of real KDE scores, so $S_y$ approximates a central $90\%$ real-support region; all parameters are derived from held-out real data only. Intuitively, $S_y$ captures the region where held-out real data is concentrated in the chart.
\begin{wraptable}{r}{0.47\textwidth}
    \centering
    \scriptsize
    \setlength{\tabcolsep}{3pt}
    \newcommand{\rd}[2]{#1 {\scriptsize\color{red}(+#2)}}
    \caption{\footnotesize\textbf{Geometric Fidelity Analysis.} Median log-Euclidean
    radial distances to the held-out real class center (Eq.~\ref{eq:radial-coordinate}). Red deltas ($\Delta$) indicate the shift relative to the ground-truth median. TriangCFM is excluded due to severe overdispersion (see Fig.~\ref{fig:radial-calibration} insets). D-CFM is DiffeoCFM; D-Gauss is DiffeoGauss. CI indicates the class index.}
    
    \begin{tabular}{ll cccc}
      \toprule
      \textbf{Data} & \textbf{CI} & \textbf{Real} & \textbf{MAGNET} & \textbf{D-CFM} & \textbf{D-Gauss} \\
      \midrule
      \multirow{2}{*}{ABIDE} & C0 & 4.22 & \textbf{\rd{4.65}{.43}} & \rd{4.76}{.53} & \rd{5.05}{.83} \\
                             & C1 & 4.17 & \textbf{\rd{4.70}{.54}} & \rd{4.85}{.68} & \rd{5.15}{.98} \\
      \midrule
      \multirow{2}{*}{ADNI}  & C0 & 5.54 & \textbf{\rd{7.15}{1.6}} & \rd{7.43}{1.9} & \rd{10.1}{4.5} \\
                             & C1 & 5.62 & \rd{7.80}{2.2} &  \textbf{\rd{7.71}{2.1}} & \rd{10.5}{4.9} \\
      \midrule
      \multirow{2}{*}{OASIS} & C0 & 4.84 & \textbf{\rd{5.27}{.43}} & \rd{5.54}{.70} & \rd{5.83}{.98} \\
                             & C1 & 4.88 & \textbf{\rd{5.35}{.48}} & \rd{5.70}{.82} & \rd{5.98}{1.1} \\
      \bottomrule
    \end{tabular}
    
    \label{tab:radial-median}
    \vspace{-2em}
\end{wraptable}

\paragraph{Inside-support rate.} We measure how much generated mass falls inside this real-defined region via
\begin{align}
\mathrm{ISR}_{m,y}=\mathbb{P}_{C \sim Q_y^m}\bigl(u_y(C) \in S_y\bigr)
\label{eq:isr}
\end{align}
This is a simple and interpretable quantity: the fraction of generated samples that lie inside the held-out real support in the tangent chart.

\begin{proposition}
\label{prop:isr_consistency}
Suppose the chart distributions agree, i.e., $(u_y)_\# Q_y^m = (u_y)_\# P_y^{\mathrm{val}}$, and assume that the boundary of $S_y$ has zero probability under the held-out real chart distribution. Then $\mathrm{ISR}_{m,y} = \mathbb{P}_{C \sim P_y^{\mathrm{val}}}\bigl(u_y(C) \in S_y\bigr)$.
\end{proposition}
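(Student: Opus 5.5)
The plan mirrors the pushforward argument used for Proposition~\ref{prop:radial_consistency}, with the chart map $u_y$ of Eq.~\eqref{eq:tangent-chart} in place of the radial map $r_y$.

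\emph{Step 1 (setup).} Write $\mu^m_y := (u_y)_\# Q_y^m$ and $\mu^{\mathrm{val}}_y := (u_y)_\# P_y^{\mathrm{val}}$, which are Borel probability measures on $\mathbb{R}^2$. The map $u_y$ is the composition of the matrix logarithm (continuous on $\mathcal{C}_d$), a fixed translation, vectorization, and a linear map $W_y^\top$. It is therefore continuous, hence Borel measurable, so both pushforwards are well defined.

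\emph{Step 2 (measurability of $S_y$).} The region $S_y$ is a superlevel set of the Gaussian KDE $\widehat{\rho}_y^{\mathrm{val}}$. This KDE is a finite sum of Gaussians and hence continuous, so $S_y=\{\widehat{\rho}_y^{\mathrm{val}}\ge\tau_y\}$ is closed and in particular Borel. Importantly, $S_y$, $W_y$, $\mu^{\log,\mathrm{val}}_y$, the bandwidth and $\tau_y$ are all fixed by held-out real data (Assumption A3). So $S_y$ is a deterministic set that does not depend on the generator.

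\emph{Step 3 (rewrite and conclude).} By definition of pushforward,
\[
\mathrm{ISR}_{m,y}=Q_y^m\bigl(u_y^{-1}(S_y)\bigr)=\mu^m_y(S_y).
\]
The hypothesis $\mu^m_y=\mu^{\mathrm{val}}_y$ as measures then gives $\mu^m_y(S_y)=\mu^{\mathrm{val}}_y(S_y)=P_y^{\mathrm{val}}\bigl(u_y(C)\in S_y\bigr)$, which is the claim.

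\emph{Role of the boundary assumption.} Once the measures are equal on all Borel sets, the zero-boundary condition is not logically needed. I will remark that it becomes relevant only if $S_y$ is replaced by its interior or closure, or by an empirical or thresholded approximation. In that case $\mu^{\mathrm{val}}_y(\partial S_y)=0$ ensures that $\mu(\operatorname{int}S_y)=\mu(S_y)=\mu(\overline{S_y})$, so the ISR does not depend on the convention. It also makes $S_y$ a continuity set, which would support a Portmanteau-type extension when $\mu^m_y\to\mu^{\mathrm{val}}_y$ only weakly.

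\emph{Main obstacle.} The only delicate point is Step 2: making sure $S_y$ is measurable and fixed independently of $Q_y^m$. Once that is settled, the proof is a one-line pushforward identity. Finally, I will note the same one-directional caveat as in the remark after Proposition~\ref{prop:radial_consistency}: matching ISR is necessary but not sufficient for full agreement of the distributions.
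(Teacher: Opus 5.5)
Your proposal is correct and follows the same pushforward argument as the paper: continuity of $u_y$, measurability of $S_y$, and then evaluating the equal pushforward measures at $A=S_y$. Your remark that the zero-boundary hypothesis is logically superfluous is also right. The paper's proof invokes the continuity-set condition at the last step, but equality of $(u_y)_\# Q_y^m$ and $(u_y)_\# P_y^{\mathrm{val}}$ on all Borel sets already gives the identity.
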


\begin{proof}
The map $u_y(C) = W_y^\top \operatorname{vec}_{\triangle}(\log C - \mu_y^{\log,\mathrm{val}})$ is a composition of continuous operations (logarithm, subtraction, linear projection), hence it is continuous and therefore measurable. The set $S_y$ is defined through a density threshold and is therefore measurable. By assumption, $(u_y)_\# Q_y^m = (u_y)_\# P_y^{\mathrm{val}}$. This means that for any measurable set $A \subseteq \mathbb{R}^2$,
\begin{align*}
\mathbb{P}_{C \sim Q_y^m}(u_y(C) \in A)=\mathbb{P}_{C \sim P_y^{\mathrm{val}}}(u_y(C) \in A).
\end{align*}

Taking $A = S_y$, we obtain
\begin{align*}
\mathbb{P}_{C \sim Q_y^m}(u_y(C) \in S_y)=\mathbb{P}_{C \sim P_y^{\mathrm{val}}}(u_y(C) \in S_y),
\end{align*}
provided that $S_y$ is a continuity set of the distribution. This holds if the boundary of $S_y$ has zero probability, which is assumed. Thus, $\mathrm{ISR}_{m,y} = \mathbb{P}_{C \sim P_y^{\mathrm{val}}}(u_y(C) \in S_y)$.
\end{proof}

\begin{figure*}
    \centering
    \includegraphics[width=\linewidth]{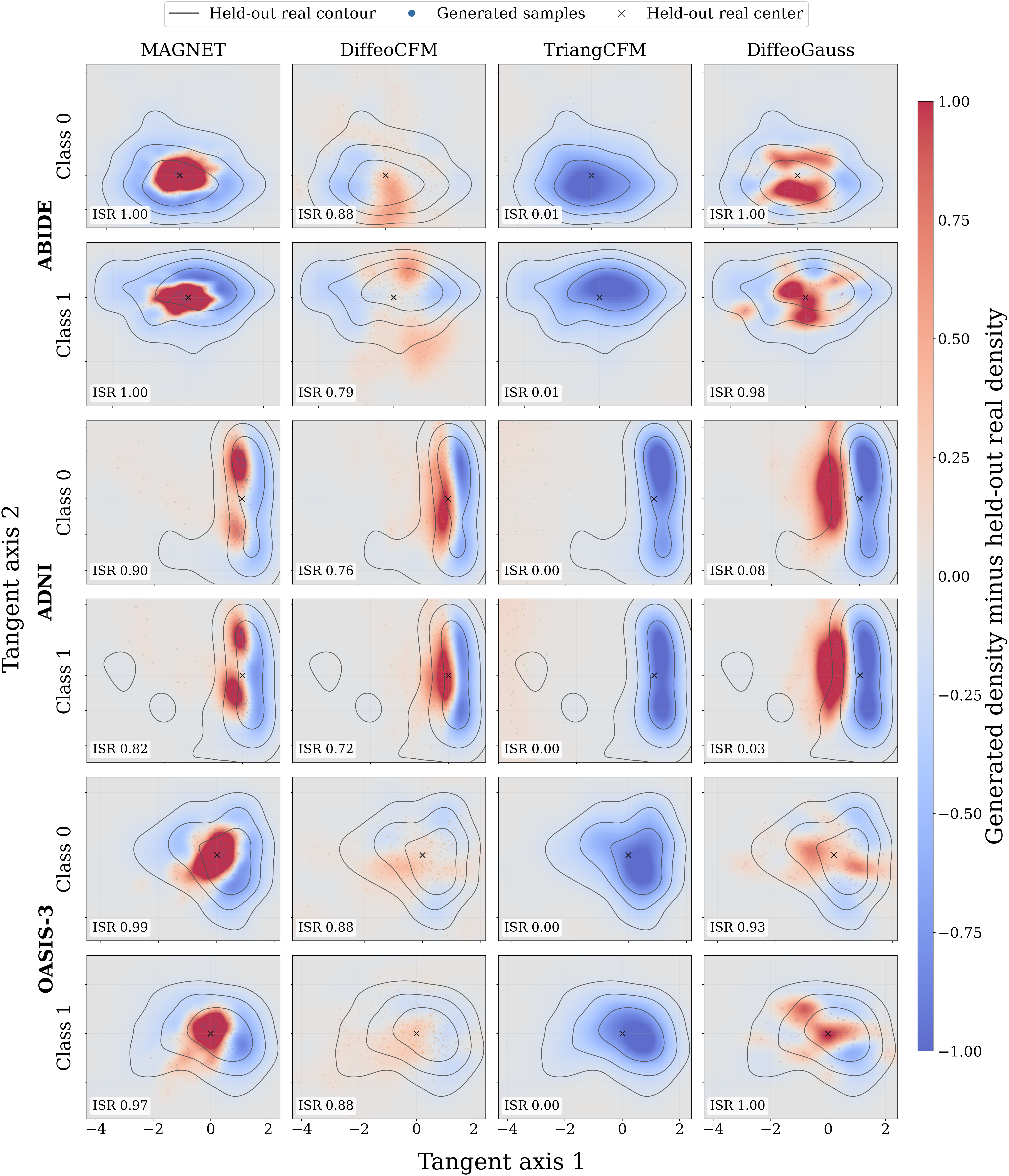}
    \caption{\footnotesize \textbf{Held-out real tangent-manifold occupancy atlas.} We project generated samples into a 2D tangent chart defined strictly by held-out real data to evaluate local support occupancy. \textbf{Visual guide:} Gray contours represent the true data distribution (real support, $S_y$). The colormap indicates the density difference (generated minus real), where red highlights areas of over-generation (excess mass) and blue indicates under-generation (missing mass). \textbf{Takeaway:} MAGNET tightly bounds its generated samples within the valid real contours, consistently achieving the highest Inside-Support Rate (ISR, Eq.~\ref{eq:isr}). Conversely, DiffeoCFM leaks mass beyond the true boundaries, TriangCFM exhibits severe mode collapse, and DiffeoGauss hallucinates out-of-distribution samples under the high heterogeneity of the ADNI cohort.}
    \label{fig:manifold-atlas}
\end{figure*}

\textit{\textbf{Remark.}} According to proposition \ref{prop:isr_consistency}, samples generated by a generator that perfectly aligns with the distribution will also align with the validation support data. Thus, the empirical results of Inside-Support Rate, defined as $\mathrm{ISR}_{m,y}$ supported by Eq.~\ref{eq:isr} directly helps us to identify the two geometric failures: \textit{over-dispersion} where the generator decides to place distribution mass outside the high-density real regions denoted by $S_y$ and \textit{under-coverage} where the generator fails to cover the distribution of the true support set.

\textit{\textbf{Empirical Takeaway.}} Figure \ref{fig:manifold-atlas} and Table \ref{tab:manifold-atlas-isr} quantify the spatial alignment of the generated density using the inside-support rate ($\mathrm{ISR}_{m,y}$) defined in Equation \ref{eq:isr}. It can be observed that the atlas (see Figure~\ref{fig:manifold-atlas}) shows distinct behavioral structures across the baselines. TriangCFM (\textit{T-CFM in Table~\ref{tab:manifold-atlas-isr}}) shows severe mode collapse and fails to populate the interior real support contour ($S_y$), generating $\mathrm{ISR}_{m,y}$ near zero across all clinical cohorts. 

\begin{wraptable}{r}{0.45\textwidth} 
  \centering
  \scriptsize
  \setlength{\tabcolsep}{1pt} 
  
  \caption{\footnotesize
  \textbf{Quantitative tangent-manifold support recovery.} Inside-support rate ($\mathrm{ISR}_{m,y} \uparrow$) measures the fraction of generated samples falling within the held-out real support region (Eq.~\ref{eq:isr}).This quantifies whether a model accurately allocates probability mass to validation-supported regions of the tangent-manifold atlas (Fig.~\ref{fig:manifold-atlas}) without relying on regions it defined itself.}
  
  \begin{tabular}{llcccc}
  \toprule
  \textbf{Dataset} & \textbf{Class} & \textbf{MAGNET} & \textbf{D-CFM} & \textbf{T-CFM} & \textbf{D-Gauss} \\
  \midrule
  \multirow{3}{*}{ABIDE} 
        & Class 0 & \textbf{1.000} & 0.879 & 0.008 & 0.996 \\
        & Class 1 & \textbf{1.000} & 0.787 & 0.008 & 0.983 \\
  \rowcolor{LightGray}
  \cellcolor{white} & Avg. & \textbf{1.000} & 0.833 & 0.008 & 0.990 \\
  \addlinespace
  
  \multirow{3}{*}{ADNI}  
        & Class 0 & \textbf{0.896} & 0.762 & 0.000 & 0.083 \\
        & Class 1 & \textbf{0.821} & 0.721 & 0.000 & 0.029 \\
  \rowcolor{LightGray}
  \cellcolor{white} & Avg. & \textbf{0.859} & 0.742 & 0.000 & 0.056 \\
  \addlinespace
  
  \multirow{3}{*}{OASIS-3} 
          & Class 0 & \textbf{0.988} & 0.879 & 0.000 & 0.925 \\
          & Class 1 & 0.967 & 0.875 & 0.000 & \textbf{0.996} \\
  \rowcolor{LightGray}
  \cellcolor{white} & Avg. & \textbf{0.978} & 0.877 & 0.000 & 0.961 \\
  \addlinespace
  \midrule
  
  \rowcolor{LightBlue}
  \multicolumn{2}{l}{\textbf{Overall avg.}} & \textbf{0.945} & 0.817 & 0.003 & 0.669 \\
  \bottomrule
  \end{tabular}
  \label{tab:manifold-atlas-isr}
  \vspace{-5em} 
  
\end{wraptable}

DiffeoGauss (\textit{D-Gauss in Table~\ref{tab:manifold-atlas-isr}}) populates the real support contour very well on ABIDE and OASIS-3, but completely hallucinates on ADNI due to the severe heterogeneity present inside the dataset and ends up projecting beyond the true contours. On the other hand, DiffeoCFM (\textit{D-CFM in Table~\ref{tab:manifold-atlas-isr}}) tracks the correct macroscopic structure but leaks the probability mass just outside the real boundaries. In contrast, our proposed method, MAGNET, localizes the generated density inside the dominant contour of held-out real support which allows us to achieve the highest ISR across all the datasets.


\section{Reverse-time refinement}\label{app:reversetime}
In the preceding analysis, we established that generated samples from the different baselines showcase distinct behaviors. The radial calibration (Appendix \ref{app:radialanalysis}) helped to identify the global scale mismatch and the tangent occupancy (Appendix \ref{subsec:tangent_occupancy}) revealed the local issues of \textit{over-dispersion} and \textit{under-coverage} with respect to true support $S_y$. However, we can only assess the terminal samples ($t=0$) of the generative process using those analysis. We also need to understand \textit{how} the probability mass navigates towards the held-out real manifold during the denoising. Hence, we now move beyond the static endpoints and inspect the dynamics of the reverse-time path $C_{m,t}^y$ as $t \to 0$.

\subsection{Reverse-time tangent-manifold trajectory}
Let $C_{m,t}^y$ denote the connectome generated by method $m$ at reverse-time checkpoint $t\in\{5,4,3,2,1,0\}$ for class $y$. We project each checkpoint through the same held-out real chart (Eq.~\ref{eq:tangent-chart}): $u_{m,t}^y = u_y(C_{m,t}^y)$. The reverse-time trajectory plots in Fig.~\ref{fig:trajectory-manifold-0} and \ref{fig:trajectory-manifold-1} therefore answer a sharply defined question: \textit{as reverse time progresses, does the generated cloud move into the same held-out real support region identified in the final-sample occupancy atlas?}

The associated checkpoint-wise inside-support rate is
\begin{equation}
\ISR_{m,y}(t)=\Prob\!\left(u_{m,t}^y\in S_y\right).
\label{eq:isr-time}
\end{equation}

\textbf{\textit{Remark.}} Note, we do not assume that $t\mapsto\ISR_{m,y}(t)$ is monotone. The reverse-time quality metrics in diffusion and flow-based generative trajectories need not to evolve monotonically with time~\cite{wu2025optimal, li2026kinetic}. The regime the path follows is much more important. 

\textbf{\textit{Empirical Takeaway.}} Trajectory manifolds are shown for split 0 using all held-out validation subjects of the selected class: ABIDE (n$_0$=52, n$_1$=33), ADNI (n$_0$=103, n$_1$=145), and OASIS-3 (n$_0$=109, n$_1$=65). DiffeoCFM enters the vicinity of the real contour support earlier but MAGNET finishes with tighter support alignment in the second half of the reverse process to ensure high alignment to the real support contour. On OASIS-3, we see a modest gap between the two methods with MAGNET remaining near the real support contour and DiffeoCFM decaying mildly from $1.00$ to $0.91$. However, ABIDE highlights different dynamics. Both the methods remain inside the support contour throughout the denoising trajectory, indicating easier adaptability to the data as compared to the previous two datasets.

\begin{figure*}[!t]
    \centering
    \includegraphics[width=\linewidth]{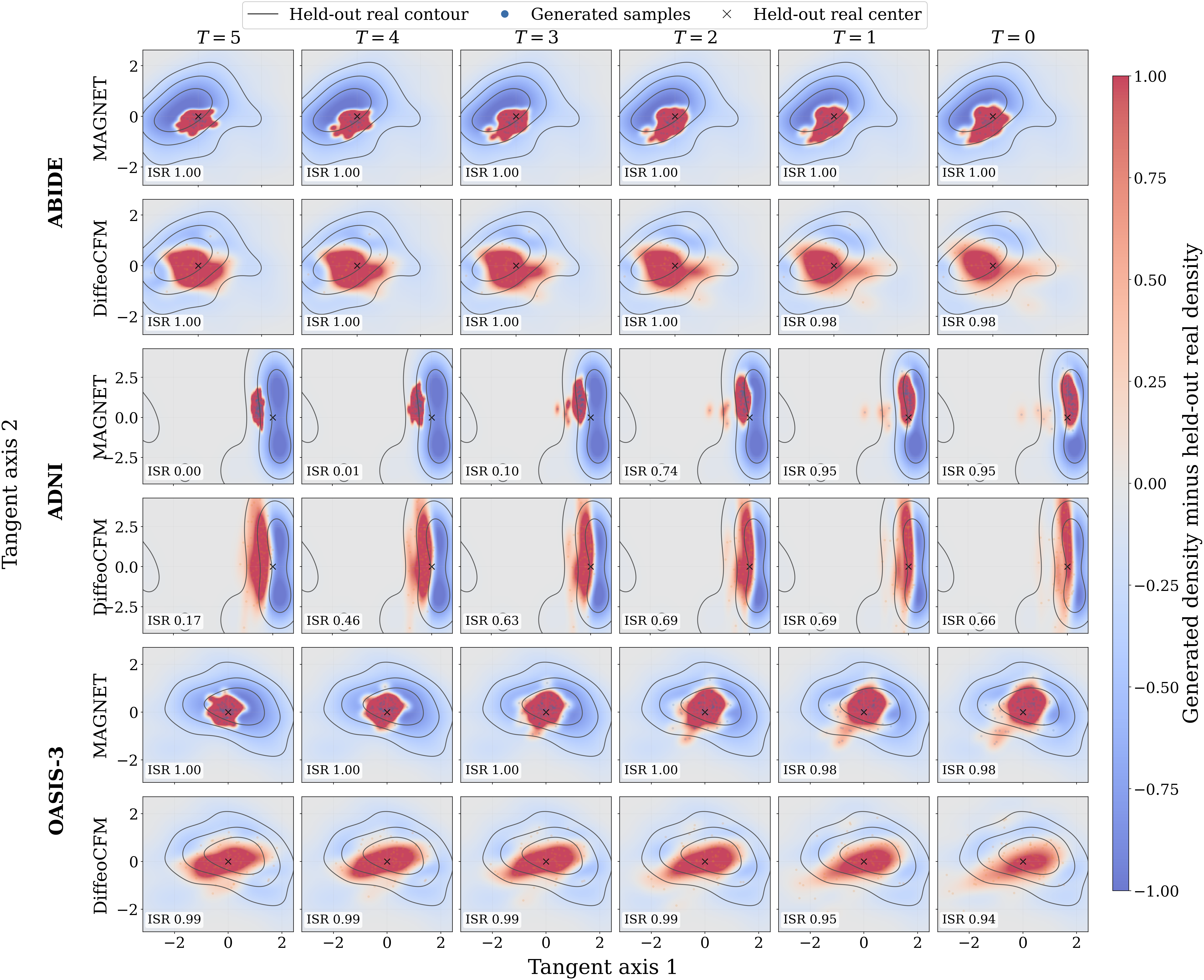}
    \caption{\footnotesize
    \textbf{Dynamic support alignment for healthy controls (Class 0).} We track the generated probability mass over the reverse-time denoising trajectory ($T=5 \rightarrow 0$). Each checkpoint is projected into the fixed, real-data-defined tangent chart established in our occupancy atlas. \textbf{Visual Guide:} The solid gray contour demarcates the true held-out data support ($S_y$); the colormap tracks localized density error. \textbf{Takeaway:} MAGNET strictly bounds its trajectory within the interior of the true held-out data contour, achieving near-perfect inside-support rates without the boundary leakage typical of transport baselines.
    }
    \label{fig:trajectory-manifold-0}
\end{figure*}

\begin{figure*}[!t]
    \centering
    \includegraphics[width=\linewidth]{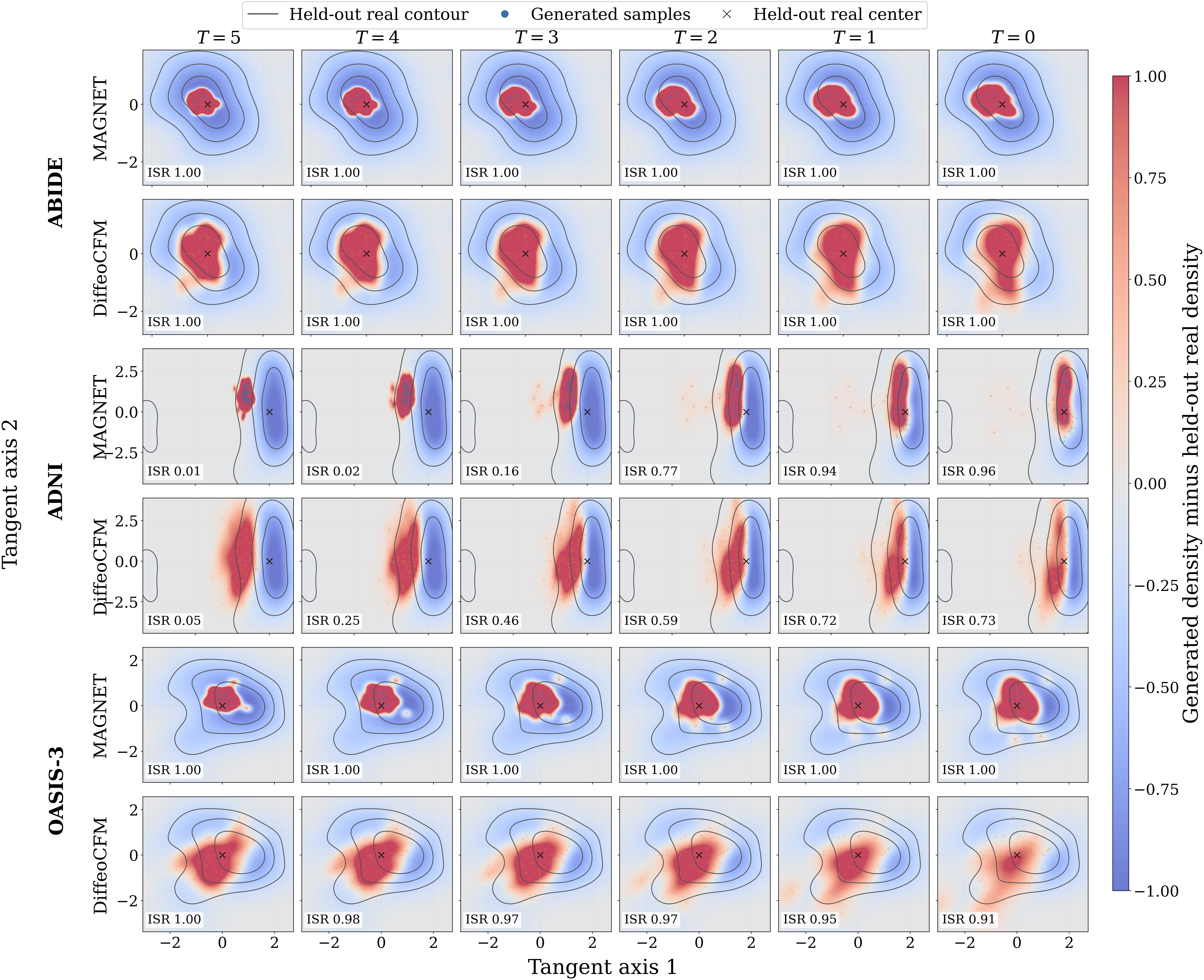}
    \caption{\footnotesize
    \textbf{Dynamic support alignment for heterogeneous disease cohorts (Class 1).} Tracking the reverse-time trajectory ($T=5 \rightarrow 0$) against the fixed real-data tangent chart. \textbf{Visual Guide:} The solid gray contour demarcates the true held-out data support ($S_y$); the colormap tracks localized density error. \textbf{Takeaway:} The high structural heterogeneity of diseased subjects causes baselines like DiffeoCFM to stagnate early and bleed mass outside valid clinical regions. Conversely, MAGNET actively tightens its alignment during the critical final denoising steps, successfully navigating complex, multi-modal topologies to terminate securely within the true clinical support.
    }
    \label{fig:trajectory-manifold-1}
\end{figure*}

\subsection{Stepwise residual refinement on a fixed held-out real edge mask}
The reverse trajectory is geometric but it is not able to show edge-specific generation by the methods. We examine the residual mean connectomes on a fixed real-defined edge mask to understand how the first order structure is refined by each method over the denoising timesteps.

For class $y$, let $\bar C_{\val}^y = \E_{C\sim P_{\val}^y}[C]$ be the held-out real class mean. Let $M_y$ be the binary mask containing the top held-out real edges according to $|\bar C_{\val}^y(a,b)|$. For method $m$ at checkpoint $t$, we define the generated class mean $\bar C_{m,t}^y = \E[C_{m,t}^y]$. We compare the fixed-mask residual
\begin{equation}
R_{m,t}^y=\bigl(\bar C_{m,t}^y-\bar C_{\val}^y\bigr)\odot M_y.
\label{eq:residual-strip}
\end{equation}

First column of Fig.~\ref{fig:connectome_class0} and \ref{fig:connectome_class1} shows the real held-out mean on the mask itself. Each subsequent column shows the ``generated - real'' residual using the same fixed mask. Since we keep the mask and the color scale constant across columns, we can directly compare across denoising checkpoints, avoiding any threshold drift that can arise if the support definition changes. We report two scalar statistic on the masked edges:
\begin{equation}
\MAE_{m,t}^y=\frac{1}{|M_y|}\sum_{(a,b)\in M_y}\left|\bar C_{m,t}^y(a,b)-\bar C_{\val}^y(a,b)\right|,
\label{eq:masked-mae}
\end{equation}
and
\begin{equation}
\Corr_{m,t}^y=\operatorname{Corr}\left(\{\bar C_{m,t}^y(a,b)\}_{(a,b)\in M_y},\{\bar C_{\val}^y(a,b)\}_{(a,b)\in M_y}\right).
\label{eq:masked-corr}
\end{equation}

\begin{figure}[!t]
    \centering
    \includegraphics[width=\linewidth]{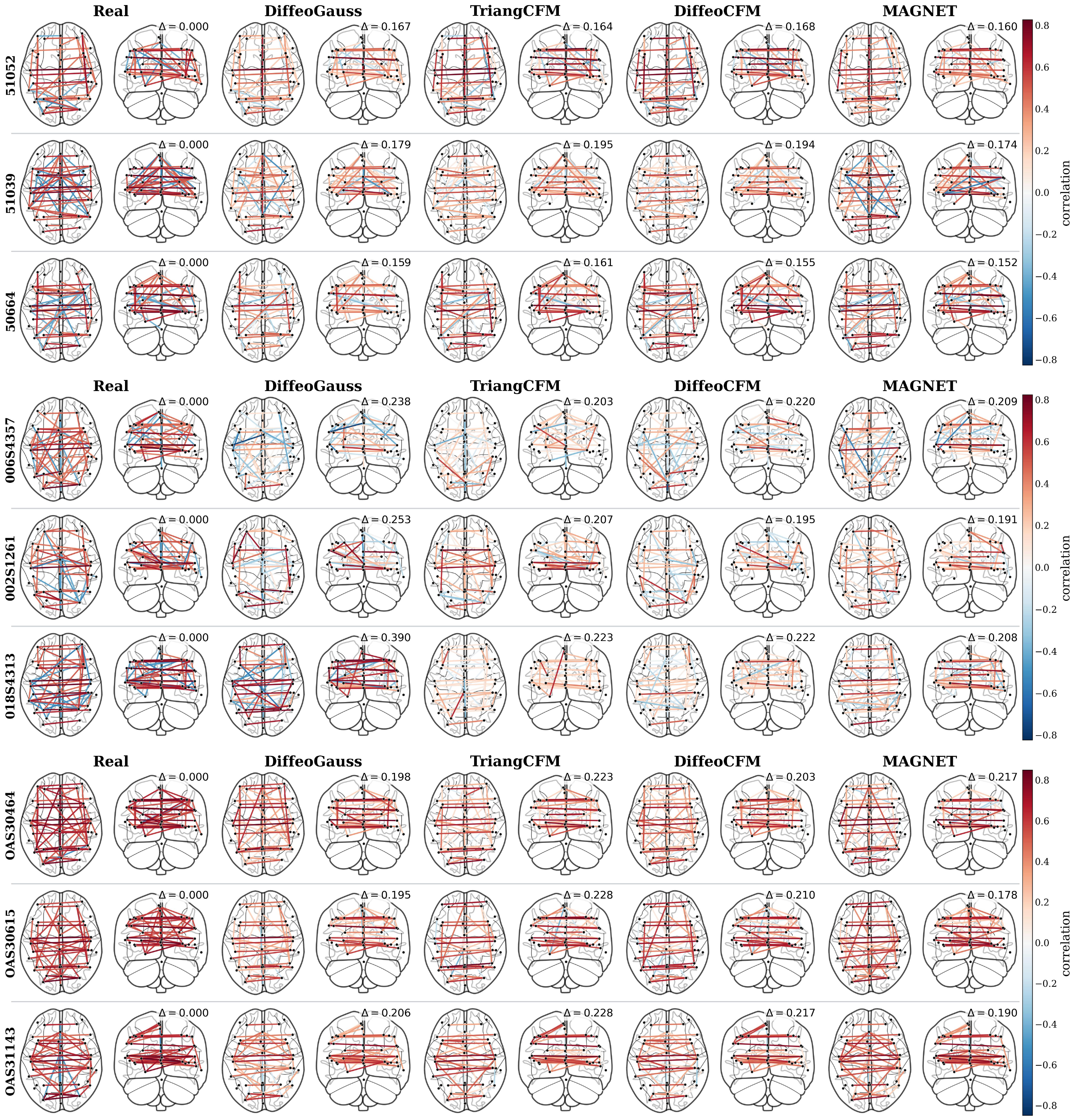}
    \caption{\footnotesize
    \textbf{Held-out connectome comparisons for class $0$.}
    Top to bottom: ABIDE, ADNI, and OASIS-3. For each dataset, we show three held-out scans, comparing the real connectome against DiffeoGauss, TriangCFM, DiffeoCFM, and MAGNET. Each row uses the top-$45$ edges of the corresponding real scan and reuses the same real-defined color scale across methods. $\Delta$ denotes the upper-triangular MAE to the real correlation matrix, with smaller values indicating closer agreement.
    }
    \label{fig:connectome_class0}
\end{figure}

\begin{figure}[!t]
    \centering
    \includegraphics[width=\linewidth]{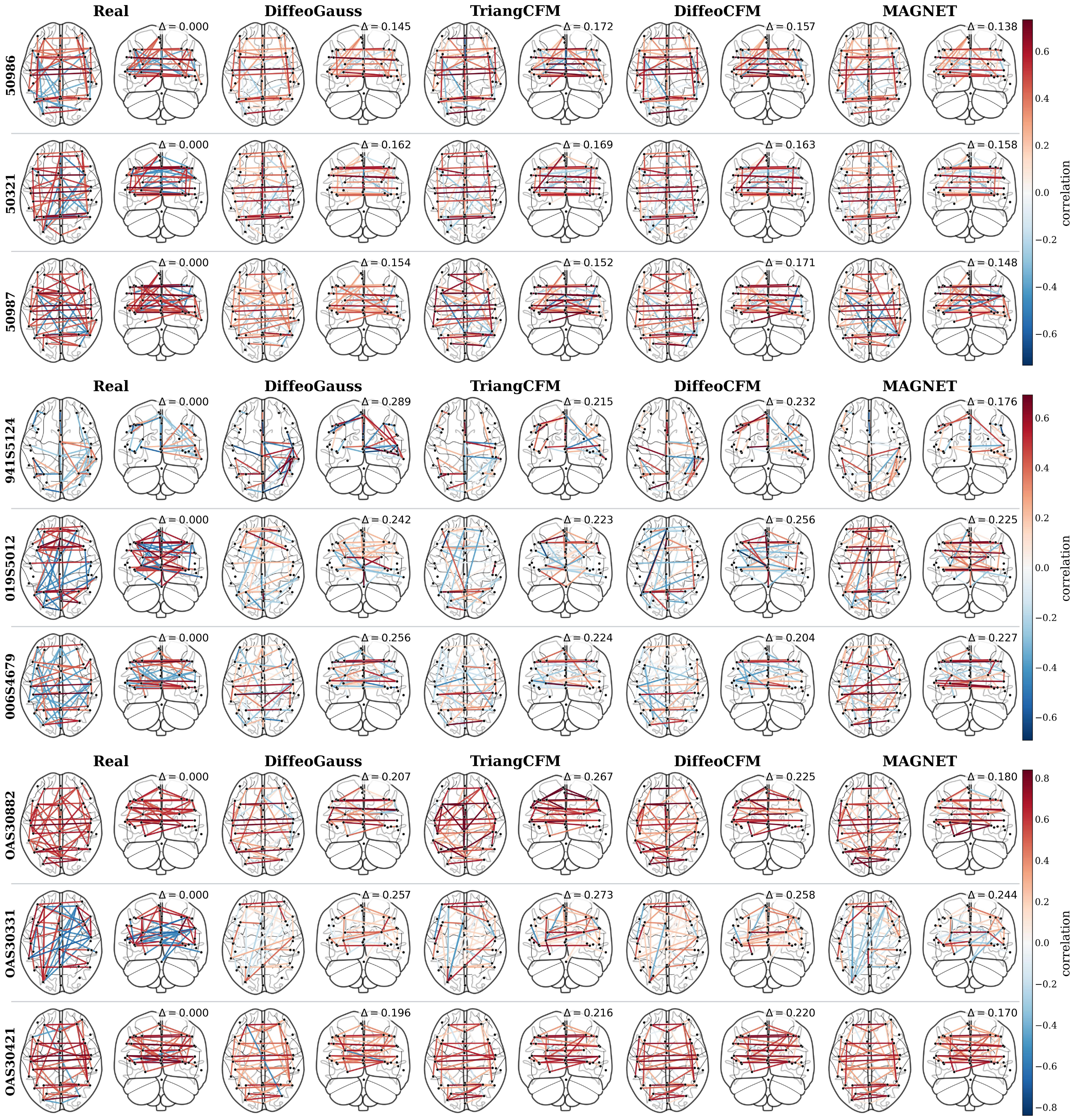}
    \caption{\footnotesize
    \textbf{Held-out connectome comparisons for class $1$.}
    Top to bottom: ABIDE, ADNI, and OASIS-3. For each dataset, we show three held-out scans, comparing the real connectome against DiffeoGauss, TriangCFM, DiffeoCFM, and MAGNET. Each row uses the top-$45$ edges of the corresponding real scan and reuses the same real-defined color scale across methods. $\Delta$ denotes the upper-triangular MAE to the real correlation matrix, with smaller values indicating closer agreement.
    }
    \label{fig:connectome_class1}
\end{figure}

\begin{proposition}
\label{prop:residual_equivalence}
For fixed $m$, $t$, and $y$, the following statements are equivalent: \ding{182} $R_{m,t}^y = 0$, \ding{183} $\mathrm{MAE}_{m,t}^y = 0$, \ding{184} $\bar C_{m,t}^y(a,b) = \bar C_{\mathrm{val}}^y(a,b)$ for all $(a,b)\in M_y$.
\end{proposition}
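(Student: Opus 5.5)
We prove the cycle of implications \ding{182} $\Rightarrow$ \ding{184} $\Rightarrow$ \ding{183} $\Rightarrow$ \ding{182}. Everything follows from the definitions in Eqs.~\eqref{eq:residual-strip} and \eqref{eq:masked-mae}. The only structural facts needed are that $M_y$ is a binary (0/1) mask and that $M_y$ is nonempty. Nonemptiness holds because it contains the top held-out real edges, so $|M_y|\ge 1$ and the normalization in Eq.~\eqref{eq:masked-mae} is well defined.

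\textbf{Step 1: write the residual entrywise.} For every index pair $(a,b)$ the Hadamard product gives
\[
R_{m,t}^y(a,b)=\bigl(\bar C_{m,t}^y(a,b)-\bar C_{\val}^y(a,b)\bigr)M_y(a,b).
\]
This equals the difference $\bar C_{m,t}^y(a,b)-\bar C_{\val}^y(a,b)$ when $(a,b)\in M_y$, and is identically $0$ otherwise. Hence $R_{m,t}^y=0$ holds if and only if the difference vanishes on every masked entry. This gives \ding{182} $\Leftrightarrow$ \ding{184} directly; entries off the mask impose no constraint.

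\textbf{Step 2: relate the MAE to the masked differences.} The quantity $\mathrm{MAE}_{m,t}^y$ is $|M_y|^{-1}$ times a finite sum of nonnegative terms $|\bar C_{m,t}^y(a,b)-\bar C_{\val}^y(a,b)|$ over $(a,b)\in M_y$. If \ding{184} holds, every term is zero, so the MAE is zero. Conversely, if the MAE is zero, then multiplying by $|M_y|>0$ shows that a sum of nonnegative reals vanishes. Each summand must therefore be zero, which is exactly \ding{184}. This gives \ding{183} $\Leftrightarrow$ \ding{184}.

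\textbf{Main obstacle.} The only point needing care is the well-definedness caveat: if $M_y$ were empty, the MAE would be undefined, and \ding{182} and \ding{184} would hold vacuously. I would state explicitly that $M_y$ is nonempty by construction. I would also note that the expectations defining $\bar C_{m,t}^y$ and $\bar C_{\val}^y$ exist, since correlation entries are bounded in $[-1,1]$, so all quantities are finite real numbers. Once these points are settled, the two equivalences above close the cycle.
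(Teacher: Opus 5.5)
Your proof is correct and is essentially the paper's argument. The paper runs the cycle \ding{182}$\Rightarrow$\ding{184}$\Rightarrow$\ding{183}$\Rightarrow$\ding{182} directly from Eqs.~\eqref{eq:residual-strip} and \eqref{eq:masked-mae}, using that an average of nonnegative terms vanishes only if each term does; you organize the same facts as two biconditionals through \ding{184} (which is what you actually prove, despite announcing a cycle), and your explicit remarks that $M_y$ is nonempty and that the Hadamard product zeroes every off-mask entry make precise two points the paper uses only implicitly in its step \ding{183}$\Rightarrow$\ding{182}.
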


\begin{proof}
We reutilize the definitions discussed in Eq.~\ref{eq:residual-strip} and \ref{eq:masked-mae}.

\textit{\ding{182} $\Rightarrow$ \ding{184}} If $R_{m,t}^y = 0$, then every masked entry of the difference is zero (Eq.~\ref{eq:residual-strip}). Hence $\bar C_{m,t}^y(a,b) = \bar C_{\mathrm{val}}^y(a,b)
\quad \text{for all } (a,b)\in M_y$.

\textit{\ding{184} $\Rightarrow$ \ding{183}} If all masked entries match exactly, then every absolute difference in the MAE (Eq.~\ref{eq:masked-mae}) is zero, so $\mathrm{MAE}_{m,t}^y = 0$.

\textit{\ding{183} $\Rightarrow$ \ding{182}} The MAE is an average of nonnegative terms. If the average is zero, then every term must be zero, implying that all masked residuals vanish. Therefore $R_{m,t}^y = 0$.

Thus all three statements are equivalent.
\end{proof}

\textit{\textbf{Remark.}} Using proposition~\ref{prop:residual_equivalence}, it is easily understandable what the connectome strip measures and what it does not. The connectome strips are first order structural refinement view on a held-out real edge set. \textit{Note, this is not a substitute for full-distribution fidelity.} By anchoring the evaluation to a fixed, real-defined mask $M_y$, we guarantee that $\mathrm{MAE}_{m,t}^y$ serves as a faithful, unconfounded proxy for the residual heatmaps $R_{m,t}^y$.

\textit{\textbf{Empirical Takeaway.}}
Figures~\ref{fig:connectome_class0} and~\ref{fig:connectome_class1} extend the main-text qualitative evidence from one scan per class to three additional held-out scans per class. The same pattern persists. ADNI remains the clearest separation: DiffeoGauss oversmooths, TriangCFM often collapses toward a narrow core, and DiffeoCFM preserves some dominant edges but more frequently distorts their global arrangement. MAGNET is more consistent at preserving the leading cross-hemispheric and frontal-posterior structure while maintaining lower or competitive $\Delta$ values. ABIDE is visually easier, with all valid baselines closer to the target, but MAGNET remains competitive under the same held-out mask. OASIS-3 is more mixed, yet MAGNET still preserves the most stable global edge geometry across scans. These additional examples reduce the risk that the main-text qualitative result is driven by a single curated scan and support the same structural-recovery conclusion.

\section{Broader Impacts}\label{appsec:broaderimpacts}


Conditional connectome synthesis lies at the intersection of two broader challenges: generating valid samples on constrained manifolds and recovering class-conditional structure under coarse supervision. MAGNET addresses both, with implications beyond neuroimaging.

\textit{On the positive side}, generating geometrically valid, class-conditional connectomes supports data-scarce clinical research by augmenting underrepresented groups, enabling multi-site studies, and facilitating privacy-preserving data sharing. By modeling within-class heterogeneity via subtype conditioning, MAGNET avoids a key failure of binary-supervised generators, producing unrealistic averaged samples, thereby improving the reliability of synthetic augmentation. More broadly, the work highlights a general problem: \emph{class-conditional manifold awareness}. In domains where outputs must lie on nonlinear manifolds and labels collapse diverse subpopulations (e.g., molecular generation, materials science, medical imaging), validity alone is insufficient as models may remain manifold-valid while producing implausible samples due to support averaging. MAGNET provides a transferable framework for such settings. In particular, relevance-preserving corruption ensures class-relevant structure persists during generation, with implications for diffusion-based distillation and continual learning, and potential extensions to mitigating hallucinations and poor-quality synthesis.

\textit{Limitations and risks} remain. Synthetic connectomes may amplify demographic biases present in source datasets, and discovered subtypes are statistical rather than clinically validated. Additionally, highly realistic synthetic data closely mimicking real data samples (memorization)  could be misused (e.g., fabricating cohorts), necessitating careful validation before deployment in sensitive applications concerning privacy.

\clearpage
\newpage
\input{checklist.tex}

\end{document}

%% file: checklist.tex
\section*{NeurIPS Paper Checklist}

\begin{enumerate}

\item {\bf Claims}
    \item[] Question: Do the main claims made in the abstract and introduction accurately reflect the paper's contributions and scope?
    \item[] Answer: \answerYes{}
    \item[] Justification: The abstract and main text state the central claims on validity-preserving conditional support recovery, few-step sampling, and preservation of fine-grained clinical structure. The main benchmark is supported by three cohorts, while Table~\ref{tab:rebuttal_high_res} additionally evaluates robustness to higher-resolution anatomical and functional parcellations.

\item {\bf Limitations}
    \item[] Question: Does the paper discuss the limitations of the work performed by the authors?
    \item[] Answer: \answerYes{} 
    \item[] Justification: The limitations of the proposed MAGNET framework is present in the Conclusion (sec.~\ref{sec:conclusion}).

\item {\bf Theory assumptions and proofs}
    \item[] Question: For each theoretical result, does the paper provide the full set of assumptions and a complete (and correct) proof?
    \item[] Answer: \answerYes{}
    \item[] Justification: The paper formally states propositions in Section~\ref{sec:methodology} and provides complete mathematical proofs alongside structural assumptions in Appendices~\ref{app:proofs}, \ref{app:geometricanalysis}, and \ref{app:frameworkanalysis}.

    \item {\bf Experimental result reproducibility}
    \item[] Question: Does the paper fully disclose all the information needed to reproduce the main experimental results of the paper to the extent that it affects the main claims and/or conclusions of the paper (regardless of whether the code and data are provided or not)?
    \item[] Answer: \answerYes{}
    \item[] Justification: The paper thoroughly details the experimental setup, dataset splits, preprocessing steps (MSDL atlas, z-scoring), sampling budgets, and network hyperparameters in Section~\ref{subsec:exp_setup} and Appendix~\ref{app:archdetails}.

\item {\bf Open access to data and code}
    \item[] Question: Does the paper provide open access to the data and code, with sufficient instructions to faithfully reproduce the main experimental results, as described in supplemental material?
    \item[] Answer: \answerYes{} 
    \item[] Justification: Section~\ref{subsec:exp_setup} and Appendix~\ref{app:archdetails} specify the data splits, preprocessing, architecture, training and sampling procedures, evaluation protocol, and principal hyperparameters. The accompanying code repository provides the implementation and scripts used for the reported benchmarks and ablations.

\item {\bf Experimental setting/details}
    \item[] Question: Does the paper specify all the training and test details (e.g., data splits, hyperparameters, how they were chosen, type of optimizer) necessary to understand the results?
    \item[] Answer: \answerYes{}
    \item[] Justification: Experimental details, including the 10 disjoint train/test splits, balanced class sampling protocols, SNR loss reweighting, and precise neural architecture dimensions are documented in Section~\ref{subsec:exp_setup} and Appendix~\ref{app:archdetails}.

\item {\bf Experiment statistical significance}
    \item[] Question: Does the paper report error bars suitably and correctly defined or other appropriate information about the statistical significance of the experiments?
    \item[] Answer: \answerYes{}
    \item[] Justification: All empirical tables (e.g., Tables~\ref{tab:main_brain_results}, \ref{tab:zero_shot_metrics}, \ref{tab:bridge_ablations}, etc.) report standard deviations computed across the 10 disjoint train/validation splits for every metric.
    \item[] Guidelines:

\item {\bf Experiments compute resources}
    \item[] Question: For each experiment, does the paper provide sufficient information on the computer resources (type of compute workers, memory, time of execution) needed to reproduce the experiments?
    \item[] Answer: \answerYes{}
    \item[] Justification: Wall-clock sampling times are reported in Table~\ref{tab:main_brain_results}. 
    Hardware details (single NVIDIA A6000 GPU, 48GB VRAM) are specified in 
    Section~\ref{subsec:exp_setup}.
    
\item {\bf Code of ethics}
    \item[] Question: Does the research conducted in the paper conform, in every respect, with the NeurIPS Code of Ethics \url{https://neurips.cc/public/EthicsGuidelines}?
    \item[] Answer: \answerYes{}
    \item[] Justification: The research relies strictly on standard, pre-anonymized public datasets (ABIDE, ADNI, OASIS-3) and complies fully with ethical guidelines regarding human data and research conduct.

\item {\bf Broader impacts}
    \item[] Question: Does the paper discuss both potential positive societal impacts and negative societal impacts of the work performed?
    \item[] Answer: \answerYes{}
    \item[] Justification: Broader impacts including potential negative societal risks (demographic 
    bias amplification, misuse of synthetic cohorts in clinical settings) and mitigation strategies 
    are discussed in the Broader Impacts in Appendix~\ref{appsec:broaderimpacts}.

\item {\bf Safeguards}
    \item[] Question: Does the paper describe safeguards that have been put in place for responsible release of data or models that have a high risk for misuse (e.g., pre-trained language models, image generators, or scraped datasets)?
    \item[] Answer: \answerNA{}
    \item[] Justification: The models generated in this research (synthetic functional connectomes) are highly domain-specific and do not pose the generalized high-risk for malicious misuse associated with large language models or deepfake image generators.

\item {\bf Licenses for existing assets}
    \item[] Question: Are the creators or original owners of assets (e.g., code, data, models), used in the paper, properly credited and are the license and terms of use explicitly mentioned and properly respected?
    \item[] Answer: \answerYes{}
    \item[] Justification: The creators of the utilized datasets (ABIDE, ADNI, OASIS-3) and software libraries (e.g., Nilearn) are properly credited with citations throughout the text.

\item {\bf New assets}
    \item[] Question: Are new assets introduced in the paper well documented and is the documentation provided alongside the assets?
    \item[] Answer: \answerNA{}
    \item[] Justification: The research utilizes existing datasets and does not introduce new primary data assets.

\item {\bf Crowdsourcing and research with human subjects}
    \item[] Question: For crowdsourcing experiments and research with human subjects, does the paper include the full text of instructions given to participants and screenshots, if applicable, as well as details about compensation (if any)? 
    \item[] Answer: \answerNA{}
    \item[] Justification: This research relies entirely on retrospective analysis of existing, de-identified databases and does not involve primary crowdsourcing or new human subject interaction.

\item {\bf Institutional review board (IRB) approvals or equivalent for research with human subjects}
    \item[] Question: Does the paper describe potential risks incurred by study participants, whether such risks were disclosed to the subjects, and whether Institutional Review Board (IRB) approvals (or an equivalent approval/review based on the requirements of your country or institution) were obtained?
    \item[] Answer: \answerNA{}
    \item[] Justification: Because the study strictly utilizes pre-collected, fully anonymized public neuroimaging datasets, new Institutional Review Board (IRB) approval is not required.

\item {\bf Declaration of LLM usage}
    \item[] Question: Does the paper describe the usage of LLMs if it is an important, original, or non-standard component of the core methods in this research? Note that if the LLM is used only for writing, editing, or formatting purposes and does \emph{not} impact the core methodology, scientific rigor, or originality of the research, declaration is not required.
    \item[] Answer: \answerNA{}
    \item[] Justification: Large Language Models were not utilized as a component of the core generative methodology or scientific evaluation in this research.

\end{enumerate}

%% file: neurips_2026.bib
@inproceedings{alaa2022faithful,
  title={How {F}aithful is your {S}ynthetic {D}ata? {S}ample-level {M}etrics for {E}valuating and {A}uditing {G}enerative {M}odels},
  author={Alaa, Ahmed and Van Breugel, Boris and Saveliev, Evgeny S and Van Der Schaar, Mihaela},
  booktitle={International conference on machine learning},
  pages={290--306},
  year={2022},
  organization={PMLR}
}

@article{weiner2017alzheimer,
  title={The {A}lzheimer's {D}isease {N}euroimaging {I}nitiative 3: {C}ontinued innovation for clinical trial improvement},
  author={Weiner, Michael W and Veitch, Dallas P and Aisen, Paul S and Beckett, Laurel A and Cairns, Nigel J and Green, Robert C and Harvey, Danielle and Jack Jr, Clifford R and Jagust, William and Morris, John C and others},
  journal={Alzheimer's \& Dementia},
  volume={13},
  number={5},
  pages={561--571},
  year={2017},
  publisher={Elsevier}
}

@article{di2014autism,
  title={The autism brain imaging data exchange: towards a large-scale evaluation of the intrinsic brain architecture in autism},
  author={Di Martino, Adriana and Yan, Chao-Gan and Li, Qingyang and Denio, Erin and Castellanos, Francisco X and Alaerts, Kaat and Anderson, Jeffrey S and Assaf, Michal and Bookheimer, Susan Y and Dapretto, Mirella and others},
  journal={Molecular psychiatry},
  volume={19},
  number={6},
  pages={659--667},
  year={2014},
  publisher={Nature Publishing Group}
}

@article{kantorovich1939mathematical,
  title={The mathematical method of production planning and organization},
  author={Kantorovich, Leonid V},
  journal={Management Science},
  volume={6},
  number={4},
  pages={363--422},
  year={1939}
}

@article{he2023manifold,
  title={Manifold {P}reserving {G}uided {D}iffusion},
  author={He, Yutong and Murata, Naoki and Lai, Chieh-Hsin and Takida, Yuhta and Uesaka, Toshimitsu and Kim, Dongjun and Liao, Wei-Hsiang and Mitsufuji, Yuki and Kolter, J Zico and Salakhutdinov, Ruslan and others},
  journal={arXiv preprint arXiv:2311.16424},
  year={2023}
}

@article{schwarz1978estimating,
  title={Estimating the dimension of a model},
  author={Schwarz, Gideon},
  journal={The annals of statistics},
  pages={461--464},
  year={1978},
  publisher={JSTOR}
}

@article{mclachlan2019finite,
  title={Finite {M}ixture {M}odels},
  author={McLachlan, Geoffrey J and Lee, Sharon X and Rathnayake, Suren I},
  journal={Annual review of statistics and its application},
  volume={6},
  number={1},
  pages={355--378},
  year={2019},
  publisher={Annual Reviews}
}

@article{archakov2021new,
  title={A new parametrization of correlation matrices},
  author={Archakov, Ilya and Hansen, Peter Reinhard},
  journal={Econometrica},
  volume={89},
  number={4},
  pages={1699--1715},
  year={2021},
  publisher={Wiley Online Library}
}

@article{pinkney2024short,
  title={A short note on a flexible Cholesky parameterization of correlation matrices},
  author={Pinkney, Sean},
  journal={arXiv preprint arXiv:2405.07286},
  year={2024}
}

@article{battaglia2018relational,
  title={Relational {I}nductive {B}iases, {D}eep {L}earning, and {G}raph {N}etworks},
  author={Battaglia, Peter W and Hamrick, Jessica B and Bapst, Victor and Sanchez-Gonzalez, Alvaro and Zambaldi, Vinicius and Malinowski, Mateusz and Tacchetti, Andrea and Raposo, David and Santoro, Adam and Faulkner, Ryan and others},
  journal={arXiv preprint arXiv:1806.01261},
  year={2018}
}

@article{bronstein2017geometric,
  title={Geometric {D}eep {L}earning: {G}oing beyond {E}uclidean data},
  author={Bronstein, Michael M and Bruna, Joan and LeCun, Yann and Szlam, Arthur and Vandergheynst, Pierre},
  journal={IEEE Signal Processing Magazine},
  volume={34},
  number={4},
  pages={18--42},
  year={2017},
  publisher={IEEE}
}

@article{rubinov2010complex,
  title={Complex network measures of brain connectivity: uses and interpretations},
  author={Rubinov, Mikail and Sporns, Olaf},
  journal={Neuroimage},
  volume={52},
  number={3},
  pages={1059--1069},
  year={2010},
  publisher={Elsevier}
}

@article{van2013network,
  title={Network hubs in the human brain},
  author={Van den Heuvel, Martijn P and Sporns, Olaf},
  journal={Trends in cognitive sciences},
  volume={17},
  number={12},
  pages={683--696},
  year={2013},
  publisher={Elsevier}
}

@inproceedings{zhao2021improved,
  title={Improved consistency regularization for gans},
  author={Zhao, Zhengli and Singh, Sameer and Lee, Honglak and Zhang, Zizhao and Odena, Augustus and Zhang, Han},
  booktitle={Proceedings of the AAAI conference on artificial intelligence},
  volume={35},
  pages={11033--11041},
  year={2021}
}

@article{sporns2005human,
  title={The {H}uman {C}onnectome: a structural description of the human brain},
  author={Sporns, Olaf and Tononi, Giulio and K{\"o}tter, Rolf},
  journal={PLoS computational biology},
  volume={1},
  number={4},
  pages={e42},
  year={2005},
  publisher={Public Library of Science San Francisco, USA}
}

@incollection{scott2011multivariate,
  title={Multivariate density estimation and visualization},
  author={Scott, David W},
  booktitle={Handbook of computational statistics: Concepts and methods},
  pages={549--569},
  year={2011},
  publisher={Springer}
}

@article{bronstein2021geometric,
  title={Geometric deep learning: Grids, groups, graphs, geodesics, and gauges},
  author={Michael M. Bronstein and
                  Joan Bruna and
                  Taco Cohen and
                  Petar Velickovic},
  journal={arXiv preprint arXiv:2104.13478},
  year={2021}
}

@article{he2025position,
  title={Position: {B}eyond {E}uclidean--{F}oundation {M}odels {S}hould {E}mbrace {N}on-{E}uclidean {G}eometries},
  author={He, Neil and Liu, Jiahong and Zhang, Buze and Bui, Ngoc and Maatouk, Ali and Yang, Menglin and King, Irwin and Weber, Melanie and Ying, Rex},
  journal={arXiv preprint arXiv:2504.08896},
  year={2025}
}

@article{huang2022riemannian,
  title={Riemannian {D}iffusion {M}odels},
  author={Huang, Chin-Wei and Aghajohari, Milad and Bose, Joey and Panangaden, Prakash and Courville, Aaron C},
  journal={Advances in Neural Information Processing Systems},
  volume={35},
  pages={2750--2761},
  year={2022}
}

@article{bertolini2025diffusion,
  title={Diffusion Generative Modeling on Lie Group Representations},
  author={Bertolini, Marco and Le, Tuan and Clevert, Djork-Arn{\'e}},
  journal={arXiv preprint arXiv:2502.02513},
  year={2025}
}

@article{chung2024cfg++,
  title={{CFG}++: Manifold-constrained classifier free guidance for diffusion models},
  author={Chung, Hyungjin and Kim, Jeongsol and Park, Geon Yeong and Nam, Hyelin and Ye, Jong Chul},
  journal={arXiv preprint arXiv:2406.08070},
  year={2024}
}

@article{gao2025toward,
  title={Toward a Unified Geometry Understanding: Riemannian Diffusion Framework for Graph Generation and Prediction},
  author={Gao, Yisen and Fu, Xingcheng and Sun, Qingyun and Li, Jianxin and Li, Xianxian},
  journal={arXiv preprint arXiv:2510.04522},
  year={2025}
}

@inproceedings{
mehta2023efficient,
title={Efficient Discrete Multi Marginal Optimal Transport Regularization},
author={Ronak Mehta and Jeffery Kline and Vishnu Suresh Lokhande and Glenn Fung and Vikas Singh},
booktitle={The Eleventh International Conference on Learning Representations },
year={2023},
url={https://openreview.net/forum?id=R98ZfMt-jE}
}

@inproceedings{nazarovs2021graph,
  title={Graph reparameterizations for enabling 1000+ Monte Carlo iterations in Bayesian deep neural networks},
  author={Nazarovs, Jurijs and Mehta, Ronak R and Lokhande, Vishnu Suresh and Singh, Vikas},
  booktitle={Uncertainty in Artificial Intelligence},
  pages={118--128},
  year={2021},
  organization={PMLR}
}

@inproceedings{chytas2024pooling,
  title={Pooling image datasets with multiple covariate shift and imbalance},
  author={Chytas, Sotirios Panagiotis and Lokhande, Vishnu and Singh, Vikas},
  booktitle={International Conference on Learning Representations},
  volume={2024},
  pages={57282--57304},
  year={2024}
}

@article{khan2026isokinetic,
  title={Isokinetic Flow Matching for Pathwise Straightening of Generative Flows},
  author={Khan, Tauhid},
  journal={arXiv preprint arXiv:2604.04491},
  year={2026}
}

@article{kan2022brain,
  title={Brain {N}etwork {T}ransformer},
  author={Kan, Xuan and Dai, Wei and Cui, Hejie and Zhang, Zilong and Guo, Ying and Yang, Carl},
  journal={Advances in Neural Information Processing Systems},
  volume={35},
  pages={25586--25599},
  year={2022}
}

@inproceedings{cho2024neurodegenerative,
  title={Neurodegenerative brain network classification via adaptive diffusion with temporal regularization},
  author={Cho, Hyuna and Sim, Jaeyoon and Wu, Guorong and Kim, Won Hwa},
  booktitle={Forty-first International Conference On Machine Learning},
  year={2024}
}

@inproceedings{ahn2025spectral,
  title={Spectral Graph Autoregressive Modeling for Conditional Brain Network Augmentation},
  author={Ahn, Hayoung and Lee, Seungjoo and Sim, Jaeyoon and Hwang, Yechan and Cho, Hyuna and Wu, Guorong and Kim, Won Hwa},
  booktitle={International Workshop on Reconstruction and Imaging Motion Estimation},
  pages={209--218},
  year={2025},
  organization={Springer}
}

@inproceedings{cho2025conditional,
  title={Conditional diffusion with ordinal regression: Longitudinal data generation for neurodegenerative disease studies},
  author={Cho, Hyuna and Wei, Ziquan and Lee, Seungjoo and Dan, Tingting and Wu, Guorong and Kim, Won Hwa},
  booktitle={international conference on learning representations},
  volume={2025},
  pages={33329--33350},
  year={2025}
}

@book{devroye2013probabilistic,
  title={A probabilistic theory of pattern recognition},
  author={Devroye, Luc and Gy{\"o}rfi, L{\'a}szl{\'o} and Lugosi, G{\'a}bor},
  volume={31},
  year={2013},
  publisher={Springer Science \& Business Media}
}

@article{cover1967nearest,
  title={Nearest neighbor pattern classification},
  author={Cover, Thomas and Hart, Peter},
  journal={IEEE transactions on information theory},
  volume={13},
  number={1},
  pages={21--27},
  year={1967},
  publisher={IEEE}
}

@article{lamontagne2019oasis,
  title={{OASIS}-3: longitudinal neuroimaging, clinical, and cognitive dataset for normal aging and Alzheimer disease},
  author={LaMontagne, Pamela J and Benzinger, Tammie LS and Morris, John C and Keefe, Sarah and Hornbeck, Russ and Xiong, Chengjie and Grant, Elizabeth and Hassenstab, Jason and Moulder, Krista and Vlassenko, Andrei G and others},
  journal={medrxiv},
  pages={12},
  year={2019},
  publisher={Cold Spring Harbor Laboratory Press}
}

@article{smith2013functional,
  title={Functional connectomics from resting-state fMRI},
  author={Smith, Stephen M and Vidaurre, Diego and Beckmann, Christian F and Glasser, Matthew F and Jenkinson, Mark and Miller, Karla L and Nichols, Thomas E and Robinson, Emma C and Salimi-Khorshidi, Gholamreza and Woolrich, Mark W and others},
  journal={Trends in cognitive sciences},
  volume={17},
  number={12},
  pages={666--682},
  year={2013},
  publisher={Elsevier}
}

@article{castellanos2013clinical,
  title={Clinical applications of the functional connectome},
  author={Castellanos, F Xavier and Di Martino, Adriana and Craddock, R Cameron and Mehta, Ashesh D and Milham, Michael P},
  journal={Neuroimage},
  volume={80},
  pages={527--540},
  year={2013},
  publisher={Elsevier}
}

@article{dadi2019benchmarking,
  title={Benchmarking functional connectome-based predictive models for resting-state fMRI},
  author={Dadi, Kamalaker and Rahim, Mehdi and Abraham, Alexandre and Chyzhyk, Darya and Milham, Michael and Thirion, Bertrand and Varoquaux, Ga{\"e}l and Alzheimer's Disease Neuroimaging Initiative and others},
  journal={NeuroImage},
  volume={192},
  pages={115--134},
  year={2019},
  publisher={Elsevier}
}

@inproceedings{
collas2025riemannian,
title={Riemannian Flow Matching for Brain Connectivity Matrices via Pullback Geometry},
author={Antoine Collas and Ce Ju and Nicolas Salvy and Bertrand Thirion},
booktitle={The Thirty-ninth Annual Conference on Neural Information Processing Systems},
year={2025},
url={https://openreview.net/forum?id=NY3LzmUXl7}
}

@article{ho2020denoising,
  title={Denoising {D}iffusion {P}robabilistic {M}odels},
  author={Ho, Jonathan and Jain, Ajay and Abbeel, Pieter},
  journal={Advances in neural information processing systems},
  volume={33},
  pages={6840--6851},
  year={2020}
}

@article{lipman2022flow,
  title={Flow {M}atching for {G}enerative {M}odeling},
  author={Lipman, Yaron and Chen, Ricky TQ and Ben-Hamu, Heli and Nickel, Maximilian and Le, Matt},
  journal={arXiv preprint arXiv:2210.02747},
  year={2022}
}

@article{de2022riemannian,
  title={Riemannian {S}core-based {G}enerative {M}odelling},
  author={De Bortoli, Valentin and Mathieu, Emile and Hutchinson, Michael and Thornton, James and Teh, Yee Whye and Doucet, Arnaud},
  journal={Advances in neural information processing systems},
  volume={35},
  pages={2406--2422},
  year={2022}
}

@article{mohammadi2024graph,
  title={Graph neural networks in brain connectivity studies: Methods, challenges, and future directions},
  author={Mohammadi, Hamed and Karwowski, Waldemar},
  journal={Brain Sciences},
  volume={15},
  number={1},
  pages={17},
  year={2024},
  publisher={MDPI}
}

@article{cui2022braingb,
  title={{BrainGB}: a benchmark for brain network analysis with graph neural networks},
  author={Cui, Hejie and Dai, Wei and Zhu, Yanqiao and Kan, Xuan and Gu, Antonio Aodong Chen and Lukemire, Joshua and Zhan, Liang and He, Lifang and Guo, Ying and Yang, Carl},
  journal={IEEE transactions on medical imaging},
  volume={42},
  number={2},
  pages={493--506},
  year={2022},
  publisher={IEEE}
}

@article{wen2024dimensional,
  title={Dimensional neuroimaging endophenotypes: neurobiological representations of disease heterogeneity through machine learning},
  author={Wen, Junhao and Antoniades, Mathilde and Yang, Zhijian and Hwang, Gyujoon and Skampardoni, Ioanna and Wang, Rongguang and Davatzikos, Christos},
  journal={Biological psychiatry},
  volume={96},
  number={7},
  pages={564--584},
  year={2024},
  publisher={Elsevier}
}

@article{arsigny2006log,
  title={Log-Euclidean metrics for fast and simple calculus on diffusion tensors},
  author={Arsigny, Vincent and Fillard, Pierre and Pennec, Xavier and Ayache, Nicholas},
  journal={Magnetic Resonance in Medicine: An Official Journal of the International Society for Magnetic Resonance in Medicine},
  volume={56},
  number={2},
  pages={411--421},
  year={2006},
  publisher={Wiley Online Library}
}

@article{wu2025optimal,
  title={Optimal stopping in latent diffusion models},
  author={Yu-Han Wu and Quentin Berthet and Gérard Biau and Claire Boyer and Romuald Elie and Pierre Marion},
  journal={arXiv preprint arXiv:2510.08409},
  year={2025}
}

@article{li2026kinetic,
  title={A Kinetic-Energy Perspective of Flow Matching},
  author={Li, Ziyun and Hu, Huancheng and Lim, Soon Hoe and Li, Xuyu and Gao, Fei and Diao, Enmao and Ding, Zezhen and Vazirgiannis, Michalis and Bostrom, Henrik},
  journal={arXiv preprint arXiv:2602.07928},
  year={2026}
}

@article{chen2022analog,
  title={Analog bits: Generating discrete data using diffusion models with self-conditioning},
  author={Chen, Ting and Zhang, Ruixiang and Hinton, Geoffrey},
  journal={arXiv preprint arXiv:2208.04202},
  year={2022}
}

@article{bessadok2022graph,
  title={Graph neural networks in network neuroscience},
  author={Bessadok, Alaa and Mahjoub, Mohamed Ali and Rekik, Islem},
  journal={IEEE Transactions on Pattern Analysis and Machine Intelligence},
  volume={45},
  number={5},
  pages={5833--5848},
  year={2022},
  publisher={IEEE}
}

@article{arsigny2007geometric,
  title={Geometric means in a novel vector space structure on symmetric positive-definite matrices},
  author={Arsigny, Vincent and Fillard, Pierre and Pennec, Xavier and Ayache, Nicholas},
  journal={SIAM journal on matrix analysis and applications},
  volume={29},
  number={1},
  pages={328--347},
  year={2007},
  publisher={SIAM}
}

@article{varol2017hydra,
  title={{HYDRA}: Revealing heterogeneity of imaging and genetic patterns through a multiple max-margin discriminative analysis framework},
  author={Varol, Erdem and Sotiras, Aristeidis and Davatzikos, Christos and Alzheimer's Disease Neuroimaging Initiative and others},
  journal={Neuroimage},
  volume={145},
  pages={346--364},
  year={2017},
  publisher={Elsevier}
}

@article{benny2021evaluation,
  title={Evaluation metrics for conditional image generation},
  author={Benny, Yaniv and Galanti, Tomer and Benaim, Sagie and Wolf, Lior},
  journal={International Journal of Computer Vision},
  volume={129},
  number={5},
  pages={1712--1731},
  year={2021},
  publisher={Springer}
}

@article{regenwetter2023beyond,
  title={Beyond statistical similarity: Rethinking metrics for deep generative models in engineering design},
  author={Regenwetter, Lyle and Srivastava, Akash and Gutfreund, Dan and Ahmed, Faez},
  journal={Computer-Aided Design},
  volume={165},
  pages={103609},
  year={2023},
  publisher={Elsevier}
}

@software{Nilearn,
    author = {Nilearn contributors},
    license = {BSD-4-Clause},
    title = {{nilearn}},
    url = {https://github.com/nilearn/nilearn},
    doi = {https://doi.org/10.5281/zenodo.8397156}
}

@article{buda2018systematic,
  title={A systematic study of the class imbalance problem in convolutional neural networks},
  author={Buda, Mateusz and Maki, Atsuto and Mazurowski, Maciej A},
  journal={Neural networks},
  volume={106},
  pages={249--259},
  year={2018},
  publisher={Elsevier}
}

@article{kang2019decoupling,
  title={Decoupling representation and classifier for long-tailed recognition},
  author={Kang, Bingyi and Xie, Saining and Rohrbach, Marcus and Yan, Zhicheng and Gordo, Albert and Feng, Jiashi and Kalantidis, Yannis},
  journal={arXiv preprint arXiv:1910.09217},
  year={2019}
}

@article{maron2018invariant,
  title={Invariant and equivariant graph networks},
  author={Maron, Haggai and Ben-Hamu, Heli and Shamir, Nadav and Lipman, Yaron},
  journal={arXiv preprint arXiv:1812.09902},
  year={2018}
}

@article{zaheer2017deep,
  title={Deep sets},
  author={Zaheer, Manzil and Kottur, Satwik and Ravanbakhsh, Siamak and Poczos, Barnabas and Salakhutdinov, Russ R and Smola, Alexander J},
  journal={Advances in neural information processing systems},
  volume={30},
  year={2017}
}

@inproceedings{byrd2019effect,
  title={What is the effect of importance weighting in deep learning?},
  author={Byrd, Jonathon and Lipton, Zachary},
  booktitle={International conference on machine learning},
  pages={872--881},
  year={2019},
  organization={PMLR}
}

@article{japkowicz2002class,
  title={The class imbalance problem: A systematic study},
  author={Japkowicz, Nathalie and Stephen, Shaju},
  journal={Intelligent data analysis},
  volume={6},
  number={5},
  pages={429--449},
  year={2002},
  publisher={SAGE Publications Sage UK: London, England}
}

@inproceedings{nichol2021improved,
  title={Improved denoising diffusion probabilistic models},
  author={Nichol, Alexander Quinn and Dhariwal, Prafulla},
  booktitle={International conference on machine learning},
  pages={8162--8171},
  year={2021},
  organization={PMLR}
}

@inproceedings{peebles2023scalable,
  title={Scalable diffusion models with transformers},
  author={Peebles, William and Xie, Saining},
  booktitle={Proceedings of the IEEE/CVF international conference on computer vision},
  pages={4195--4205},
  year={2023}
}

@inproceedings{perez2018film,
  title={{FiLM}: Visual reasoning with a general conditioning layer},
  author={Perez, Ethan and Strub, Florian and De Vries, Harm and Dumoulin, Vincent and Courville, Aaron},
  booktitle={Proceedings of the AAAI conference on artificial intelligence},
  volume={32},
  year={2018}
}

@inproceedings{varoquaux2011multi,
  title={Multi-subject dictionary learning to segment an atlas of brain spontaneous activity},
  author={Varoquaux, Ga{\"e}l and Gramfort, Alexandre and Pedregosa, Fabian and Michel, Vincent and Thirion, Bertrand},
  booktitle={Biennial International Conference on information processing in medical imaging},
  pages={562--573},
  year={2011},
  organization={Springer}
}

@article{dhariwal2021diffusion,
  title={Diffusion models beat gans on image synthesis},
  author={Dhariwal, Prafulla and Nichol, Alexander},
  journal={Advances in neural information processing systems},
  volume={34},
  pages={8780--8794},
  year={2021}
}

@article{wolfers2018mapping,
  title={Mapping the heterogeneous phenotype of schizophrenia and bipolar disorder using normative models},
  author={Wolfers, Thomas and Doan, Nhat Trung and Kaufmann, Tobias and Aln{\ae}s, Dag and Moberget, Torgeir and Agartz, Ingrid and Buitelaar, Jan K and Ueland, Torill and Melle, Ingrid and Franke, Barbara and others},
  journal={JAMA psychiatry},
  volume={75},
  number={11},
  pages={1146--1155},
  year={2018}
}

@article{vignac2022digress,
  title={{DiGress}: Discrete denoising diffusion for graph generation},
  author={Vignac, Clement and Krawczuk, Igor and Siraudin, Antoine and Wang, Bohan and Cevher, Volkan and Frossard, Pascal},
  journal={arXiv preprint arXiv:2209.14734},
  year={2022}
}

@inproceedings{jo2022score,
  title={Score-based generative modeling of graphs via the system of stochastic differential equations},
  author={Jo, Jaehyeong and Lee, Seul and Hwang, Sung Ju},
  booktitle={International conference on machine learning},
  pages={10362--10383},
  year={2022},
  organization={PMLR}
}

@article{song2020denoising,
  title={Denoising diffusion implicit models},
  author={Song, Jiaming and Meng, Chenlin and Ermon, Stefano},
  journal={arXiv preprint arXiv:2010.02502},
  year={2020}
}

@article{ho2022classifier,
  title={Classifier-free diffusion guidance},
  author={Ho, Jonathan and Salimans, Tim},
  journal={arXiv preprint arXiv:2207.12598},
  year={2022}
}

@article{pennec2006riemannian,
  title={A Riemannian framework for tensor computing},
  author={Pennec, Xavier and Fillard, Pierre and Ayache, Nicholas},
  journal={International Journal of computer vision},
  volume={66},
  number={1},
  pages={41--66},
  year={2006},
  publisher={Springer}
}

@article{lee2025local,
  title={Local manifold approximation and projection for manifold-aware diffusion planning},
  author={Lee, Kyowoon and Choi, Jaesik},
  journal={arXiv preprint arXiv:2506.00867},
  year={2025}
}

@article{bartlett2017spectrally,
  title={Spectrally-normalized margin bounds for neural networks},
  author={Bartlett, Peter L and Foster, Dylan J and Telgarsky, Matus J},
  journal={Advances in neural information processing systems},
  volume={30},
  year={2017}
}

@article{miyato2018spectral,
  title={Spectral normalization for generative adversarial networks},
  author={Miyato, Takeru and Kataoka, Toshiki and Koyama, Masanori and Yoshida, Yuichi},
  journal={arXiv preprint arXiv:1802.05957},
  year={2018}
}

@article{hinton2015distilling,
  title={Distilling the knowledge in a neural network},
  author={Hinton, Geoffrey and Vinyals, Oriol and Dean, Jeff},
  journal={arXiv preprint arXiv:1503.02531},
  year={2015}
}

@article{efron2011tweedie,
  title={Tweedie’s formula and selection bias},
  author={Efron, Bradley},
  journal={Journal of the American Statistical Association},
  volume={106},
  number={496},
  pages={1602--1614},
  year={2011},
  publisher={Taylor \& Francis}
}

@article{miyasawa1961empirical,
  title={An empirical Bayes estimator of the mean of a normal population},
  author={Miyasawa, Koichi and others},
  journal={Bull. Inst. Internat. Statist},
  volume={38},
  number={181-188},
  pages={1--2},
  year={1961}
}

@article{kynkaanniemi2024applying,
  title={Applying guidance in a limited interval improves sample and distribution quality in diffusion models},
  author={Kynk{\"a}{\"a}nniemi, Tuomas and Aittala, Miika and Karras, Tero and Laine, Samuli and Aila, Timo and Lehtinen, Jaakko},
  journal={Advances in Neural Information Processing Systems},
  volume={37},
  pages={122458--122483},
  year={2024}
}

@article{shazeer2020glu,
  title={{GLU} variants improve transformer},
  author={Shazeer, Noam},
  journal={arXiv preprint arXiv:2002.05202},
  year={2020}
}

@article{fraley2002model,
  title={Model-based clustering, discriminant analysis, and density estimation},
  author={Fraley, Chris and Raftery, Adrian E},
  journal={Journal of the American statistical Association},
  volume={97},
  number={458},
  pages={611--631},
  year={2002},
  publisher={Taylor \& Francis}
}

@article{dempster1977maximum,
  title={Maximum likelihood from incomplete data via the EM algorithm},
  author={Dempster, Arthur P and Laird, Nan M and Rubin, Donald B},
  journal={Journal of the royal statistical society: series B (methodological)},
  volume={39},
  number={1},
  pages={1--22},
  year={1977},
  publisher={Wiley Online Library}
}

@inproceedings{henry2020query,
  title={Query-key normalization for transformers},
  author={Henry, Alex and Dachapally, Prudhvi Raj and Pawar, Shubham Shantaram and Chen, Yuxuan},
  booktitle={Findings of the Association for Computational Linguistics: EMNLP 2020},
  pages={4246--4253},
  year={2020}
}

@inproceedings{xiong2020layer,
  title={On layer normalization in the transformer architecture},
  author={Xiong, Ruibin and Yang, Yunchang and He, Di and Zheng, Kai and Zheng, Shuxin and Xing, Chen and Zhang, Huishuai and Lan, Yanyan and Wang, Liwei and Liu, Tieyan},
  booktitle={International conference on machine learning},
  pages={10524--10533},
  year={2020},
  organization={PMLR}
}

@article{vaswani2017attention,
  title={Attention is all you need},
  author={Vaswani, Ashish and Shazeer, Noam and Parmar, Niki and Uszkoreit, Jakob and Jones, Llion and Gomez, Aidan N and Kaiser, {\L}ukasz and Polosukhin, Illia},
  journal={Advances in neural information processing systems},
  volume={30},
  year={2017}
}

@article{goodfellow2016nips,
  title={NIPS 2016 tutorial: Generative adversarial networks},
  author={Goodfellow, Ian},
  journal={arXiv preprint arXiv:1701.00160},
  year={2016}
}

@article{navidi2025adaptive,
  title={Adaptive resampling for improved machine learning in imbalanced single-cell datasets},
  author={Navidi, Zeinab and Thoutam, Akshaya and Hughes, Madeline and Raghavan, Srivatsan and Winter, Peter S and Crawford, Lorin and Amini, Ava P},
  journal={bioRxiv},
  pages={11},
  year={2025},
  publisher={Cold Spring Harbor Laboratory}
}

@article{park2025temporal,
  title={Temporal Alignment Guidance: On-Manifold Sampling in Diffusion Models},
  author={Park, Youngrok and Jung, Hojung and Bae, Sangmin and Yun, Se-Young},
  journal={arXiv preprint arXiv:2510.11057},
  year={2025}
}

@article{sadat2023cads,
  title={{CADS}: Unleashing the diversity of diffusion models through condition-annealed sampling},
  author={Sadat, Seyedmorteza and Buhmann, Jakob and Bradley, Derek and Hilliges, Otmar and Weber, Romann M},
  journal={arXiv preprint arXiv:2310.17347},
  year={2023}
}

@article{su2025diffusion,
  title={Diffusion Models as Dataset Distillation Priors},
  author={Su, Duo and Wu, Huyu and Chen, Huanran and Shi, Yiming and Wang, Yuzhu and Ye, Xi and Zhu, Jun},
  journal={arXiv preprint arXiv:2510.17421},
  year={2025}
}

@article{roy2026manifoldgd,
  title={{ManifoldGD}: Training-Free Hierarchical Manifold Guidance for Diffusion-Based Dataset Distillation},
  author={Roy, Ayush and Lee, Wei-Yang Alex and Chakraborty, Rudrasis and Lokhande, Vishnu Suresh},
  journal={arXiv preprint arXiv:2602.23295},
  year={2026}
}
